\newcommand{\openone}{\leavevmode\hbox{\small1\normalsize\kern-.33em1}}
\newcommand{\polylogk}{\mathrm{polylog}(k)}
\newcommand{\polyk}{\mathrm{poly}(k)}
\newcommand{\logit}{\mathbf{logit}}
\newcommand{\tcalZ}{\tilde{\calZ}}
\newcommand{\calA}{\mathcal{A}}
\newcommand{\calD}{\mathcal{D}}
\newcommand{\calE}{\mathcal{E}}
\newcommand{\calM}{\mathcal{M}}
\newcommand{\calN}{\mathcal{N}}
\newcommand{\calP}{\mathcal{P}}
\newcommand{\calT}{\mathcal{T}}
\newcommand{\calU}{\mathcal{U}}
\newcommand{\calV}{\mathcal{V}}
\newcommand{\calX}{\mathcal{X}}
\newcommand{\calZ}{\mathcal{Z}}
\newcommand{\bI}{\mathbf{I}}
\newcommand{\bbE}{\mathbb{E}}
\newcommand{\bbI}{\mathbb{I}}
\newcommand{\bbR}{\mathbb{R}}
\DeclareMathAlphabet{\mathbsf}{OT1}{cmss}{bx}{n}
\DeclareMathAlphabet{\mathssf}{OT1}{cmss}{m}{sl}
\DeclareSymbolFont{bsfletters}{OT1}{cmss}{bx}{n}  
\DeclareSymbolFont{ssfletters}{OT1}{cmss}{m}{n}
\DeclareMathSymbol{\bsfGamma}{0}{bsfletters}{'000}
\DeclareMathSymbol{\ssfGamma}{0}{ssfletters}{'000}
\DeclareMathSymbol{\bsfDelta}{0}{bsfletters}{'001}
\DeclareMathSymbol{\ssfDelta}{0}{ssfletters}{'001}
\DeclareMathSymbol{\bsfTheta}{0}{bsfletters}{'002}
\DeclareMathSymbol{\ssfTheta}{0}{ssfletters}{'002}
\DeclareMathSymbol{\bsfLambda}{0}{bsfletters}{'003}
\DeclareMathSymbol{\ssfLambda}{0}{ssfletters}{'003}
\DeclareMathSymbol{\bsfXi}{0}{bsfletters}{'004}
\DeclareMathSymbol{\ssfXi}{0}{ssfletters}{'004}
\DeclareMathSymbol{\bsfPi}{0}{bsfletters}{'005}
\DeclareMathSymbol{\ssfPi}{0}{ssfletters}{'005}
\DeclareMathSymbol{\bsfSigma}{0}{bsfletters}{'006}
\DeclareMathSymbol{\ssfSigma}{0}{ssfletters}{'006}
\DeclareMathSymbol{\bsfUpsilon}{0}{bsfletters}{'007}
\DeclareMathSymbol{\ssfUpsilon}{0}{ssfletters}{'007}
\DeclareMathSymbol{\bsfPhi}{0}{bsfletters}{'010}
\DeclareMathSymbol{\ssfPhi}{0}{ssfletters}{'010}
\DeclareMathSymbol{\bsfPsi}{0}{bsfletters}{'011}
\DeclareMathSymbol{\ssfPsi}{0}{ssfletters}{'011}
\DeclareMathSymbol{\bsfOmega}{0}{bsfletters}{'012}
\DeclareMathSymbol{\ssfOmega}{0}{ssfletters}{'012}
\newcommand{\lrangle}[2]{\langle{#1},{#2}\rangle}
\newcommand{\trelu}{\overline{\operatorname{ReLU}}}
\DeclareMathOperator*{\argmax}{arg\,max}
\definecolor{cvprblue}{rgb}{0.21,0.49,0.74}
\newtheorem{theorem}{Theorem}
\newtheorem{proposition}[theorem]{Proposition}
\newtheorem{lemma}[theorem]{Lemma}
\newtheorem{corollary}[theorem]{Corollary}
\newtheorem{definition}[theorem]{Definition}
\newtheorem{assumption}[theorem]{Assumption}
\newtheorem{hypothesis}[theorem]{Induction Hypothesis}
\newtheorem{parameter}[theorem]{Parameter Assumption}
\newtheorem{fact}[theorem]{Fact}
\newtheorem{claim}[theorem]{Claim}
\title{Towards Understanding Why FixMatch Generalizes Better Than Supervised Learning}
\author{
Jingyang Li$^1$ \hspace{0.6em} Jiachun Pan$^1$ \hspace{0.6em} Vincent Y. F. Tan$^1$ \hspace{0.6em} Kim-Chuan Toh$^1$ \hspace{0.6em} Pan Zhou$^{2}$\thanks{Corresponding author.} \\
$^1$National University of Singapore \hspace{4em} $^2$ Singapore Management University \\
\texttt{li\_jingyang@u.nus.edu} \hspace{4em} \texttt{pan.jiachun@outlook.com} \\
\texttt{\{vtan,mattohkc\}@nus.edu.sg}
\hspace{4em} \texttt{panzhou@smu.edu.sg}
}
\begin{document}

\maketitle
\begin{abstract}
Semi-supervised learning (SSL), exemplified by FixMatch \citep{sohn2020fixmatch}, has shown significant generalization advantages over supervised learning (SL), particularly in the context of deep neural networks (DNNs). However, it is still unclear, from a theoretical standpoint, why FixMatch-like SSL algorithms generalize  better than SL on DNNs. In this work, we present the first theoretical justification for the enhanced test accuracy observed in  FixMatch-like SSL applied to DNNs by taking  convolutional neural networks (CNNs) on classification tasks as an example. Our theoretical analysis reveals that the semantic feature learning processes in FixMatch and SL are rather different. In particular, FixMatch learns all the discriminative features of each semantic class, while SL only randomly captures a subset of features due to the well-known lottery ticket hypothesis. Furthermore, we show that our analysis framework can be applied to other FixMatch-like SSL methods, e.g., FlexMatch, FreeMatch, Dash, and SoftMatch. Inspired by our theoretical analysis, we develop an improved variant of FixMatch, termed Semantic-Aware FixMatch (SA-FixMatch). Experimental results corroborate our theoretical findings and the enhanced generalization capability of SA-FixMatch.
\end{abstract}

\vspace{-0.5em}
\section{Introduction}\label{intro}
\vspace{-0.5em}
Deep learning has made significant strides in various domains, including computer vision and natural language modeling~\citep{he2016deep,vaswani2017attention,radford2018improving,dosovitskiy2020image,ho2020denoising,mildenhall2021nerf,ouyang2022training,schick2023toolformer}.  These advancements largely stem from scalable supervised learning, where increasing both network size and labeled dataset size typically enhances performance. However, in real-world scenarios,  labeled data are often scarce.  The benefits of larger datasets come at a high cost, as labeling requires human effort and can be prohibitively expensive, particularly in domains that rely on expert annotation~\citep{sohn2020fixmatch,ouali2020overview,zhou2020time,zhang2021flexmatch,pan2022towards}. 

To address this challenge, semi-supervised learning (SSL) \citep{berthelot2019mixmatch,sohn2020fixmatch,zhang2021flexmatch} has emerged as a promising solution, demonstrating effectiveness across various tasks.  The methodology of SSL involves training a network on both labeled and unlabeled data, where pseudo-labels for the unlabeled data are generated during training. As a  leading SSL approach,   FixMatch~\citep{sohn2020fixmatch} first generates a pseudo-label using the current model’s prediction  on a weakly-augmented unlabeled image.  It then selects  the highly-confident pseudo-label  as the training label of   the  strongly-augmented version of the same image, and trains the model together with the vanilla labeled data.  By accessing large amount of cheap unlabeled data with minimal human effort, FixMatch  has  effortlessly and  greatly improved supervised learning.  Moreover, thanks to its effectiveness and simplicity, FixMatch has inspired  many SoTA FixMatch-like SSL works, e.g.,  FlexMatch \citep{zhang2021flexmatch}, FreeMatch \citep{wang2022freematch}, Dash \citep{xu2021dash}, and SoftMatch \citep{chen2023softmatch},  and is seeing increasing applications  across  many deep learning tasks~\citep{xie2020unsupervised,xu2021dash,schmutz2022don,wang2022freematch,chen2023softmatch}. 

Despite FixMatch's practical success, its theoretical foundations lag behind its applications. Specifically, it remains unclear how FixMatch and its SL counterpart perform on deep neural networks, despite strong interest. Moreover, few theoretical studies investigate why SSL outperforms SL in test performance on networks, let alone FixMatch. Most existing works~\citep{he2022information,ctifrea2023can} analyze oversimplified models, such as linear learners, which differ significantly from the highly nonlinear and non-convex networks used in real-world SSL. Consequently, these studies fail to capture FixMatch’s learning mechanism. Other works~\citep{rigollet2007generalization,van2020survey,guo2020safe} treat models as black-box functions under restrictive conditions, offering insights that do not account for the CNN dependencies crucial to FixMatch's superiority.

\textbf{Contributions.} To address these issues, we   theoretically justify  the superior test performance of FixMatch-like SSL over SL in classification tasks, using FixMatch as a case study. We analyze the semantic feature learning processes in FixMatch and SL, explaining their test performance differences and inspiring an improved FixMatch variant. Our key contributions are summarized as follows:

Firstly, we prove that FixMatch achieves superior test accuracy over SL on a three-layer CNN. Specifically, under the widely acknowledged multi-view data assumption \citep{allen-zhu2023towards}, where multiple/single semantic features exist in multi/single-view data, FixMatch consistently achieves zero training and test classification errors on both multi-view and single-view data. In contrast, while SL achieves zero test classification error on multi-view data, it suffers up to \(50\%\) test error on single-view data, showcasing FixMatch's superior generalization capacity compared to SL.

Secondly, our analysis highlights distinct feature learning processes between FixMatch and SL,  directly affecting their test performance. We show that FixMatch comprehensively captures all semantic features within each class, virtually eliminating test classification errors. But SL learns only a partial set of these semantic features, and often fails on single-view samples due to the unlearned features, explaining its poor test classification accuracy on single-view data.

Finally, inspired by these insights, we introduce an improved version of FixMatch termed Semantic-Aware FixMatch (SA-FixMatch). This variant enhances FixMatch by masking learned semantics in unlabeled data, compelling the network to learn the remaining features missed by the current network. Our experimental evaluations confirm that SA-FixMatch achieves better generalization performance than FixMatch across various classification benchmarks.

\vspace{-0.5em}
\section{Related Works}
\vspace{-0.5em}
\paragraph{Modern Deep SSL Algorithms.}
Pseudo-labeling \citep{scudder1965probability,mclachlan1975iterative} and consistency regularization \citep{bachman2014learning,sajjadi2016regularization,laine2016temporal} are the two important principles responsible for the success of modern deep SSL algorithms \citep{berthelot2019mixmatch,berthelot2019remixmatch,xie2020unsupervised,zhang2021flexmatch,xu2021dash,wang2022freematch,chen2023softmatch}. FixMatch \citep{sohn2020fixmatch}, as a remarkable deep SSL algorithm, combines these principles with weak and strong data augmentations, achieving competitive results especially when labeled data is limited.
Following FixMatch, several works, e.g.,  FlexMatch \citep{zhang2021flexmatch}, Dash \citep{xu2021dash},  FreeMatch \citep{wang2022freematch}, and SoftMatch \citep{chen2023softmatch}, try to improve FixMatch by adopting a flexible confidence threshold rather than the hard and fixed threshold adopted by FixMatch. These modern deep SSL algorithms can achieve remarkable test accuracy even trained with  one labeled sample per semantic class \citep{sohn2020fixmatch,zhang2021flexmatch,chen2023softmatch}.  

\vspace{-0.5em}
\paragraph{SSL Generalization Error.}
Previous works on generalization capacity of SSL focus on a general machine learning setting \citep{rigollet2007generalization,singh2008unlabeled,van2020survey,wei2020theoretical,guo2020safe,mey2022improved}. In particular, the authors here view the model as a black-box function under certain assumptions which does not reveal the dependence on model design. Some recent works \citep{he2022information,ctifrea2023can} analyze the generalization performance of SSL under the binary Gaussian mixture data distribution for linear learning models. The over-simplified model is significantly different from the highly nonlinear and non-convex neural networks.

\vspace{-0.5em}
\paragraph{Feature Learning.} Prior works on feature learning has shed light on how neural networks learn and represent data \citep{wen2021toward,wen2022mechanism,allen2022feature,allen-zhu2023towards}. For instance, \citet{allen-zhu2023towards} explored how ensemble methods and knowledge distillation enhance generalization, while \citet{wen2021toward} explained how contrastive learning captures sparse features and avoids spurious dense ones. Empirically, \citet{park2018adversarial} proposed adversarial dropout to adapt neural networks based on their learning status, improving test performance. Despite these advances, to the best of our knowledge, this work is the first to analyze feature learning of neural networks in semi-supervised settings, with our theory-inspired SA-FixMatch achieving superior generalization performance.

\vspace{-0.5em}
\section{Problem Setup}\label{sec-problem_setup}
\vspace{-0.5em}
Here we first introduce the necessary multi-view data assumption used in this work, and then present  FixMatch, a popular and classic SSL approach,   to train   a three-layered CNN on a $k$-class classification problem.  For brevity, we use $O(\cdot), \Omega(\cdot), \Theta(\cdot)$ to hide constants w.r.t. $k$ and use $\tilde{O}(\cdot), \tilde{\Omega}(\cdot),\tilde{\Theta}(\cdot)$ to hide polylogarithmic factors. Let  $\operatorname{poly}(k)$ and $\operatorname{polylog}(k)$  respectively denote $\Theta(k^C)$ and $\Theta(\log ^C k)$ with a constant $C>0$. We use $[n]$ to denote the set of $\{1,2, \ldots, n\}$. 

\vspace{-0.5em}
\subsection{Multi-View Data Distribution} \label{sec-multi_view_data}
Following \citet{allen-zhu2023towards}, we adopt the multi-view data assumption, which posits that each semantic class comprises multiple distinct semantic features—such as car lights and wheels—that can independently facilitate correct classification. To empirically validate this, we use Grad-CAM~\citep{selvaraju2017grad} to identify class-specific regions in images. As shown in Figure~\ref{fig:gcam}, Grad-CAM highlights distinct, non-overlapping regions, such as different parts of a car, that contribute to recognition. These results support the findings of \citet{allen-zhu2023towards}, confirming the existence of multiple independent semantic features within each class.

\begin{figure*}[t]
\begin{center}
    \setlength{\tabcolsep}{0.0pt}  
    \scalebox{1.04}{\begin{tabular}{ccccccccc} 
        \includegraphics[width=0.12\linewidth]{./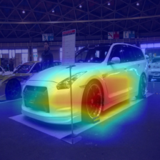}&
        \includegraphics[width=0.12\linewidth]{./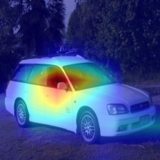}&
        \includegraphics[width=0.12\linewidth]{./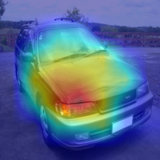}& 
        \includegraphics[width=0.12\linewidth]{./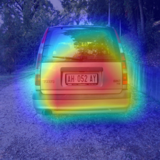} &
        \includegraphics[width=0.12\linewidth]{./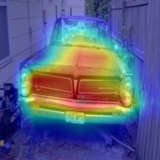}&
        \includegraphics[width=0.12\linewidth]{./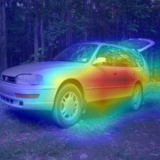}&
        \includegraphics[width=0.12\linewidth]{./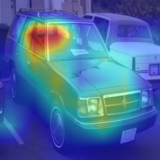}&
        \includegraphics[width=0.12\linewidth]{./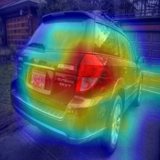}
        \vspace{-0.4em}\\
    \end{tabular}}
\end{center}
\vspace{-0.8em}
\caption{Visualization of pretrained ResNet-50 \citep{he2016deep} using Grad-CAM. ResNet-50 locates different regions for different car images, e.g., wheel, rearview mirror, front light, and door.}
\label{fig:gcam}
\vspace{-1em}
\end{figure*}

Now we introduce the multi-view data assumption in \citet{allen-zhu2023towards}, which considers a dataset with \(k\) semantic classes. Let each sample pair \((X, y)\) consists of the sample $X$, which is comprised of a set of $P$ patches $\{x_p \in\mathbb{R}^{d}\}_{p=1}^{P}$, and $y \in[k]$ as the class label. 
We assume each class \(i\) has two semantic features, \(v_{i, 1}\) and \(v_{i, 2}\) in \(\mathbb{R}^d\), capable of independently ensuring correct classification. While this analysis focuses on two features per class, the methodology extends to multiple features. Below we define \(\mathcal{V}\) as the set of all semantic features across the \(k\) classes:
\begin{equation}
\setlength{\abovedisplayskip}{4pt}
\setlength{\belowdisplayskip}{4pt}
\setlength{\abovedisplayshortskip}{4pt}
\setlength{\belowdisplayshortskip}{4pt}
\mathcal{V}\!=\!\left\{ v_{i, 1}, v_{i, 2} \mid  
\|v_{i, 1}\|_2 = \ \|v_{i, 2}\|_2=1, 
v_{i, l} \perp v_{i^{\prime}, l^{\prime}} \text { if }(i, l) \neq\left(i^{\prime}, l^{\prime}\right)
\right\}_{i=1}^k.
\end{equation}
The conditions ensure the distinction of each class and its semantic features. Accordingly, we define the multi-view and singe-view distributions $\mathcal{D}_m$ and $\mathcal{D}_s$, where data from $\mathcal{D}_m$ has two semantic features, and those from $\mathcal{D}_s$ have only one. Set sparsity parameter \(s=\polylogk\) and constant \(C_p\).

\begin{definition}[Informal, Data distribution \citep{allen-zhu2023towards}]
    \label{def:multi-view}
The data distribution $\mathcal{D}$ contains data from the multi-view data distribution $\mathcal{D}_m$ with probability $1-\mu$, and data from the single-view data distribution $\mathcal{D}_s$ with probability $\mu \!=\! \frac{1}{\polyk}$. We define $(X, y) \!\sim\! \mathcal{D}$ by randomly uniformly selecting a label $y \!\in\![k]$ and generate data $X$ accordingly as follows.
	
\textbf{(a)} Sample a set of noisy features $\mathcal{V}^{\prime}$ uniformly at random from $\left\{v_{i, 1}, v_{i, 2}\right\}_{i \neq y}$, each with probability \(s/k\). Then the whole feature set of $X$ is $\mathcal{V}(X)=\mathcal{V}^{\prime} \cup \left\{v_{y, 1}, v_{y, 2}\right\}$, i.e., the noisy feature set $\calV'$ plus the main features \(\left\{v_{y, 1}, v_{y, 2}\right\}\).

\textbf{(b)}  For each $v \in \mathcal{V}(X)$, pick \(C_p\) disjoint patches in $[P]$ and denote them as $\mathcal{P}_v(X)$. For a patch $p\in\calP_v(X)$, we set
\(
    \setlength{\abovedisplayskip}{3pt}
    \setlength{\belowdisplayskip}{3pt}
    \setlength{\abovedisplayshortskip}{3pt}
    \setlength{\belowdisplayshortskip}{3pt}
x_p=z_p v+``noises" \in \bbR^d,
\)
where the coefficients $z_p \geq 0$ satisfy:  \\
\textbf{(b1)}  For ``multi-view" data $(X, y) \in \mathcal{D}_m$, \(\sum_{p \in \calP_v(X)} z_p \in[1, O(1)]\)
when \(v \in \left\{v_{y, 1}, v_{y, 2}\right\}\) and \(\sum_{p \in \mathcal{P}_v(X)} z_p \in[\Omega(1), 0.4]\) when \(v \in \calV(X) \setminus \left\{v_{y, 1}, v_{y, 2}\right\}\).\\
\textbf{(b2)} For ``single-view" data $(X, y) \in \mathcal{D}_s$, pick a value $\hat{l} \in [2]$ randomly uniformly as the index of the main feature. Then \(\sum_{p \in \mathcal{P}_{v}(X)} z_p \in[1, O(1)]\) when \(v=v_{y,\hat{l}}\), \(\sum_{p \in \mathcal{P}_v(X)} z_p \in[\rho, O(\rho)]\) ($\rho=k^{-0.01}$) when \(v=v_{y,3-\hat{l}}\), and $\sum_{p \in \mathcal{P}_{v}(X)} z_p = \frac{1}{\polylogk}$ when \(v \in \calV(X) \setminus \left\{v_{y, 1}, v_{y, 2}\right\}\).
\\
\textbf{(c)} For each purely noisy patch $p\in[P]\setminus\cup_{v\in\calV}\calP_v(X)$, we set \(x_p = ``noises"\).
\end{definition}
For the details of Def.~\ref{def:multi-view}, please see Def.~\ref{app:def_data} in Appendix~\ref{app:thm_state}, and also see more explanations in Appendix~\ref{app-data_def_explain}.
According to the definition, a multi-view sample \((X,y) \in \mathcal{D}_m\) has patches with two semantic features \(v_{y,1}\) and \(v_{y,2}\) plus some noises, while a single-view sample  \((X,y) \in \mathcal{D}_s\) has patches with only one semantic feature \( v_{y,1}\) or \(v_{y,2}\) plus noises.

\subsection{FixMatch for Training Neural networks} \label{sec-setup_fixmatch}
\vspace{-0.5em}

Here we introduce the representative SSL approach, FixMatch and its variants, on a $k$-class classification problem, the most popular task in SSL.  
\vspace{-0.9em}
\paragraph{Neural Network}
    For the network, we assume it as a three-layer CNN which has $mk$ convolutional kernels \(\{w_{i,r}\}_{i\in[k], r \in[m]}\).  Its  classification probability $\logit_i(F,X)$  on class \(i\in [k]\) is defined as 
		\begin{equation}
  \setlength{\abovedisplayskip}{3pt}
		\setlength{\belowdisplayskip}{3pt}
		\setlength{\abovedisplayshortskip}{3pt}
		\setlength{\belowdisplayshortskip}{3pt}
\logit_i(F,X)  =  \exp(F_i(X)) / \sum\nolimits_{j\in[k]}\exp(F_j(X)), 	
	\end{equation}
    where $F(X) = (F_1(X), \cdots, F_k(X))\in\mathbb{R}^k$ is defined as
    \begin{equation}
    \setlength{\abovedisplayskip}{3pt}
    \setlength{\belowdisplayskip}{3pt}
    \setlength{\abovedisplayshortskip}{3pt}
    \setlength{\belowdisplayshortskip}{3pt}
    F_i(X)= \sum\nolimits_{r\in [m]} \sum\nolimits_{p\in [P]} \trelu (\lrangle{w_{i,r}}{x_p}), \quad \forall i \in [k].
    \end{equation}
Here \(\trelu\)~\citep{allen-zhu2023towards} is a smoothed ReLU  that outputs zero for negative values, reduces small positive values to diminish noises, and maintains a linear relationship for larger inputs. This  ensures \(\trelu\) to focus on important features while filtering out noises. See details  in Appendix~\ref{app:thm_state}.

\vspace{-0.15em}
This three-layer network, comprising linear mapping, activation, and a softmax layer, captures essential neural network components, offering valuable insights into SSL training. Notably, many theoretical studies also use shallow networks (e.g., two-layer models) to gain insights into deep networks~\citep{li2017convergence, arora2019fine, zhang2021understanding}. Moreover, this setup matches the architecture in \citet{allen-zhu2023towards}, enabling direct comparisons between our FixMatch results and their SL findings in Sec.~\ref{sec-learning_process}.

\vspace{-0.8em}
\paragraph{SSL Training}
For FixMatch-like SSL, at \(t\)-th iteration, it has two types of losses: 1) a supervised one $L_s^{(t)}$ on labeled data, and 2) an unsupervised one $L_u^{(t)}$ on unlabeled data. For $L_s^{(t)}$, it is cross-entropy loss on  labeled  dataset $\calZ_{l}$: 
	\begin{equation} \label{lb_loss}
            \setlength{\abovedisplayskip}{4pt}
		\setlength{\belowdisplayskip}{4pt}
		\setlength{\abovedisplayshortskip}{4pt}
		\setlength{\belowdisplayshortskip}{4pt}
	L_s^{(t)} \!=\! \bbE_{(X_l,y)\sim \calZ_{l}} L_s^{(t)}(X_l, y)= \bbE_{(X_l,y)\sim \calZ_{l}} [- \log \logit_y(F^{(t)},\alpha(X_l))],
	\end{equation}
where $\alpha(X)$ is a weak augmentation applied to input $X$. In practice,  $\alpha(X)$ typically consists of a random horizontal flip and a random crop that retains most region of the image \citep{sohn2020fixmatch,zhang2021flexmatch}, which often do not alter the semantic features. Hence, we treat the weak augmentation as an identity map to simplify our analysis. Additionally, experiments in Appendix \ref{app:weak_aug} confirm that weak augmentation has minimal impact on SSL training.  

\vspace{-0.25em}
For the unsupervised loss $L_u^{(t)}(X_u)$, a weakly-augmented unlabeled sample $\alpha(X_u)$ is fed into the network to obtain classification probabilities $\logit_i(F^{(t)},\alpha(X_u))$ for $i \in [k]$. If the maximal probability exceeds a confidence threshold $\calT_t \in (0, 1]$, i.e., $\max_i \logit_i(F^{(t)}, \alpha(X_u)) \ge \calT_t$, FixMatch-like SSLs use it as the pseudo-label to supervise the corresponding strongly augmented sample $\calA(X_u)$:
\begin{equation} \label{ulb_loss}
        \setlength{\abovedisplayskip}{4pt}
    \setlength{\belowdisplayskip}{4pt}
    \setlength{\abovedisplayshortskip}{4pt}
    \setlength{\belowdisplayshortskip}{4pt}
\begin{aligned}
    L_u^{(t)}=& \bbE_{X_u\sim \calZ_{u}} L_u^{(t)}(X_u) = \bbE_{X_u\sim \calZ_{u}} [-\bbI_{\{ \logit_b(F^{(t)}, \alpha(X_u)) \ge \calT_t \}}   \log \logit_b( F^{(t)}, \calA(\!X_u\!))], 
\end{aligned}
\end{equation}
 where \(b\) is the pseudo-label \(b=\argmax_{i\in [k]}\{\logit_i(F^{(t)}, \alpha(X_u))\}\). {Here, FixMatch-like SSLs use pseudo-labels generated from weakly-augmented samples to supervise the corresponding strongly-augmented ones, enforcing consistency regularization on model predictions, which we will show in Sec. \ref{sec-learning_process} is crucial for the superior generalization performance of SSL compared to SL.} For threshold \(\calT_t\), FixMatch \citep{sohn2020fixmatch} sets it as a  constant threshold $\mathcal{T}_t=\tau$ (e.g. 0.95) for high pseudo-label quality. Current SoTA SSLs, e.g., FlexMatch \citep{zhang2021flexmatch}, FreeMatch \citep{wang2022freematch}, Dash \citep{xu2021dash}, and SoftMatch \citep{schick2023toolformer},  follow FixMatch framework, and often design their own  confidence threshold \(\calT_t\)  in Eq.~\eqref{ulb_loss}. This decides the applicability of our theoretical results on FixMatch in Sec.~\ref{sec-main_result} to these FixMatch-like SSLs. See details in Appendix~\ref{app:like}.

\vspace{-0.1em}
For strong augmentation $\calA(\cdot)$, it often uses CutOut \citep{devries2017improved} and RandAugment \citep{cubuk2020randaugment}. CutOut randomly masks a large square region of the input image, potentially removing partial semantic features. Experimental results in Appendix \ref{app:strong_aug} confirm the significant impact of CutOut on image semantics and model performance.  RandAugment includes various transformations, e.g., rotation, translation, solarization. Appendix \ref{app:strong_aug} reveals that those augmentations that may remove data semantics also have a large impact on model performance. Based on these findings, we model the probabilistic feature removal effect of \(\calA(\cdot)\) for our analysis in Sec.~\ref{Testperformance}.

\vspace{-0.25em}
Now given the training loss $L^{(t)} = L_s^{(t)} + \lambda L_u^{(t)}$ at the $t$-th iteration, we  adopt the widely used gradient descent (GD)  to update the model parameters \(\{w_{i,r}\}_{i\in[k], r \in[m]}\) in the network:
\begin{equation}
\setlength{\abovedisplayskip}{3pt}
		\setlength{\belowdisplayskip}{3pt}
		\setlength{\abovedisplayshortskip}{3pt}
		\setlength{\belowdisplayshortskip}{3pt}
\label{eqn:gd}
	w_{i,r}^{(t+1)} = w_{i,r}^{(t)} - \eta \nabla_{w_{i,r}}   L_s^{(t)} - \lambda \eta \nabla_{w_{i,r}}  L_u^{(t)}, 
\end{equation} 
where $\eta \ge 0$ is a learning rate, and $\lambda>0$ is the weight to balance the two losses. According to the common practice \citep{sohn2020fixmatch,zhang2021flexmatch,xu2021dash,wang2022freematch,chen2023softmatch}, we set \(\lambda=1\) in both our theoretical analysis and experiments.

\vspace{-0.5em}
\section{Main Results}\label{sec-main_result}
\vspace{-0.5em}
In this section, we first prove the superior generalization performance of FixMatch compared with SL. Next we analyze the intrinsic reasons for its superiority over SL via revealing and comparing the semantic feature learning process. Finally, inspired by our theoretical insights, we propose a Semantic-Aware FixMatch (SA-FixMatch) to better learn the semantic features. 
	
    \vspace{-0.5em}
    \subsection{Results on Test Performance}\label{Testperformance}
    \vspace{-0.5em}
	Here we analyze the performance of FixMatch, and  compare it with its SL counterpart, whose implementation is simply setting the weight \(\lambda=0\) for the unsupervised loss in Eq.~\eqref{eqn:gd}.	

    As discussed in Sec.~\ref{sec-multi_view_data}, we assume the training dataset \(\calZ\) follows the multi-view distribution \(\calD\) (Def.~\ref{def:multi-view}), with multi-view and single-view sample ratios of \(1-\mu\) and \(\mu\), respectively. Each class \(i \in [k]\) has two i.i.d. semantic features \(v_{i,1}\) and \(v_{i,2}\), both of which are capable of predicting label \(i\). Multi-view samples contain both features, while single-view samples have only one. For clarity, we denote the labeled multi-view and single-view subsets as \(\calZ_{l,m}\) and \(\calZ_{l,s}\), and the unlabeled subsets as \(\calZ_{u,m}\) and \(\calZ_{u,s}\). Below, we outline the necessary assumptions on the dataset and model initialization.
 
	\begin{assumption}\label{assum1}
		\textbf{(a)} The training dataset \(\calZ\) follows the distribution \(\calD\), and the size of the unlabeled data satisfies \(N_u = |\calZ_{u,m} \cup \calZ_{u,s}| = |\calZ_{l,m} \cup \calZ_{l,s}|\cdot \polyk\).  \\
		\textbf{(b)} Each convolution kernel $w_{i,r}^{(0)}$ ($i\in [k]$, $r \in [m]$) is initialized by a Gaussian distribution $\mathcal{N}(0, \sigma_0^2 \mathbf{I})$, where  $\sigma_0^{q-2} = 1/k$ and $q\geq 3$ is given in the definition of $\trelu$.
  	\end{assumption}
	Assumption~\ref{assum1}(a)  indicates that number of unlabeled data significantly exceeds that of the labeled data, a common scenario given the lower cost of acquiring unlabeled versus labeled data. The Gaussian initialization in Assumption~\ref{assum1}(b) accords with the standard  initialization in practice, and is mild.  Moreover, we also need assumptions on the strong augmentation \(\calA(\cdot)\) to formulate the effect of consistency regularization in unsupervised loss Eq.~\eqref{ulb_loss}.

    \begin{assumption}\label{assum2}
    Suppose  for a given image, strong augmentation \(\calA(\cdot)\) randomly removes its semantic patches and noisy patches with probabilities \(\pi_2\) and \(1-\pi_2\), respectively. \\
    1) For a single-view image, the sole semantic feature is removed with probability \(\pi_2\).\\
    2) For a multi-view image, either of the two features, \(v_{i,1}\) or \(v_{i,2}\), is removed with probabilities \(\pi_1 \pi_2\) and \((1 - \pi_1) \pi_2\), respectively. We define  strong augmentation \(\calA(\cdot)\) for multi-view data: for \(p \in [P]\),
    \begin{equation}\label{eq:strong_aug_def}
        \setlength{\abovedisplayskip}{3pt}
        \setlength{\belowdisplayskip}{3pt}
        \setlength{\abovedisplayshortskip}{3pt}
        \setlength{\belowdisplayshortskip}{3pt}
        \calA(x_p) \!=\! \begin{cases}
            \max(\epsilon_1, \epsilon_2) x_p,&\text{if }  v_{y,1} \text{ is in the patch } x_p, \\
            \max(1 - \epsilon_1, \epsilon_2) x_p,&\text{if }  v_{y,2} \text{ is in the patch } x_p, \\
            (1 - \epsilon_2) x_p,&\text{otherwise (noisy patch)},
        \end{cases}
    \end{equation}
    where \(\epsilon_1\) and \(\epsilon_2\) are i.i.d. Bernoulli  variables,  respectively equaling to 0 with probabilities \(\pi_1\) and \(\pi_2\).
    \end{assumption}
    As discussed in Sec.~\ref{sec-problem_setup}, for strong augmentation \(\calA(\cdot)\), we focus on its probabilistic feature removal effect on the input image, caused by techniques like CutOut and certain operations in RandAugment, such as solarization. The use of the \(\max\) function ensures that \(\epsilon_1\) is active when \(\epsilon_2 = 0\), indicating that \(\calA(\cdot)\) removes one feature at a time. Further details are provided in Appendix~\ref{app:thm_state}.

	Based on the above assumptions,  we analyze the training and test performance of FixMatch, and summarize   our main results in Theorem~\ref{main:thm1} with its proof in Appendix~\ref{app:fixmatch}. 
	\begin{theorem}
		\label{main:thm1}
		Suppose Assumptions~\ref{assum1}, \ref{assum2} hold. For sufficiently large $k$ and $m = \polylogk$, setting $\eta \leq 1/\polyk$ and running FixMatch for $T = \polyk/\eta$ iterations ensures:
		
		\textbf{(a) Training performance  is good.} For all training samples $(X,y)\in\calZ$, with probability at least $1-e^{-\Omega(\log^2k)}$, we have
        $$
        \setlength{\abovedisplayskip}{3pt}
		\setlength{\belowdisplayskip}{3pt}
		\setlength{\abovedisplayshortskip}{3pt}
		\setlength{\belowdisplayshortskip}{3pt}
		\begin{aligned}
			F_y^{(T)}(X) \geq  \max\nolimits_{j\neq y} F_j^{(T)}(X) + \Omega(\log k).
		\end{aligned}
        $$
		\textbf{(b) Test performance is good.}  With probability at least $1-e^{-\Omega(\log^2k)}$ over the selection of any multi-view test sample $(X,y) \sim \calD_m$ and single-view test sample $(X,y) \sim \calD_s$,
        we have
            $$
            \setlength{\abovedisplayskip}{3pt}
		\setlength{\belowdisplayskip}{3pt}
		\setlength{\abovedisplayshortskip}{3pt}
		\setlength{\belowdisplayshortskip}{3pt}
		\begin{aligned}
			F_y^{(T)}\!(X)\! \geq \! \max\nolimits_{j\neq y}\! F_j^{(T)}\!(X) \!+ \!\Omega(\log k).
		\end{aligned} 
            $$
        \vspace{-1em}
	\end{theorem}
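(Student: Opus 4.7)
The plan is to track the evolution of the feature correlations $\langle w_{i,r}^{(t)}, v_{j,l}\rangle$ for every kernel $r\in[m]$, class $i\in[k]$, and feature index $(j,l)$, together with the ``noise correlations'' on the random noise directions appearing in Def.~\ref{def:multi-view}. Following the tensor--power decomposition strategy of \citet{allen-zhu2023towards}, the $\trelu$ activation with exponent $q\geq 3$ makes the dynamics behave like a nonlinear power iteration: once a correlation $\langle w_{i,r}^{(t)},v_{i,l}\rangle$ crosses a $\sigma_0\cdot\polylogk$ threshold, its growth rate dominates that of every competing direction. The whole argument is then a two-phase induction: a supervised warm-up phase in which the labeled loss $L_s^{(t)}$ seeds feature learning, followed by a semi-supervised phase in which the unlabeled loss $L_u^{(t)}$ with strong augmentation $\calA(\cdot)$ forces the remaining features to be acquired.

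\textbf{Phase 1: supervised warm-up and pseudo-label boot-strapping.} First I would import the labeled-data analysis of \citet{allen-zhu2023towards} verbatim, because at initialization the unlabeled loss is inactive (no prediction can exceed the confidence threshold $\calT_t$). This phase shows that within $T_1=\tilde\Theta(k/\eta)$ iterations each class acquires at least one winning kernel aligned with one of its two features: with high probability there is a per-class index set $\calM_i\subset[k]\times[2]$, non-empty for every $i$, such that $\langle w_{i,r}^{(T_1)},v_{i,l}\rangle=\Theta(1)$ for some $(r,l)$ with $l\in\calM_i$, while all off-class correlations $\langle w_{i,r}^{(T_1)},v_{j,l}\rangle$ and all noise correlations remain $\tilde O(\sigma_0)$. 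This in turn implies that on a weakly-augmented unlabeled sample $\alpha(X_u)$ that carries at least one already-learned feature, $\logit_y(F^{(T_1)},\alpha(X_u))\ge\calT_t$, so its pseudo-label equals $y$; a standard concentration argument then certifies that the fraction of wrongly pseudo-labeled unlabeled points stays below $1/\polyk$ throughout training.

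\textbf{Phase 2: augmentation-driven learning of the missing feature.} The key novelty, and the hardest step, is showing that the unsupervised loss together with Assumption~\ref{assum2} forces every class to acquire \emph{both} features. Fix a class $i$, and suppose without loss of generality that only $v_{i,1}$ is learned after Phase~1. Consider a multi-view unlabeled sample $X_u$ of class $i$: its pseudo-label from $\alpha(X_u)$ is $i$ by Phase~1, and the strong augmentation $\calA$ in Eq.~\eqref{eq:strong_aug_def} removes the patches containing $v_{i,1}$ with probability $\pi_1\pi_2$. On such samples, $F_i^{(t)}(\calA(X_u))$ is small because no kernel is yet aligned with $v_{i,2}$, so $-\log\logit_i(F^{(t)},\calA(X_u))$ is large and its gradient with respect to every kernel $w_{i,r}$ points in the direction of $v_{i,2}$-carrying patches. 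I would then re-run the same tensor-power lemma of \citet{allen-zhu2023towards}, this time on the effective population loss $\bbE_{X_u}[\bbI_{v_{i,1}\text{ removed}}\cdot L_u^{(t)}(X_u)]$, to show that after $T_2-T_1=\tilde\Theta(k/(\pi_1\pi_2\eta))$ additional steps some kernel satisfies $\langle w_{i,r}^{(T_2)},v_{i,2}\rangle=\Theta(1)$. An inductive coupling between the supervised and unsupervised gradient contributions ensures that the first feature does not degrade during this process and that off-class/noise correlations remain $\tilde o(1)$.

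\textbf{Margin on training and test data.} Once both features of every class are learned and all other correlations are controlled, the final step is to lower-bound $F_y^{(T)}(X)-\max_{j\ne y}F_j^{(T)}(X)$. For any $X$ (training or test, multi-view or single-view), at least one patch carries a learned feature $v_{y,l}$ with coefficient $z_p=\Theta(1)$, so the $\trelu$ terms contribute $\Omega(\log k)$ to $F_y^{(T)}(X)$, while $F_j^{(T)}(X)$ for $j\ne y$ collects only $\tilde O(1)$ from noise correlations and $\tilde O(s/k)\cdot\polylogk=o(1)$ from the small coefficients on $\{v_{j,1},v_{j,2}\}$ allowed in noisy patches of Def.~\ref{def:multi-view}(a). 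Union-bounding over the $1-e^{-\Omega(\log^2 k)}$ events from Phases~1 and~2 gives parts~(a) and~(b) simultaneously; crucially, single-view test samples are handled here because, unlike SL, FixMatch has learned \emph{both} $v_{y,1}$ and $v_{y,2}$, so whichever single feature the sample carries still activates some aligned kernel. The main obstacle I expect is the coupling argument in Phase~2: I must show that the stochastic feature-removal in $\calA$ provides a positive, non-vanishing gradient signal on $v_{y,2}$ even while the already-learned $v_{y,1}$-kernels keep the classification margin positive, which requires a delicate separation between the ``signal-carrying'' and ``signal-missing'' augmentation events and a careful use of the threshold $\calT_t$ to prevent bad pseudo-labels from polluting this signal.
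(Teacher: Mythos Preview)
Your two-phase plan matches the paper's approach closely: Phase~I imports the supervised analysis of \citet{allen-zhu2023towards} to learn one feature per class, and Phase~II uses the feature-removal effect of $\calA(\cdot)$ on correctly pseudo-labeled multi-view samples to drive a second tensor-power growth on the missing feature, with an induction to keep off-diagonal and noise correlations small and to show the already-learned feature is retained. The paper organizes Phase~II as an explicit induction hypothesis (their Hypothesis~\ref{hyp}) together with a chain of claims (initial growth to $\Theta(1)$, a telescoping bound on $\sum_t(1-\logit_y)$, monotone decrease of the smooth objective under full GD, and then an ``individual error'' claim that converts small average error into per-class statements), and it first proves the deterministic-removal SA-FixMatch case before reducing FixMatch to it via the $\pi_1\pi_2$ probability, but these are organizational rather than conceptual differences from your outline.

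There is, however, a genuine gap in your final margin step. You write that for $j\ne y$, $F_j^{(T)}(X)$ collects only $\tilde O(1)$ from noise correlations plus $\tilde O(s/k)\cdot\polylogk=o(1)$ from the noisy features. This is wrong for multi-view $X$: by Def.~\ref{def:multi-view}(b1), whenever $v_{j,l}\in\calV(X)$ (which happens for $\Theta(s)$ many $j$'s with high probability), the coefficients satisfy $\sum_p z_p\in[\Omega(1),0.4]$, so $F_j^{(T)}(X)$ picks up roughly $0.4\,\Phi_{j,l}^{(T)}=\Theta(\log k)$, not $o(1)$. Consequently, knowing only that every $\Phi_{i,l}^{(T)}\ge\Omega(\log k)$ is insufficient; you also need the \emph{balance} statement $\Phi_y^{(T)}\ge 0.4\,\Phi_j^{(T)}+\Omega(\log k)$ for every pair $y,j$. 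The paper obtains this not from the tensor-power growth directly but from a loss-based argument: once the averaged multi-view error $\bbE[1-\logit_y(F^{(t)},\calA(X))]$ is driven below $k^{-4}$ (which follows from the telescoping error bound and monotonicity of GD), one can invert the softmax to conclude $e^{0.4\Phi_{i,3-l}-\Phi_{j,3-l'}}\le 1/k$ for all $i\ne j$, which simultaneously yields $\Phi_{i,l}\ge\Omega(\log k)$ and the required balance. Your proposal jumps from ``some kernel reaches $\Theta(1)$'' to ``margin is $\Omega(\log k)$'' without this intermediate step, and the step is essential precisely because the noisy features in multi-view data are not small.
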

	Theorem~\ref{main:thm1}(a) shows that after $T=\polyk/\eta$ training iterations, the network $ F^{(T)}$ trained by FixMatch can well fit the training dataset $\calZ$, achieving zero classification error. Specifically, for any training sample $(X, y)\in \calZ$, the predicted value $F_y(X)$ for the true label $y$ consistently exceeds the predictions $F_j(X)$ for all $j\neq y$, ensuring correct classification. More importantly, Theorem~\ref{main:thm1}(b) establishes that the trained network $ F^{(T)}$ can also accurately classify test samples $(X,y) \sim \calD_m \cup \calD_s$, validating FixMatch's strong generalization performance.
	
	Now we compare FixMatch with SL (i.e. $\lambda=0$ in Eq.~\eqref{eqn:gd}) under the same data distribution and the same network. According to \citet{allen-zhu2023towards}, under the same assumption of Theorem~\ref{main:thm1}, after running standard SL for  $T=\polyk/\eta$ iterations, SL can achieve good training performance as in Theorem~\ref{main:thm1}(a). However, SL exhibits inferior test performance compared to FixMatch. Specifically,  both methods achieve zero classification error on multi-view samples  $(X,y) \sim \calD_m$, while on single-view data $(X,y) \sim \calD_s$, SL achieves only about 50\% classification accuracy, significantly lower than FixMatch's nearly 100\% accuracy. See Appendix~\ref{app:thm_supervised} for more details on SL.  
 
    For other FixMatch-like SSLs such as FlexMatch \citep{zhang2021flexmatch}, FreeMatch \citep{wang2022freematch}, Dash \citep{xu2021dash}, and SoftMatch \citep{chen2023softmatch}, our theoretical results in Theorem~\ref{main:thm1} and the comparison with SL are also broadly applicable. Due to space limitations, we defer the discussions  to Appendix \ref{app:like}. These theoretical results justify the superiority of FixMatch-like SSLs over  SL, aligning with empirical evidence from several studies~\citep{sohn2020fixmatch,zhang2021flexmatch,wang2022freematch,xu2021dash,chen2023softmatch}.
 
    \vspace{-0.5em}
    \subsection{Results on Feature Learning Process}\label{sec-learning_process}
    \vspace{-0.5em}
	Here we analyze the feature learning process in FixMatch and SL, and explain their rather different test performance as shown in Sec.~\ref{Testperformance}.  
    To monitor the feature learning process, we define 
    \begin{equation*}
        \setlength{\abovedisplayskip}{3pt}
        \setlength{\belowdisplayskip}{3pt}
        \setlength{\abovedisplayshortskip}{3pt}
        \setlength{\belowdisplayshortskip}{3pt}
        \Phi_{i,l}^{(t)} := \sum\nolimits_{r\in[m]} [\langle w_{i,r}^{(t)}, v_{i,l}\rangle]^{+}, 
        \quad i\in[k], \quad l\in[2]
    \end{equation*}  
    as an indicator of feature learning for $v_{i,l}$ in class $i$. It represents the total positive correlation between feature $v_{i,l}$ and all $m$ convolution kernels $w_{i,r}$ ($r \in [m]$) at iteration $t$. A larger $\Phi_{i,l}^{(t)}$ indicates better capture and utilization of $v_{i,l}$ for classification. See Appendix~\ref{appsec:func} for further discussion.

    Next, FixMatch applies a confidence threshold ($\tau$) to regulate the unsupervised loss in Eq.~\eqref{ulb_loss}, dividing its feature learning process into Phase I and Phase II. In Phase I, the network relies primarily on supervised loss, as it cannot yet generate confident pseudo-labels. As training progresses, the network learns partial features, improving its ability to predict confident pseudo-labels for unlabeled data. This transition marks the start of Phase II, where the unsupervised loss plays a larger role, driven by consistency regularization between weakly and strongly augmented samples.
	
	Now we are  ready to present the  feature learning process of FixMatch and SL in Theorem~\ref{thm:feature}.
    \begin{theorem}
    \label{thm:feature}
    Suppose Assumptions~\ref{assum1}, \ref{assum2} hold. For sufficiently large $k$ and $m = \polylogk$, setting $\eta \leq 1/\polyk$ and $\tau = 1 - \tilde{O}(1/s^2)$ ensures that, with probability at least $1 - e^{-\Omega(\log^2 k)}$:
    
    \textbf{(a) FixMatch.} At the end of Phase I, which runs for $T_1 = \polyk/\eta$ iterations,  
    \begin{equation} \label{phaseI}
        \setlength{\abovedisplayskip}{3pt}
        \setlength{\belowdisplayskip}{3pt}
        \setlength{\abovedisplayshortskip}{3pt}
        \setlength{\belowdisplayshortskip}{3pt}
        \Phi_{i,l}^{(T_1)} \geq \Omega(\log k),  \quad \Phi_{i,3-l}^{(T_1)} \leq 1/\polylogk, \quad \forall i \in [k], \exists l \in [2].
    \end{equation}
    After Phase II, which runs for another $T_2 = \polyk/\eta$ iterations,
    \begin{equation} \label{phaseII}
        \setlength{\abovedisplayskip}{3pt}
        \setlength{\belowdisplayskip}{3pt}
        \setlength{\abovedisplayshortskip}{3pt}
        \setlength{\belowdisplayshortskip}{3pt}
        \Phi_{i,l}^{(T_1+T_2)} \geq \Omega(\log k), \quad \forall i \in[k], \forall l \in [2].
    \end{equation}
    
    \textbf{(b) Supervised Learning.} After $T \geq \polyk/\eta$ iterations, Eq.~\eqref{phaseI} always holds.
    \end{theorem}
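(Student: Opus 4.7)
The plan is to decouple FixMatch's feature dynamics into two phases separated by the first iteration at which the model's confidence on weakly-augmented unlabeled samples crosses the threshold $\tau$. Before that moment (Phase I), the indicator $\bbI_{\{\logit_b\geq \calT_t\}}$ in Eq.~\eqref{ulb_loss} is effectively zero, so the gradient dynamics reduce to those of pure supervised learning. After that moment (Phase II), the consistency term activates, and the strong augmentation's probabilistic patch removal (Assumption~\ref{assum2}) supplies the signal needed to grow the remaining, untrained features.

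For Phase I, I would piggyback on \citet{allen-zhu2023towards}. Under the Gaussian initialization of Assumption~\ref{assum1}(b), for each class $i$ the random inner products $\langle w_{i,r}^{(0)}, v_{i,1}\rangle$ and $\langle w_{i,r}^{(0)}, v_{i,2}\rangle$ are independent Gaussians with standard deviation $\sigma_0$; with high probability one feature $v_{i,l^*}$ enjoys an initial advantage of order $\sigma_0/\polylogk$ over the other. The smoothed activation $\trelu$ of degree $q\geq 3$ makes the supervised-loss gradient on $\Phi_{i,l}^{(t)}$ scale like $(\Phi_{i,l}^{(t)})^{q-1}$, a classical tensor-power-method amplification. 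Tracking the coupled dynamics of $\Phi_{i,1}^{(t)}$ and $\Phi_{i,2}^{(t)}$, I would show that the winning feature saturates at $\Omega(\log k)$ within $T_1=\polyk/\eta$ steps while the loser remains below $1/\polylogk$; this yields Eq.~\eqref{phaseI}. I would then check that multi-view samples satisfy $\logit_y(F^{(T_1)},X)\geq \tau = 1-\tilde O(1/s^2)$, marking the transition to Phase II. The same argument with $L_u^{(t)}\equiv 0$ immediately gives part (b): supervised learning never leaves Phase I, so $\Phi_{i,3-l^*}^{(t)}$ never escapes the region $[0,1/\polylogk]$.

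For Phase II, the crucial event is the strong augmentation of a multi-view unlabeled sample $(X,i)\in\calZ_{u,m}$ erasing the already-learned feature $v_{i,l^*}$. By Eq.~\eqref{eq:strong_aug_def}, this occurs with constant probability $\pi_1\pi_2$ or $(1-\pi_1)\pi_2$. The weakly augmented copy still contains $v_{i,l^*}$, so the pseudo-label equals $y=i$ and passes the threshold, but the strongly augmented copy now presents only the untrained feature $v_{i,3-l^*}$ plus noise. The resulting cross-entropy gradient with respect to $w_{i,r}$ has a component along $v_{i,3-l^*}$ of magnitude $(1-\logit_i)\,\trelu'(\langle w_{i,r}, z_p v_{i,3-l^*}\rangle)\, z_p$, which is precisely the signal missing in Phase I. Averaging over the Bernoulli randomness in Assumption~\ref{assum2} and the unlabeled distribution, this gradient fires a constant fraction of the time; integrating over $T_2=\polyk/\eta$ steps pushes $\Phi_{i,3-l^*}^{(T_1+T_2)}$ from $1/\polylogk$ up to $\Omega(\log k)$, yielding Eq.~\eqref{phaseII}. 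The complementary event (augmentation erasing $v_{i,3-l^*}$ instead) only reinforces $v_{i,l^*}$, so its $\Omega(\log k)$ lower bound is preserved.

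The main obstacle will be closing the Phase II induction, because four properties must be maintained simultaneously across $T_2$ iterations: (i) every multi-view weak view keeps producing the correct pseudo-label above $\tau$, so the consistency loss only ever fires in the right direction; (ii) single-view unlabeled samples, which may still be misclassified during Phase II, are rare enough ($\mu = 1/\polyk$) that their occasional wrong pseudo-labels cannot undo the progress on $\Phi_{i,3-l^*}$; (iii) the cross-class correlations $\langle w_{i,r}, v_{j,l}\rangle$ with $j\neq i$ stay at $\tilde O(\sigma_0)$, so they contribute no spurious activations; and (iv) $\Phi_{i,l^*}$ does not collapse while $\Phi_{i,3-l^*}$ grows. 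I would bundle these into a joint induction hypothesis and close it using concentration over $\calZ_u$ (leveraging $N_u=|\calZ_l|\cdot\polyk$ from Assumption~\ref{assum1}(a)) together with the smoothness of $\trelu$, which keeps gradients well-controlled throughout.
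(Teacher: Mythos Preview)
Your proposal is correct and tracks the paper's proof closely: Phase I is inherited verbatim from \citet{allen-zhu2023towards}, and Phase II is closed via a joint induction hypothesis controlling precisely the four quantities you name (pseudo-label correctness on multi-view weak views, negligible single-view contribution, off-diagonal correlations $\tilde O(\sigma_0)$, and non-collapse of $\Phi_{i,l^*}$). The only differences are organizational: the paper first proves the deterministic SA-FixMatch case and then reduces FixMatch to it by absorbing the Bernoulli probabilities $\pi_1\pi_2,(1-\pi_1)\pi_2$ into the unlabeled sample-size requirement, and it splits Phase II into an initial tensor-power sub-phase of length $T_0=\tilde\Theta(k/(\eta\sigma_0^{q-2}))$ bringing $\Phi_{i,3-l^*}$ to $\Theta(1)$, followed by a telescoping bound $\sum_{t}\bbE[1-\logit_y(F^{(t)},\calA(X))]\le\tilde O(k/\eta)$ that simultaneously controls all error accumulations and delivers the final $\Omega(\log k)$.
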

	See its proof in Appendix~\ref{app:fixmatch}.  
	Theorem~\ref{thm:feature}(a) indicates  that Phase I in  FixMatch continues for $T_1= {\polyk/\eta}$ iterations. During this phase, the network learns only one of the two semantic features per class. Specifically,  in Eq.~\eqref{phaseI}, for any class $i \in[k]$, there exists an index \(l\in [2]\) so that the correlation score \(\Phi_{i,l}^{(T_1)}\) exceeds \(\Omega(\log k)\), showing feature \(v_{i,l}\) is captured; and the score \(\Phi_{i,3-l}^{(T_1)}\) remains low,  indicating failure of learning  \(v_{i,3-l}\).
    Then we  analyze  classification performance when Eq.~\eqref{phaseI} holds.
    \begin{corollary}\label{inference} Under the same conditions as Theorem~\ref{thm:feature}. Assume Eq.~\eqref{phaseI} holds for the trained network \(F^{(T)}\). For any sample \(X\) from class \(i\) containing the feature \(v_{i,l}\), the network \(F^{(T)}\) can correctly predict label \(i\). Conversely, if \(X\) contains only the feature \(v_{i,3-l}\), \(F^{(T)}\) would misclassify \(X\). 
    \end{corollary}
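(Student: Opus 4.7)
The plan is to compute $F_j^{(T)}(X)$ for every $j\in[k]$ via a patch-wise decomposition and then compare $F_i^{(T)}(X)$ with $\max_{j\neq i}F_j^{(T)}(X)$. Writing $F_j^{(T)}(X)=\sum_{r,p}\trelu\bigl(\langle w_{j,r}^{(T)},x_p\rangle\bigr)$ and using $x_p=z_p v+(\text{noise})$ together with the orthogonality $v_{i,l}\perp v_{i',l'}$ for $(i,l)\neq(i',l')$, only patches whose underlying feature $v$ coincides with some $v_{j,l'}$ can contribute substantially to $F_j^{(T)}(X)$. The cumulative pure-noise contribution is $o(\log k)$ with probability at least $1-e^{-\Omega(\log^2 k)}$ by Gaussian concentration on the trained kernels combined with the polynomial suppression of $\trelu$ at small inputs (both mechanisms are used throughout the appendix). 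In the linearized regime of $\trelu$, the surviving semantic contribution aggregates, up to an $o(\log k)$ error, to $\sum_{l'\in[2]}Z_{j,l'}(X)\cdot\Phi_{j,l'}^{(T)}$, where $Z_{j,l'}(X):=\sum_{p\in\calP_{v_{j,l'}}(X)}z_p$ with the convention $Z_{j,l'}(X)=0$ if $v_{j,l'}\notin\calV(X)$.

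For the first claim, the hypothesis that $X$ contains $v_{i,l}$ means $Z_{i,l}(X)\geq 1$ (either $X$ is multi-view, or $X$ is single-view with $\hat l=l$ in Def.~\ref{def:multi-view}). Combined with $\Phi_{i,l}^{(T)}\geq\Omega(\log k)$ from \eqref{phaseI}, this gives $F_i^{(T)}(X)\geq\Omega(\log k)-o(\log k)$. For any $j\neq i$, the feature $v_{j,l'}$ can appear in $X$ only as a noisy feature, so Def.~\ref{def:multi-view} enforces $Z_{j,l'}(X)\leq 0.4$ (multi-view) or $Z_{j,l'}(X)\leq 1/\polylogk$ (single-view). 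Combined with the training-time upper bound $\Phi_{j,l'}^{(T)}\leq \tilde O(\log k)$ maintained throughout the proof of Theorem~\ref{thm:feature}, we obtain $F_j^{(T)}(X)\leq 0.4\cdot\tilde O(\log k)+o(\log k)$. The coefficient gap $Z_{i,l}(X)\geq 1$ vs.\ $Z_{j,l'}(X)\leq 0.4$, after matching constants in the $\Phi$-estimates, guarantees $F_i^{(T)}(X)-F_j^{(T)}(X)\geq\Omega(\log k)$ for every $j\neq i$, so $i=\argmax_j F_j^{(T)}(X)$ and $X$ is correctly classified.

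For the converse claim, $X$ is necessarily single-view with main feature $v_{i,3-l}$ (so $\hat l=3-l$): then $Z_{i,3-l}(X)\in[1,O(1)]$ while $\Phi_{i,3-l}^{(T)}\leq 1/\polylogk$, and $Z_{i,l}(X)\in[\rho,O(\rho)]$ with $\rho=k^{-0.01}$ while $\Phi_{i,l}^{(T)}\leq\tilde O(\log k)$; hence $F_i^{(T)}(X)\leq O(1/\polylogk)+k^{-0.01}\cdot\tilde O(\log k)=O(1/\polylogk)$. Each of the $k-1$ other classes contributes each of its two features to $\calV'$ independently with probability $s/k=\polylogk/k$, so with probability at least $1-e^{-\Omega(\log^2 k)}$ there exists $j^*\neq i$ whose learned feature $v_{j^*,l^*}$ (with $l^*$ the index from \eqref{phaseI} for class $j^*$) lies in $\calV(X)$. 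For this $j^*$, $Z_{j^*,l^*}(X)=1/\polylogk$ and $\Phi_{j^*,l^*}^{(T)}\geq\Omega(\log k)$, so $F_{j^*}^{(T)}(X)\geq\Omega(\log k)/\polylogk$, which strictly exceeds $F_i^{(T)}(X)$ once the $\polylogk$ exponents are matched. Thus $\argmax_j F_j^{(T)}(X)\neq i$ and $X$ is misclassified.

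The main obstacle is this converse case, because both $F_i^{(T)}(X)$ and the competing $F_{j^*}^{(T)}(X)$ are small quantities of order $\log k/\polylogk$ or smaller; producing a strict gap between them requires matched two-sided bounds on $\Phi_{i,3-l}^{(T)}$ (upper) and $\Phi_{j^*,l^*}^{(T)}$ (lower) with carefully tracked $\polylogk$ exponents, together with a concentration estimate over the independent noisy-feature inclusions to ensure that some strongly-learned feature of a wrong class survives in $\calV(X)$. The correct-classification case, by contrast, follows almost directly from \eqref{phaseI} and the linearization of $\trelu$.
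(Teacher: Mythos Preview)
Your approach—linearizing via the function approximation $F_j^{(T)}(X)\approx\sum_{l'}Z_{j,l'}(X)\,\Phi_{j,l'}^{(T)}$ and then comparing scores—is exactly what the paper does (see Claim~\ref{claim:func} and Claim~\ref{app:inference}). The single-view cases in both directions line up with the paper's argument, and you correctly flag the converse as the delicate step, requiring both a probabilistic argument that some competitor's \emph{learned} feature lands in $\calV(X)$ and matched $\polylogk$ exponents in the two-sided bounds.

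The one genuine gap is in the multi-view correct-classification case. From \eqref{phaseI} together with the maintained upper bound $\Phi_{j,l'}^{(T)}\leq\tilde O(1)$ you get $F_i^{(T)}(X)\geq\Omega(\log k)$ and $F_j^{(T)}(X)\leq 0.4\cdot\tilde O(1)$. But $\tilde O(1)=\polylogk$ can be much larger than $\log k$, so $0.4\cdot\tilde O(1)$ is not automatically below $\Omega(\log k)$; the phrase ``after matching constants in the $\Phi$-estimates'' does not close this, because nothing in \eqref{phaseI} or in a generic $\tilde O(1)$ upper bound prevents $\Phi_{j,l'}^{(T)}$ from being, say, $(\log k)^{5}$ while $\Phi_{i,l}^{(T)}$ sits at $\log k$. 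What the paper actually invokes here is the cross-class balance $\Phi_i^{(T)}\geq 0.4\,\Phi_j^{(T)}+\Omega(\log k)$ for every pair $i,j$ (Claim~D.16 of \citet{allen-zhu2023towards}), which is established separately by tracking that all classes' learned correlations grow at comparable rates throughout training. With this balance in hand, $F_y(X)-F_j(X)\geq\Phi_y-0.4\,\Phi_j\geq\Omega(\log k)$ follows directly for multi-view $X$. This balance condition is not a consequence of \eqref{phaseI} alone and is the missing ingredient in your multi-view argument.
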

    
    See its proof in Appendix~\ref{appsec:func}. According to Corollary~\ref{inference}, after Phase I, the network can correctly classify multi-view samples, as each contains two semantic features and the network learns at least one of them. However, for single-view samples, which contain only one semantic feature, the classification accuracy is around 50\% since the network may not learn the specific feature present. Then by running another $T_2$ iterations in Phase II, Theorem~\ref{thm:feature}(a) shows that FixMatch enables the network to capture both semantic features $v_{i,1}$ and $v_{i,2}$ for each class $i \in [k]$. As indicated by Eq.~\eqref{phaseII}, all features achieve large correlation scores $\Phi_{i,l}^{(T_1+T_2)}$ for all $i \in [k], l \in [2]$. Therefore, by Corollary~\ref{inference}, the network trained by FixMatch can correctly classify all training and test samples with high probability, explaining the strong generalization performance observed in Theorem~\ref{main:thm1}.  
 
    For Phase II of FixMatch, the reason for it to learn the semantic features missed in Phase I is as follows. Having learned one semantic feature per class in Phase I, the network is capable of generating highly confident pseudo-labels for weakly-augmented multi-view samples. As the confidence threshold $\tau\!=\!1-\tilde{O}({1}/{s^2})$ is close to 1 (e.g., \(\tau\!=\! 0.95\)), it ensures the correctness of these pseudo-labels. Then, FixMatch uses these correct pseudo-labels to supervise the corresponding strongly-augmented samples via  consistency regularization. As shown in Eq. \eqref{eq:strong_aug_def}, strong augmentation \(\calA(\cdot)\) randomly removes the learned features in unlabeled multi-view samples with probabilities \(\pi_1\pi_2\) or \((1-\pi_1)\pi_2\), effectively converting these samples into single-view data containing the unlearned feature. Given the large volume of unlabeled data as specified in Assumption \ref{assum1}, these transformed single-view samples are significant in their size. Accordingly, they dominate the unsupervised loss,  since the rest samples containing the learned feature are already correctly classified by the network after Phase I and contribute minimally to the training loss. Consequently, the unsupervised loss enforces the network to learn the unlearned feature in Phase II.
    
    For SL, Theorem~\ref{thm:feature}(b) shows that with high probability, SL learns only one of the two features for each class, consistent with Phase I in FixMatch. By Corollary~\ref{inference}, SL can correctly classify multi-view data using the single learned feature but achieves only about 50\% test accuracy on single-view data due to the unlearned feature, aligning with Sec.~\ref{Testperformance}. In contrast, FixMatch achieves nearly 100\% test accuracy on both multi-view and single-view data, as it learns both semantic features for each class.
    
    The key difference between FixMatch and SL lies in the additional unsupervised loss, which essentially serves as consistency regularization. Beyond SSL, consistency regularization also plays a crucial role in other learning paradigms. For example, in self-supervised learning, it promotes the acquisition of richer semantic features during pretraining. Our theoretical analysis offers valuable insights into these broader settings, and we leave the exploration as future work.

\textbf{Comparison to Other SSL Analysis.}  This work differs from previous works from two key aspects. (a) Our work provides the first analysis for FixMatch-like SSLs on CNNs. In contrast,  many other  works \citep{he2022information,ctifrea2023can} analyze over-simplified models, e.g., linear learning models, that differs substantially from the highly nonlinear and non-convex  networks used  in  SSL.  Some other works \citep{rigollet2007generalization,singh2008unlabeled,van2020survey} view the model as a black-box function and do not reveal  insights to model design. (b) This work is also the first one to reveal the feature learning process of SSL,  deepening the understanding to SSL and unveiling  the intrinsic  reasons of the superiority of SSL over its SL counterpart.
 
\vspace{-0.6em}
\subsection{Semantic-Aware FixMatch}\label{sec-SA-FixMatch}
\vspace{-0.6em}

    {The analysis of feature learning Phase II in Sec.~\ref{sec-learning_process} shows  the crucial role of strong augmentation \(\calA(\cdot)\) via consistency regularization in Eq.~\eqref{ulb_loss} to learn the features missed in Phase I.
    However, according to Eq. \eqref{eq:strong_aug_def}, \(\calA(\cdot)\) only removes the learned feature with probabilities \(\pi_1\pi_2\) or \((1-\pi_1)\pi_2\). 
    This means given \(N_{u,m}\) unlabeled multi-view samples, \(\calA(\cdot)\) can generate at most $N_{\calA}=\max(\pi_1 \pi_2, (1-\pi_1)\pi_2)N_{u,m}$ samples containing only the missed features to enforce the network to learn them in Phase II. So FixMatch does not fully utilize unlabeled data in Phase II to learn comprehensive features,  especially when $\pi_2$ is small, which usually happens when semantics only occupy a small portion of the image so that  strong augmentation \(\calA(\cdot)\) like CutOut \citep{devries2017improved} and RandAugment has small probability to remove semantics (e.g., in ImageNet, see Appendix \ref{app-samples}).}

    Motivated by this finding, we propose Semantic-Aware FixMatch (SA-FixMatch) to improve the probability of removing learned features by replacing random CutOut in FixMatch's strong augmentation \(\calA(\cdot)\) with Semantic-Aware CutOut (SA-CutOut). Specifically, if the unlabeled sample \(X\) has highly confident pseudo-label, SA-CutOut first performs Grad-CAM~\citep{selvaraju2017grad} on the network $F$ to localize the learned semantic regions which contribute to the network's class prediction and can be regarded as features. Then for each semantic region, SA-CutOut finds its region center, i.e., the point with highest attention score in the region, and then averages  attention score within a $q \!\times \! q$ bounding box centered at this point (e.g.,  $q\!=\!16$).  Finally, SA-CutOut selects one semantic region with the highest average score for masking. Here masking semantic region with the highest score can enforce the network to learn the remaining features that are not well learned or missed in Phase I, as they will not be detected by Grad-CAM or detected with relatively low attention scores.
    In this way, SA-FixMatch can enhance vanilla FixMatch  to better use unlabeled data to learn comprehensive semantic features. For analysis, see our formulation of \(\calA(\cdot)\) with SA-CutOut in Appendix~\ref{app:safixmatch}.  

    \vspace{-0.1em}
    In Theorem~\ref{appthm:sa_fixmatchmatch} of Appendix~\ref{app:thm_state}, we prove that SA-FixMatch enjoys the same good training and test accuracy  in Theorem \ref{main:thm1}, but reduces the required number of  unlabeled data samples $N_u$ in vanilla FixMatch to $N_{c}\!=\!\max\{\pi_1\pi_2, (1-\pi_1)\pi_2\} N_u$, where  $N_u$  is given in Assumption~\ref{assum1}.
    This data efficiency stems from SA-FixMatch's use of SA-CutOut, which selectively removes the well-learned features, thereby compelling the network to focus on learning previously missed or unlearned features. Detailed theoretical discussions and proofs are presented in Appendix~\ref{app:safixmatch}, illustrating how SA-FixMatch outperforms vanilla FixMatch in terms of data efficiency and better test performance.
    
    \vspace{-0.1em}
    Moreover, our analysis of SA-FixMatch remains valid even when the labeled and unlabeled datasets contain the same images, as SA-CutOut deterministically removes the learned features. This ensures the network continues to learn new semantic features in Phase II even with the same images as unlabeled data. Experiments in Sec.~\ref{sec-exp_ablation} further confirm that SA-FixMatch outperforms FixMatch and SL in this setting, validating the effectiveness of SA-CutOut in enhancing feature learning.
    
    \vspace{-0.1em}
    As discussed in Sec.~\ref{sec-setup_fixmatch}, SoTA deep SSLs, including FlexMatch~\citep{zhang2021flexmatch}, FreeMatch~\citep{wang2022freematch}, Dash~\citep{xu2021dash}, and SoftMatch~\citep{chen2023softmatch}, often build upon FixMatch, and only modify the confidence threshold  \(\calT_t\) in Eq.~\eqref{ulb_loss}. Hence, SA-CutOut is also applicable to these FixMatch-like SSLs to enhance performance. Experimental results in Sec.~\ref{sec-exp_safix} validates the effectiveness and compatibility of SA-CutOut.

\vspace{-0.5em}
\section{Experiments}\label{sec-experiment}
\vspace{-0.5em}
To corroborate our theoretical results, we evaluate SL, FixMatch, and SA-FixMatch on CIFAR-100~\citep{krizhevsky2009learning}, STL-10~\citep{coates2011analysis}, Imagewoof~\citep{howard2020fastai}, and ImageNet~\citep{deng2009imagenet}. Following standard SSL evaluation protocols~\citep{sohn2020fixmatch, zhang2021flexmatch, usb2022}, we use WRN-28-8~\citep{zagoruyko2016wide} for CIFAR-100, WRN-37-2~\citep{zhou2020time} for STL-10 and Imagewoof, and ResNet-50~\citep{he2016deep} for ImageNet. We also apply SA-CutOut to other FixMatch-like SSL methods and compare their performance against the originals. All experiments are repeated three times, reporting the mean and standard deviation. Further experimental details are provided in Appendix~\ref{app:data_summary} and \ref{app_hyper}.

 \vspace{-0.7em}
 \subsection{Classification Results} \label{sec-exp_test}
 \vspace{-0.5em} 
    Here we evaluate the generalization performance of SL, FixMatch, and SA-FixMatch under varying amounts of labeled data. Following standard SSL benchmarks \citep{sohn2020fixmatch,zhang2021flexmatch,chen2023softmatch}, we use the entire training dataset as unlabeled dataset.

    Table~\ref{acc-table} shows that on STL-10, FixMatch and SA-FixMatch outperform SL by over 28\% in test accuracy across all settings. Similar substantial improvements are observed on other datasets, such as 13\%+ on CIFAR-100 and Imagewoof, and 6\%+ on ImageNet. These results highlight the superiority of SSL methods over conventional SL and align with our theoretical findings in Sec.~\ref{Testperformance}.

    Meanwhile, Table~\ref{acc-table} shows that SA-FixMatch outperforms vanilla FixMatch across all datasets. On Imagewoof, it improves average test accuracy by over 1.5\%, while on ImageNet, it achieves a 1.38\% gain. On CIFAR-100 and STL-10, SA-FixMatch also consistently surpasses FixMatch, though with a smaller margin. This variation arises because, in CIFAR-100 and STL-10, the semantic subject occupies most of the image (see Appendix~\ref{app-samples}), allowing a random square mask in CutOut to effectively remove partial semantic features with high probability, producing similar masking effects as SA-CutOut. However, reducing the CutOut mask size lowers the likelihood of masking semantic features, leading to performance degradation (Table~\ref{tab:patch_cifar100}). In contrast, on Imagewoof and ImageNet, where the semantic subject occupies less than a quarter of the image (see Appendix~\ref{app-samples}), a random square mask in CutOut is less likely to remove semantic features, making SA-CutOut significantly more effective and resulting in SA-FixMatch achieving much better test performance than FixMatch.

\begin{table}[t]
\centering
\setlength{\tabcolsep}{2.5pt}
\caption{Comparison of Test Accuracy (\%) using the entire training dataset as unlabeled data.}
\label{acc-table}
\vspace{0.2em}
\resizebox{\columnwidth}{!}{
\begin{scriptsize}
    \begin{tabular}{l|ccc|ccc|ccc|c}
    \toprule
    Dataset & \multicolumn{3}{c|}{CIFAR-100} & \multicolumn{3}{c|}{STL-10} & \multicolumn{3}{c|}{Imagewoof} & ImageNet \\
    Label Amount & 400 & 2500 & 10000 & 40 & 250 & 1000 & 250 & 1000 & 2000 & 100K \\
    \midrule
    SL & 11.45 {\tiny $\pm$ 0.12} & 40.45 {\tiny $\pm$ 0.50} & 63.77 {\tiny $\pm$ 0.29} & 23.61 {\tiny $\pm$ 1.62} & 38.83 {\tiny $\pm$ 1.12} & 64.08 {\tiny $\pm$ 0.47} & 25.94 {\tiny $\pm$ 1.54} & 42.04 {\tiny $\pm$ 0.90} & 60.77 {\tiny $\pm$ 1.04} & 44.62 {\tiny $\pm$ 1.16} \\
    FixMatch & 55.16 {\tiny $\pm$ 0.63} & 71.36 {\tiny $\pm$ 0.44} & 77.25 {\tiny $\pm$ 0.22} & 70.00 {\tiny $\pm$ 4.02} & 88.73 {\tiny $\pm$ 0.92} & 93.45 {\tiny $\pm$ 0.19} & 43.00 {\tiny $\pm$ 1.46} & 64.91 {\tiny $\pm$ 1.18} & 74.05 {\tiny $\pm$ 0.15} & 50.80 {\tiny $\pm$ 0.73} \\
    SA-FixMatch & 55.57 {\tiny $\pm$ 0.43} & 72.12 {\tiny $\pm$ 0.20} & 77.46 {\tiny $\pm$ 0.16} & 71.81 {\tiny $\pm$ 4.23} & 89.45 {\tiny $\pm$ 1.19} & 94.04 {\tiny $\pm$ 0.19} & 46.73 {\tiny $\pm$ 1.36} & 67.76 {\tiny $\pm$ 1.29} & 75.62 {\tiny $\pm$ 0.13} & 52.18 {\tiny $\pm$ 0.32} \\
    \bottomrule
    \end{tabular}
\end{scriptsize}
}
\end{table}

The superior test accuracy of SA-FixMatch over FixMatch aligns with our theoretical analysis in Sec.~\ref{sec-SA-FixMatch}. To achieve strong test performance in Theorem~\ref{main:thm1}, Phase II of SSL must effectively remove well-learned features to enforce learning of missed semantic features from Phase I. While FixMatch relies on CutOut to randomly mask learned features, SA-FixMatch consistently masks them using SA-CutOut (Sec.~\ref{sec-SA-FixMatch}). As a result, with a fixed unlabeled dataset size, SA-FixMatch utilizes unlabeled data more effectively for feature learning, leading to better test performance.
   
\vspace{-0.5em}	
\subsection{Semantic Feature Learning} \label{sec-exp_feature}
\vspace{-0.5em}
	To visualize the semantic features learned by  networks trained by SL, FixMatch, and SA-FixMatch,  we use Grad-CAM \citep{selvaraju2017grad}  to highlight  regions of input images that contribute to the model's class-specific predictions. For SL, FixMatch, and SA-FixMatch, we  follow the default setting of Grad-CAM, and apply it to the last convolutional layer of the WRN-28-8 network on CIFAR-100. 

\begin{figure*}[t]
\begin{center}
    \setlength{\tabcolsep}{0.0pt}  
    \scalebox{0.9}{\begin{tabular}{ccccccccccccccc} 

        \includegraphics[width=0.08\linewidth]{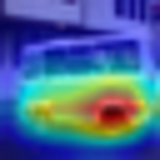} &
            \includegraphics[width=0.08\linewidth]{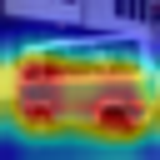}&
            \includegraphics[width=0.08\linewidth]{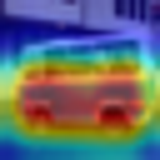}&$\qquad$ &

            \includegraphics[width=0.08\linewidth]{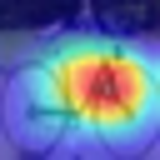}&
            \includegraphics[width=0.08\linewidth]{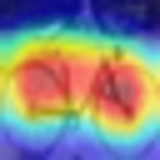}&
            \includegraphics[width=0.08\linewidth]{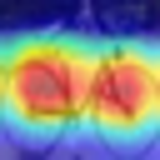}&$\qquad$ &

        \includegraphics[width=0.08\linewidth]{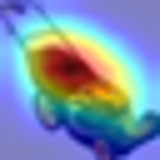}&
            \includegraphics[width=0.08\linewidth]{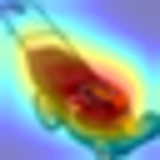}&
            \includegraphics[width=0.08\linewidth]{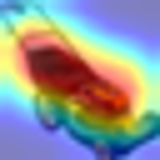}&$\qquad$ &
            
                \includegraphics[width=0.08\linewidth]{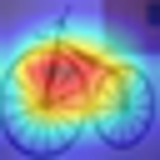}&
            \includegraphics[width=0.08\linewidth]{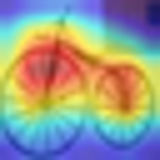}&
            \includegraphics[width=0.08\linewidth]{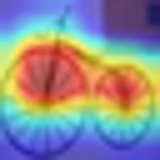}
            \vspace{-0.4em}\\
            
            \includegraphics[width=0.08\linewidth]{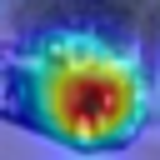}&
            \includegraphics[width=0.08\linewidth]{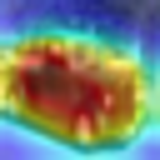}&
            \includegraphics[width=0.08\linewidth]{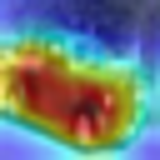}&$\qquad$ &
            
            \includegraphics[width=0.08\linewidth]{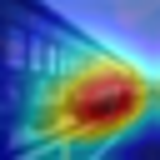} &
            \includegraphics[width=0.08\linewidth]{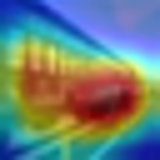}&
            \includegraphics[width=0.08\linewidth]{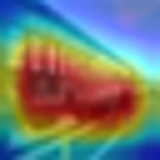}&$\qquad$ &
            
            \includegraphics[width=0.08\linewidth]{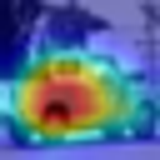}&
            \includegraphics[width=0.08\linewidth]{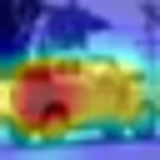}&
            \includegraphics[width=0.08\linewidth]{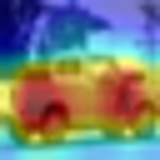}&$\qquad$ &
            
            \includegraphics[width=0.08\linewidth]{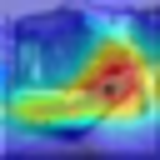}&
            \includegraphics[width=0.08\linewidth]{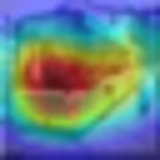}&
            \includegraphics[width=0.08\linewidth]{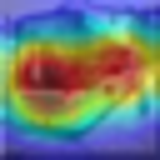}
            \vspace{-0.4em}\\
    \end{tabular}}
\end{center}
 \vspace{-0.5em}
\caption{Visualization of WRN-28-8 via Grad-CAM on CIFAR-100. Each group of three images corresponds to models trained with SL (left), FixMatch (middle), and SA-FixMatch (right).}
\label{fig:patch_semantic}
\vspace{-0.5em}
\end{figure*}
	
	Figure~\ref{fig:patch_semantic} shows that the  network trained by SL often captures a single semantic feature since Grad-CAM only localizes one small image region, e.g., bicycle front wheel.   Differently, networks trained by FixMatch can often grab multiple  features for some classes, e.g.,  bicycle front and back wheels, but still misses some features for certain classes, e.g., bus compartment.  By comparison, networks trained by SA-FixMatch  reveals better semantic  feature learning performance, since it often captures multiple semantic features, e.g., bicycle front and back wheels, bus front and compartment. The reason behind these phenomena is that as theoretically analyzed in Sec.~\ref{sec-learning_process}, for classes which have multiple semantic features, SL can only learn a single semantic feature,  while FixMatch and SA-FixMatch are capable of learning all the semantic features via the two-phase (supervised and unsupervised) learning process. Moreover, as shown in Sec.~\ref{sec-SA-FixMatch}, compared with FixMatch, SA-FixMatch can more effectively use unlabeled data as it better removes well-learned features for enforcing network to learn missed features in data.  
	Thus, SA-FixMatch is more likely to capture all semantic features of the data in practice with a fix number of unlabeled training data as observed in Figure \ref{fig:patch_semantic}.

    \vspace{-0.5em}
    \subsection{SA-CutOut on FixMatch Variants}\label{sec-exp_safix}
    \vspace{-0.5em}
    SA-CutOut is compatible with other deep SSL methods, such as FlexMatch \citep{zhang2021flexmatch}, FreeMatch \citep{wang2022freematch}, Dash \citep{xu2021dash}, and SoftMatch \citep{chen2023softmatch}, since as discussed in Sec.~\ref{sec-setup_fixmatch}, the main difference between these deep SSL methods and FixMatch is their choice of confidence threshold \(\calT_t\). Here we apply SA-CutOut to these algorithms and compare their test accuracies with the original methods on STL-10 and CIFAR-100 dataset. From Table~\ref{like-table}, one can observe that on STL-10, application of SA-CutOut increases the test accuracies of FlexMatch and FreeMatch by 2.6\%+, and the test accuracies of Dash and SoftMatch by 5.4\%+. On CIFAR-100, SA-CutOut increases the test accuracies of FreeMatch and Dash by 0.65\%+, SoftMatch and FlexMatch by 0.5\%+.
    This validates our analysis in Sec.~\ref{sec-SA-FixMatch} that SA-CutOut can more effectively use unlabeled data to learn comprehensive semantic features and thereby achieve higher test accuracy. 
 
 \begin{table}[ht]
 \setlength{\tabcolsep}{4.5pt}
    \caption{Comparison of Test accuracy (\%) of SSL algorithms with CutOut and SA-CutOut on STL-10 with 40 labeled data and CIFAR-100 with 400 labeled data.}
    \label{like-table}
    \begin{center}
        \begin{scriptsize}
            \begin{tabular}{l|cccc|cccc}
                \toprule
                Dataset & \multicolumn{4}{c|}{STL-10} & \multicolumn{4}{c}{CIFAR-100} \\
                \midrule
                Algorithm & FlexMatch & FreeMatch & Dash & SoftMatch & FlexMatch & FreeMatch & Dash & SoftMatch \\
                \midrule
                CutOut  & 72.13 {\tiny $\pm$ 5.66} & 75.29 {\tiny $\pm$ 1.29} & 67.51 {\tiny $\pm$ 1.47}& 78.55 {\tiny $\pm$ 2.90} & 59.65 {\tiny $\pm$ 1.14} & 58.44 {\tiny $\pm$ 1.92} & 48.56 {\tiny $\pm$ 2.16} & 60.16 {\tiny $\pm$ 2.22}\\
                SA-CutOut & 75.91 {\tiny $\pm$ 5.59} & 77.91 {\tiny $\pm$ 2.01} & 78.41 {\tiny $\pm$ 1.91} & 84.04 {\tiny $\pm$ 4.67} & 60.16 {\tiny $\pm$ 1.06} & 59.12 {\tiny $\pm$ 1.69} & 50.24 {\tiny $\pm$ 1.82} & 60.69 {\tiny $\pm$ 1.95}\\
                \bottomrule
            \end{tabular}
        \end{scriptsize}
    \end{center}
    \vspace{-1em}
\end{table}

\begin{table*}[t]
\centering
\begin{minipage}{0.55\textwidth}
    \centering
    \setlength{\tabcolsep}{4pt}
    \caption{Comparison of Test Accuracy (\%) using the same training dataset for (SA-)FixMatch and SL.}
    \label{same-acc-table}
    \begin{scriptsize}
        \begin{tabular}{l|ccc|ccc|c}
            \toprule
            Dataset & \multicolumn{3}{c|}{STL-10} & \multicolumn{3}{c|}{CIFAR-100} & ImageNet \\
            Data Amount & 40 & 250 & 1000 & 400 & 2500 & 10000 & 100K \\
            \midrule
            SL & 19.93 & 44.06 & 67.29 & 9.87 & 40.98 & 63.48 & 41.82 \\
            FixMatch & 38.88 & 64.70 & 79.15 & 18.58 & 47.20 & 67.94 & 43.34 \\
            SA-FixMatch & 40.25 & 65.85 & 79.74 & 19.72 & 47.71 & 68.30 & 44.88 \\
            \bottomrule
        \end{tabular}
    \end{scriptsize}
\end{minipage}
\hfill
\begin{minipage}{0.4\textwidth}
    \centering
    \caption{Effect of (SA-)CutOut mask size on test accuracy (\%) on CIFAR-100 with 400 labeled data.}
    \label{tab:patch_cifar100}
    \vspace{0.5em}
    \begin{scriptsize}
        \begin{tabular}{l|cccc}
            \toprule
            Mask Size & 4 & 8 & 12 & 16 \\
            \midrule
            FixMatch & 48.65 & 50.11 & 53.48 & 55.23 \\
            SA-FixMatch & 52.71 & 52.95 & 55.37 & 55.78 \\
            \bottomrule
        \end{tabular}
    \end{scriptsize}
\end{minipage}
\vspace{-0.5em}
\end{table*}

    \vspace{-0.5em}
    \subsection{Ablation Study} \label{sec-exp_ablation}
    \vspace{-0.5em}
    \paragraph{Same Training Dataset}  
    We evaluate (SA-)FixMatch and SL under the same training dataset setting, as described in Sec.~\ref{sec-SA-FixMatch}. As shown in \cref{same-acc-table}, SA-FixMatch significantly outperforms SL, reaffirming the superiority of SSL over SL and further validating our theoretical insights in Sec.~\ref{sec-SA-FixMatch}. Moreover, SA-FixMatch surpasses FixMatch, demonstrating the effectiveness of our proposed method.
    \paragraph{Maske Size} Here we investigate the effect of the mask size in (SA-)CutOut on the performance of (SA-)FixMatch. 
	For CIFAR-100 whose image size is $32\times 32$,   we set the mask size in (SA-)CutOut as 4, 8, 12, and 16 to train the WRN-28-8 network. Table \ref{tab:patch_cifar100} shows that 1) as mask size grows, both the test accuracy of FixMatch and SA-FixMatch improves; 2) when mask size is small, SA-FixMatch makes significant improvement over FixMatch,  e.g., 4\%+  when using a mask size of  4; 3)  as mask size grows, the improvement of SA-FixMatch over  FixMatch becomes reduced, e.g., 0.55\% when using a mask size of 16. 
{For 1), as mask size in (SA-)CutOut increases, the learned features in the image are more likely to be removed, which is the key for (SA-)FixMatch to learn comprehensive semantics in Phase II as analyzed in Sec.~\ref{sec-learning_process}. This explains the better performance of FixMatch and SA-FixMatch when their mask sizes increase.  For 2), when using small masks, a random mask in CutOut has much lower probability to remove learned features compared with SA-CutOut. Thus, SA-FixMatch has much better performance than FixMatch. For 3), as mask size grows, a random mask in CutOut also has large probability to mask learned features in the image. This explains the reduced gap between SA-FixMatch and FixMatch.}

\vspace{-0.5em}
\section{Conclusion} \label{sec-conclusion}
\vspace{-0.5em}
By  examining the classical FixMatch, we first provide theoretical justifications for the superior test performance of SSL over SL on neural networks.  Then we  uncover the differences in the feature learning processes between FixMatch and SL, explaining their distinct test performances. Inspired by theoretical insights, a practical enhancement called SA-FixMatch is proposed and validated through experiments, showcasing the potential for our newly developed theoretical understanding to inform improved SSL methodologies.  Apart from FixMatch-like SSL, there are also other effective SSL frameworks whose analyses and comparisons are left as our future work.

\textbf{Limitations.} (a) Apart from FixMatch-like SSLs, we did not analyze other SSL frameworks, like MeanTeacher \citep{tarvainen2017mean} and MixMatch \citep{berthelot2019mixmatch}. However, current SoTA deep SSLs like FlexMatch, FreeMatch, Dash, and SoftMatch all follow the FixMatch framework, indicating the generalizability of our theoretical analysis on them. See details in Sec.~\ref{sec-setup_fixmatch} and Appendix~\ref{app:like}. (b) Due to limited GPU resources, we use small datasets, e.g. STL-10 and CIFAR-100, instead of large datasets like ImageNet to test SA-CutOut on other SoTA SSLs. Future work involves testing SA-CutOut on other SSLs methods  (other than FixMatch) and on  larger datasets.

\vspace{-0.5em}
\section*{Acknowledgments and Author Contributions}
\vspace{-0.5em}
J. Pan and V. Y. F. Tan are supported by a Singapore Ministry of Education (MOE) AcRF Tier 2 grant (A-8000423-00-00).
J. Li led the development of the theoretical framework, provided the rigorous proofs, and also led the experimental design and implementation. J. Pan contributed the initial sketch of the main proof. All authors discussed the results and contributed to the final manuscript.

\newpage

\bibliography{iclr2025_conference}
\bibliographystyle{iclr2025_conference}

\newpage
\appendix

\section{Theorem Statement}
\label{app:thm_state}

In this section, we formally state the relevant data assumptions and theorems. Building on the proof framework of~\cite{allen-zhu2023towards}, our results extend their findings from supervised learning (SL) to semi-supervised learning (SSL). To maintain consistency, we adopt their notation throughout our proof. Specifically, we follow their data distribution assumptions and extend their analysis from SL to SSL through a two-phase learning process.

To formally define the data distribution, we set global constant \(C_p\), sparsity parameter \(s = \polylogk\), feature noise parameter $\gamma=\frac{1}{\polyk}$, and random noise parameter $\sigma_p=\frac{1}{\sqrt{d} \operatorname{polylog}(k)}$ to control noises in data. Here, feature noise implies that a sample from class \(i\) primarily exhibits feature \(v_{i,l}\) (with \(l \in [2]\)), but also includes minor scaled features \(v_{j,l}\) (with \(j \neq i\)) from other classes. Each sample pair \((X, y)\) consists of the sample $X$, which is comprised of a set of $P=k^2$ patches $\{x_i \in\mathbb{R}^{d}\}_{i=1}^{P}$, and $y \in[k]$ as the class label. The following describes the data generation process.
\begin{definition}[data distributions for single-view $\calD_s$ and multi-view data $\calD_m$ ~\citep{allen-zhu2023towards}]
\label{app:def_data}
Data distribution $\calD$ consists of data from multi-view data $\calD_m$ with probability $1\!-\!\mu$ and from single-view data $\calD_s$ with probability $\mu=1/\polyk$. We define $(X,y)\!\sim \!\calD$ by randomly uniformly selecting a label $y\!\in\! [k]$ and generating data $X$ as follows.
		\begin{itemize}
			\item[1)]   Sample a set of noisy features $\calV'$ uniformly at random from $\{v_{i,1},v_{i,2}\}_{i\neq y}$ each with probability $s/k$.
			\item[2)]  Denote $\calV(X)=\calV'\cup \{v_{y,1},v_{y,2}\}$ as the set of feature vectors used in data $X$. 
			\item[3)]  For each $v\in\calV(X)$, pick $C_p$ disjoint patches in $[P]$ and denote it as $\calP_v(X)$ (the distribution of these patches can be arbitrary). We denote $\calP(X)=\cup_{v\in\calV(X)}\calP_v(X)$.
			\item[4)]  If $\calD=\calD_s$ is the single-view distribution, pick a value $\hat{l}=\hat{l}(X)\in [2]$ uniformly at random.
			\item[5)]  For each $p\in\calP_v(X)$ for some $v\in\calV(X)$, given feature noise $\alpha_{p,v'}\in[0,\gamma]$,  we set
			\begin{equation*}
						\setlength{\abovedisplayskip}{2pt}
				\setlength{\belowdisplayskip}{2pt}
				\setlength{\abovedisplayshortskip}{2pt}
				\setlength{\belowdisplayshortskip}{2pt}
				x_p=z_pv+\sum\nolimits_{v'\in\calV}\alpha_{p,v'}v'+\xi_p,
			\end{equation*}
			where  $\xi_p\in\calN(0,\sigma_p^2 \bI)$ is an independent random Gaussian noise. The coefficients $z_p\geq 0$ satisfy
			\begin{itemize}
				\item For ``multi-view'' data $(X,y)\in\calD_m$, when $v\in\{v_{y,1},v_{y,2}\}$, $\sum_{p\in\calP_v(X)}z_p\in[1,O(1)]$ and $\sum_{p\in\calP_v(X)}z_p^q\in[1,O(1)]$ for an integer $q\geq 3$, and the marginal distribution of $\sum_{p\in\calP_v(X)}z_p$ is left-close. When $v\in\calV(X)\setminus \{v_{y,1},v_{y,2}\}$, $\sum_{p\in\calP_v(X)}z_p\in[\Omega(1),0.4]$, and the marginal distribution of $\sum_{p\in\calP_v(X)}z_p$ is right-close.
			
				\item For ``single-view'' data $(X,y)\in\calD_s$, when $v=v_{y,\hat{l}}$, $\sum_{p\in\calP_v(X)}z_p\in[1,O(1)]$ for the integer $q\geq 3$. When $v=v_{y,3-\hat{l}}$, $\sum_{p\in\calP_v(X)}z_p\in [\rho,O(\rho)]$ (we set $\rho=k^{-0.01}$ for simplicity).  When $v\in\calV(X)\setminus \{v_{y,1},v_{y,2}\}$, $\sum_{p\in\calP_v(X)}z_p\in[\Omega(\Gamma),\Gamma]$, where \(\Gamma = 1/\polylogk\), and the marginal distribution of $\sum_{p\in\calP_v(X)}z_p$ is right-close.
            \end{itemize}
			\item[6)]  For each $p\in[P]\!\setminus\! \calP(X)$, with an independent random Gaussian noise $\xi_p\sim\calN(0,\frac{\gamma^2k^2}{d}\bI)$, 
						\begin{equation*}
				\setlength{\abovedisplayskip}{2pt}
				\setlength{\belowdisplayskip}{2pt}
				\setlength{\abovedisplayshortskip}{2pt}
				\setlength{\belowdisplayshortskip}{2pt}
				x_p=\sum\nolimits_{v'\in\calV}\alpha_{p,v'}v'+\xi_p,
			\end{equation*} 
			where each $\alpha_{p,v'}\in[0,\gamma]$ is the feature noise.
		\end{itemize}
    
\end{definition}

Based on the definition of data distribution $\calD$, we define the training dataset $\calZ$ as follows.

\begin{definition}
Assume the distribution \(\calD\) consists of samples from \(\calD_m\) \emph{w.p.} \(1-\mu\) and from \(\calD_s\) \emph{w.p.} \(\mu\). We are given \(N_l\) labeled training samples and \(N_u\) unlabeled training samples from \(\calD\), where typically \(N_u \gg N_l\). The training dataset is denoted as \(\calZ = \calZ_{l,m} \cup \calZ_{l,s} \cup \calZ_{u,m} \cup \calZ_{u,s}\), where \(\calZ_{l,m}\) and \(\calZ_{l,s}\) represent the multi-view and single-view labeled data, respectively, and \(\calZ_{u,m}\) and \(\calZ_{u,s}\) represent the multi-view and single-view unlabeled data, respectively. We denote \((X, y) \sim \calZ\) as a pair \((X, y)\) sampled uniformly at random from the empirical training dataset \(\calZ\).
\end{definition}

Then, we introduce the smoothed ReLU function $\trelu$~\citep{allen-zhu2023towards} in detail: for an integer $q \geq 3$ and a threshold $\varrho=\frac{1}{\operatorname{polylog}(k)}, \trelu(z)=0$ if $z \leq 0, \trelu(z)=\frac{z^q}{(q \varrho^{q-1})}$ if $z \in[0, \varrho]$ and $\trelu(z)=z-(1-\frac{1}{q}) \varrho$ if $z \geq \varrho$. This configuration ensures a linear relationship for large $z$ values while significantly reducing the impact of low-magnitude noises for small \(z\) values, thereby enhancing the separation of true features from noises.

We also introduce our assumption on FixMatch's strong augmentation \(\calA(\cdot)\), which is composed by CutOut \citep{devries2017improved} and RandAugment \citep{cubuk2020randaugment}. As discussed in Sec.~\ref{sec-setup_fixmatch} and Appendix~\ref{app:strong_aug}, we focus on its probabilistic feature removal effect.

\begin{assumption}\label{app:assum_strong}
    Suppose that for a given image, strong augmentation \(\calA(\cdot)\) randomly removes its semantic patches and noisy patches with probabilities \(\pi_2\) and \(1-\pi_2\), respectively. For a single-view image, the sole semantic feature is removed with probability \(\pi_2\). For a multi-view image, either of the two features, \(v_{i,1}\) or \(v_{i,2}\), is removed with probabilities \(\pi_1 \pi_2\) and \((1 - \pi_1) \pi_2\), respectively. We define the strong augmentation \(\calA(\cdot)\) for multi-view data as follows: for \(p \in [P]\),
    \begin{equation}\label{eq:app_cutout}
        \setlength{\abovedisplayskip}{3pt}
        \setlength{\belowdisplayskip}{3pt}
        \setlength{\abovedisplayshortskip}{3pt}
        \setlength{\belowdisplayshortskip}{3pt}
        \calA(x_p) \!=\! \begin{cases}
            \max(\epsilon_1, \epsilon_2) x_p,&\text{if }  v_{y,1} \text{ is in the patch } x_p, \\
            \max(1 - \epsilon_1, \epsilon_2) x_p,&\text{if }  v_{y,2} \text{ is in the patch } x_p, \\
            (1 - \epsilon_2) x_p,&\text{otherwise (noisy patch)},
        \end{cases}
    \end{equation}
    where \(\epsilon_1\) and \(\epsilon_2\) are independent Bernoulli random variables, each equal to 0 with probabilities \(\pi_1\) and \(\pi_2\), respectively.
    \end{assumption}
    
Here we use the "\(\max\)" function to ensure \(\epsilon_1\) is active when \(\epsilon_2=0\), which implies that \(\calA(\cdot)\) selects one feature to remove at a time. The reason behind this assumption is that as we can observe from Figure \ref{fig:gcam} and \ref{fig:patch_semantic}, different semantic features in a multi-view image are spatially distinct. Consequently, the likelihood of a square patch from random CutOut and transformations from RandAugment to remove both features is substantially lower than removing just one. To simplify our theoretical analysis, we therefore assume that \(\calA(\cdot)\) targets a single feature for removal in each instance.

Then we introduce the parameter assumption necessary to the proof. As we follow the proof framework of~\cite{allen-zhu2023towards}, the assumptions on most of the parameters are similar. 

\begin{parameter}
\label{assp:para}
    We assume that 
    \begin{itemize}
        \item $q\geq 3$ and $\sigma_0^{q-2}=1/k$, where $\sigma_0$ gives the initialization magnitude. 
        \item $\gamma\leq \tilde{O}(\frac{\sigma_0}{k})$ and $\gamma^q \leq \tilde{\Theta}(\frac{1}{k^{q-1}mP})$, where $\gamma$ controls the feature noise.
        \item The size of single-view labeled training data $N_{l,s} = \tilde{o}(k/\rho)$ and $N_{l,s}\leq \frac{k^2}{s} \rho ^{q-1}$. 
        \item $N_l\geq N_{l,s}\cdot \polyk$, $\eta T_1\geq N_l \cdot \polyk$, and $\sqrt{d}\geq \eta T_1\cdot \polyk$.
        \item The weight for unsupervised loss \(\lambda=1\) and the confidence threshold $\tau=1-\tilde{O}(\frac{1}{s^2})$.
        \item The number of unlabeled data for FixMatch $N_u \geq \eta T_2\cdot \polyk$ with $\eta T_2\ge \polyk$, and the ratio of single-view unlabeled data $\frac{N_{u,s}}{N_u}\leq \frac{k^2}{\eta s T_2}$.
    \end{itemize}
\end{parameter}
Here the first four parameter assumptions are followed from \citet{allen-zhu2023towards} for supervised learning Phase I, and the last two parameter assumptions are specific to the unsupervised loss Eq.~\eqref{ulb_loss} in learning Phase II. Define $\Phi_{i,l}^{(t)}: = \sum_{r\in[m]} [\langle w_{i,r}^{(t)}, v_{i,l}\rangle]^{+}$ and $\Phi_{i}^{(t)}:= \sum_{l\in[2]} \Phi_{i,l}^{(t)}$. We have the following theorem for vanilla FixMatch under CutOut:
\begin{theorem}[Peformance on FixMatch]
\label{appthm:fixmatch}
    For sufficiently large $k>0$, for every $m=\polylogk$, $\eta\leq\frac{1}{\polyk}$, setting $T=T_1+T_2$ with $T_1=\frac{\polyk}{\eta}$ and $T_2=\frac{\polyk}{\eta}$, when Parameter Assumption~\ref{assp:para} is satisfied, with probability at least $1-e^{-\Omega(\log^2k)}$, 
    \begin{itemize}
        \item (training accuracy is perfect) for every $(X,y)\in\calZ$:
        \begin{align*}
            \forall i\neq y: F_y^{(T)}(X)\geq F_i^{(T)}(X)+\Omega(\log k).
        \end{align*}
        \item (multi-view testing is good) for every $i,j\in[k]$, we have $\tilde{O}(1)\geq \Phi_i^{(T)}\geq 0.4 \Phi_j^{(T)}+\Omega(\log k)$, and thus 
        \begin{align*}
            \Pr_{(X,y)\in\calD_m}\left[ F_y^{(T)}(X)\geq \max_{j\neq y}F_j^{(T)}(X)+\Omega(\log k)\right]\geq 1-e^{-\Omega(\log^2k)}.
        \end{align*}
        \item (single-view testing is good) for every $i\in[k]$ and $l\in[2]$, we have $\Phi_{i,l}^{(T)}\geq \Omega(\log k)$, and thus 
        \begin{align*}
            \Pr_{(X,y)\in\calD_s} \left[ F_y^{(T)}(X)\geq \max_{j\neq y}F_j^{(T)}(X)+\Omega(\log k)\right]\geq 1-e^{-\Omega(\log^2k)}.
        \end{align*}
    \end{itemize}
\end{theorem}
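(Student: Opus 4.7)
The plan is to build on the supervised-learning analysis of \citet{allen-zhu2023towards} and extend it to the two-phase FixMatch dynamics: Phase I ($t \leq T_1$) essentially reproduces their SL analysis on the labeled subset $\calZ_l$, and Phase II ($T_1 < t \leq T_1+T_2$) is the genuinely new piece, where the consistency-regularization term $L_u^{(t)}$ drives the network to pick up the feature missed in Phase I. Throughout I would maintain an induction over a collection of correlations, the primary ones being $\Phi_{i,l}^{(t)} = \sum_r [\langle w_{i,r}^{(t)}, v_{i,l}\rangle]^+$, the cross-class correlations $\langle w_{j,r}^{(t)}, v_{i,l}\rangle$ for $j \neq i$, and the noise correlations $\langle w_{j,r}^{(t)}, \xi_p\rangle$, mirroring the bookkeeping of the SL proof.

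For Phase I, I would first argue that the unsupervised loss is effectively inactive: before one feature per class is captured, the softmax does not cross the threshold $\tau = 1-\tilde{O}(1/s^2)$ on weakly-augmented unlabeled inputs, so the indicator in Eq.~\eqref{ulb_loss} vanishes on almost all of $\calZ_u$ and the residual contribution can be absorbed into the error terms. The supervised dynamics then reduce to those analyzed by \citet{allen-zhu2023towards}: by the tensor-power-method-style amplification on $\trelu$ with exponent $q$, for each $i \in [k]$ exactly one $l(i) \in [2]$ wins the race and reaches $\Phi_{i,l(i)}^{(T_1)} \geq \Omega(\log k)$, while $\Phi_{i,3-l(i)}^{(T_1)} \leq 1/\polylogk$, giving Eq.~\eqref{phaseI}. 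I would also carry through the auxiliary invariants (off-diagonal and noise correlations stay small) that are needed to start Phase II.

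For Phase II, at $t = T_1$ a multi-view unlabeled sample yields logit gap $\Omega(\log k)$ on the weak view via the learned feature alone, so its pseudo-label equals the true label and the indicator is active. By Assumption~\ref{app:assum_strong}, the strong augmentation partitions such samples into three types: with probability $\max\{\pi_1,1-\pi_1\}\pi_2$ the learned feature $v_{y,l(y)}$ is removed, leaving an \emph{effectively single-view} sample carrying only the missed feature $v_{y,3-l(y)}$; with probability $\min\{\pi_1,1-\pi_1\}\pi_2$ the missed feature is removed; otherwise both remain. Only the first type contributes non-negligibly to $L_u^{(t)}$, because in the other two the learned feature $v_{y,l(y)}$ already saturates the cross-entropy by Phase I. On type-(i) samples the gradient of $-\log \logit_y$ with respect to $w_{y,r}$ is aligned with $z_p v_{y,3-l(y)}$, so the descent step in Eq.~\eqref{eqn:gd} drives up $\Phi_{y,3-l(y)}^{(t)}$ by the \emph{same} tensor-power mechanism as in Phase I, now fed by the polynomially-many effective samples guaranteed by Assumption~\ref{assum1}(a) and Parameter Assumption~\ref{assp:para}. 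Running this for $T_2 = \polyk/\eta$ iterations yields $\Phi_{i,3-l(i)}^{(T_1+T_2)} \geq \Omega(\log k)$ for every $i$, establishing Eq.~\eqref{phaseII}. The training and testing margins in Theorem~\ref{appthm:fixmatch} then follow from a direct computation: for any $(X,y)$ (training, multi-view test, or single-view test) whichever of $v_{y,1},v_{y,2}$ appears in $X$ produces $F_y^{(T)}(X) \geq \Omega(\log k)$ via $\trelu$, while for $j \neq y$ the invariant bounds give $F_j^{(T)}(X) = O(1)$, yielding the claimed gap $\Omega(\log k)$.

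The main obstacle is sustaining the Phase II induction against several coupled sources of drift. I need to show simultaneously that (a) pseudo-labels remain correct for essentially all multi-view unlabeled samples throughout Phase II, so that growth of $\Phi_{y,3-l(y)}$ and the occasional incorrect pseudo-label from $\calZ_{u,s}$ do not destabilize the confident predictions from Phase I; (b) the off-diagonal correlations $\langle w_{j,r}^{(t)}, v_{i,l}\rangle$ with $j \neq i$ remain $\tilde{O}(\sigma_0)$ despite the additional unsupervised updates, which requires that the type-(i) gradient be essentially orthogonal to $v_{j,l}$ up to the feature-noise budget $\gamma$; and (c) noise correlations $\langle w_{j,r}^{(t)}, \xi_p\rangle$ grow only sub-logarithmically, which uses $\sqrt{d} \geq \eta T \cdot \polyk$. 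Controlling these three quantities jointly under the state-dependent indicator in $L_u^{(t)}$ is the delicate extension of the tensor-power bookkeeping of \citet{allen-zhu2023towards} that the rest of the appendix would need to carry out.
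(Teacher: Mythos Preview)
Your proposal is essentially correct and mirrors the paper's two-phase strategy: Phase~I reduces to the supervised analysis of \citet{allen-zhu2023towards}, and Phase~II uses the strong augmentation to generate effectively-single-view samples carrying only the missed feature, with the same tensor-power growth mechanism and the same induction over diagonal/off-diagonal/noise correlations. The only organizational difference is that the paper first proves the result for SA-FixMatch (Appendix~\ref{app:safixmatch}), where the learned feature is \emph{deterministically} removed, and then in Appendix~\ref{app:fixmatch} reduces FixMatch to this case by observing that among the three sample types you identify, only the ``learned-feature-removed'' type has non-negligible loss, so FixMatch behaves like SA-FixMatch on a $\pi_1\pi_2$ or $(1-\pi_1)\pi_2$ fraction of $\tcalZ_{u,m}$; this buys a cleaner Phase~II induction (no Bernoulli bookkeeping inside the correlation estimates) at the cost of one extra reduction step, whereas your direct route avoids the detour but must carry the probabilities through the growth and error claims.

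Two small corrections worth flagging. First, the probability that the \emph{learned} feature is removed is $\pi_1\pi_2$ or $(1-\pi_1)\pi_2$ depending on whether $l(y)=1$ or $l(y)=2$, not uniformly $\max\{\pi_1,1-\pi_1\}\pi_2$; this matters when you tally the effective sample count for the $N_u$ requirement in Parameter Assumption~\ref{assp:para}. Second, your concern about ``incorrect pseudo-labels from $\calZ_{u,s}$'' is moot: single-view unlabeled samples whose sole feature is the \emph{unlearned} one do not cross the threshold $\tau$ at all during Phase~II (their $F_y$ stays $o(1)$ until $\Phi_{y,3-l(y)}$ grows), so they never enter $\tcalZ_u$; the only single-view samples that pass are those carrying the learned feature, and their pseudo-labels are correct.
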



For Semantic-Aware FixMatch (SA-FixMatch), we denote the number of unlabeled data in this case as $N_{c}$. Then we have the following assumption on $N_{c}$.
\begin{parameter}
\label{assp:para2}
    $N_{c}=\max\{ \pi_1\pi_2, (1-\pi_1)\pi_2\} N_u$.
\end{parameter}
Here \(\pi_1 \in (0,1)\) and \(\pi_2 \in (0,1)\) are the probabilities defined in Assumption \ref{app:assum_strong}, where \(\pi_2\) is typically small \((1/\polyk)\), as explained in Appendix \ref{app:cutinsl}. From Parameter Assumption \ref{assp:para2}, we observe that the requirement for the number of unlabeled samples in SA-FixMatch is significantly smaller compared to that in FixMatch.

Under Parameter Assumptions~\ref{assp:para} and~\ref{assp:para2}, SA-FixMatch achieves the same performance results as Theorem~\ref{appthm:fixmatch}, but with a reduced requirement for the number of unlabeled data, decreasing from \(N_u\) to \(N_c\). Thus, we state the following theorem regarding the performance of SA-FixMatch.

\begin{theorem} [Performance on SA-FixMatch]
\label{appthm:sa_fixmatchmatch}
    Under Parameter Assumption~\ref{assp:para} and~\ref{assp:para2}, SA-FixMatch can achieve the same training and test performance as FixMatch in Theorem~\ref{appthm:fixmatch}.
\end{theorem}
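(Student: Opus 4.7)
The plan is to lift the FixMatch analysis of Theorem~\ref{appthm:fixmatch} to SA-FixMatch by reinterpreting the strong augmentation as a (nearly) deterministic, rather than random, remover of the learned semantic feature. Phase I is driven entirely by the supervised loss on $\calZ_{l}$ and is independent of the form of $\calA(\cdot)$; hence the Phase~I end-state in Theorem~\ref{thm:feature}(a) (for every class $i$ exactly one $l\in[2]$ satisfies $\Phi_{i,l}^{(T_1)}\ge\Omega(\log k)$ while $\Phi_{i,3-l}^{(T_1)}\le 1/\polylogk$) carries over verbatim, along with all auxiliary bounds on noise correlations $\langle w_{i,r}^{(t)},\xi_p\rangle$ and off-class feature correlations $\langle w_{i,r}^{(t)},v_{j,l}\rangle$ for $j\ne i$.

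The crux is Phase~II. In FixMatch under Assumption~\ref{app:assum_strong}, an unlabeled multi-view sample $(X,y)\in\calZ_{u,m}$ with confident pseudo-label is, with probability $\pi_1\pi_2$ or $(1-\pi_1)\pi_2$, converted into an ``effectively single-view'' sample containing only the unlearned feature; only these samples produce the gradient signal that drives $\Phi_{y,3-l}^{(t)}$ from $1/\polylogk$ up to $\Omega(\log k)$. For SA-FixMatch, I would first establish a localization lemma: after Phase~I, when Grad-CAM is applied to the current network $F^{(T_1)}$ on a weakly-augmented unlabeled multi-view sample $(X,y)$ whose pseudo-label is $y$, the resulting attention map concentrates on patches $\calP_{v_{y,l}}(X)$ where $v_{y,l}$ is the learned feature. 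This follows directly from the Phase~I correlation gap $\Phi_{y,l}^{(T_1)}\ge\Omega(\log k)$ versus $\Phi_{y,3-l}^{(T_1)}\le 1/\polylogk$ and the fact that on noisy patches $p\notin\calP(X)$ the activation $\trelu(\langle w_{y,r}^{(T_1)},x_p\rangle)$ is $\tilde{o}(1)$ by the Phase~I bounds on $\langle w_{y,r}^{(T_1)},\xi_p\rangle$ and $\langle w_{y,r}^{(T_1)},v_{j,l'}\rangle$ for $j\ne y$. Consequently the $q\times q$ box with highest average attention overlaps $\calP_{v_{y,l}}(X)$ w.h.p., and SA-CutOut deterministically removes the learned feature.

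Step~3 is the counting argument. Replacing the random removal event of probability $\max\{\pi_1\pi_2,(1-\pi_1)\pi_2\}$ by a deterministic one, each unlabeled sample in SA-FixMatch contributes to learning the missed feature w.p.\ $1-o(1)$ rather than $\max\{\pi_1\pi_2,(1-\pi_1)\pi_2\}$. Thus, to obtain the same cumulative Phase~II gradient mass on $v_{y,3-l}$ that FixMatch achieves with $N_u$ unlabeled samples (satisfying $N_u\ge\eta T_2\cdot\polyk$ in Parameter Assumption~\ref{assp:para}), it suffices for SA-FixMatch to have only
\begin{equation*}
N_c=\max\{\pi_1\pi_2,(1-\pi_1)\pi_2\}\,N_u
\end{equation*}
samples, as required by Parameter Assumption~\ref{assp:para2}. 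A Hoeffding/Chernoff bound on the per-class count of confident samples reaching Phase~II closes the concentration gap with failure probability $e^{-\Omega(\log^2 k)}$. With the Phase~II end-state $\Phi_{i,l}^{(T_1+T_2)}\ge\Omega(\log k)$ for all $i\in[k],\,l\in[2]$ re-established, the training-accuracy and test-accuracy claims (on both $\calD_m$ and $\calD_s$) follow from the same logit-gap computation as in Theorem~\ref{appthm:fixmatch}, since that argument depends only on the final correlation profile, not on which augmentation produced it.

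The main obstacle is the localization lemma: one must rigorously show that Grad-CAM on $F^{(T_1)}$ (and on subsequent iterates in Phase~II) identifies patches carrying the learned feature rather than spurious high-attention regions induced by correlated feature noise $\alpha_{p,v'}$ or Gaussian noise $\xi_p$. This requires a quantitative comparison between the class-$y$ gradient contribution of a feature patch (of order $\Phi_{y,l}^{(t)}\cdot z_p^{q-1}=\Omega(\log^{q-1} k)$) and that of a noise patch (of order at most $\tilde{O}(\sigma_0^{q-1}\sqrt{d}\sigma_p+\gamma^{q-1})=\tilde{o}(1)$ under Parameter Assumption~\ref{assp:para}), followed by a union bound over the $N_c$ unlabeled samples and the $T_2$ iterations. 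A secondary subtlety is that, even in rare events where Grad-CAM mislocalizes, SA-CutOut masking a noisy patch merely reproduces FixMatch-style dynamics on that sample and hence does not hurt Phase~II; these bad events are absorbed into the $e^{-\Omega(\log^2 k)}$ failure probability.
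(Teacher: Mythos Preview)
Your high-level plan is sound, but it diverges from the paper in two important respects.

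First, the direction of the reduction is reversed. You propose to lift the already-established FixMatch analysis to SA-FixMatch; the paper does the opposite. In Appendix~\ref{app:safixmatch} the authors prove SA-FixMatch \emph{first} and from scratch, because the deterministic removal of the learned feature makes the Phase~II dynamics cleanest: $V_{i,r,l^*}(X)=0$ identically, the logit approximation in Claim~\ref{claim:logit} simplifies, and the Induction Hypothesis~\ref{hyp} can be verified directly via Claims~\ref{claim:growth}--\ref{claim:individual} and Lemmas~\ref{lemma:correlation}--\ref{lemma:diag_nonneg}. Only afterwards, in Appendix~\ref{app:fixmatch}, is FixMatch obtained as a corollary by observing that random CutOut produces the same ``learned-feature-removed'' samples with probability $\min\{\pi_1\pi_2,(1-\pi_1)\pi_2\}$, so FixMatch needs the larger unlabeled budget $N_u$. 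Your direction would work too, but you would be re-proving all of Phase~II rather than invoking an existing result, since the paper's FixMatch proof is itself a reduction to SA-FixMatch.

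Second, and more importantly, the ``main obstacle'' you identify---rigorously proving that Grad-CAM on $F^{(T_1)}$ localizes the learned feature patches---is not something the paper proves at all. The paper explicitly \emph{assumes} it: ``For theoretical proof, we assume that Grad-CAM in SA-CutOut can correctly identify the learned feature after the first stage,'' and then directly defines $\calA(\cdot)$ as the deterministic zeroing of $\calP_{v_{i,l^*}}(X)$ in Eq.~\eqref{eq:app_sacutout}. Your sketch of a localization lemma (comparing $\Omega(\log^{q-1}k)$ gradient mass on feature patches to $\tilde o(1)$ on noise patches, then union-bounding) is plausible and would make the result more self-contained, but it is extra work beyond what the paper does. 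If you adopt the paper's assumption, the proof collapses to exactly the Phase~II analysis in Appendix~\ref{app:safixmatch}, and your counting argument for $N_c=\max\{\pi_1\pi_2,(1-\pi_1)\pi_2\}N_u$ matches the paper's.
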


Our main proof of Theorem~\ref{appthm:fixmatch} and Theorem~\ref{appthm:sa_fixmatchmatch} for FixMatch and SA-FixMatch includes analyses on a two-phase learning process. In Phase I, the network relies primarily on the supervised loss due to its inability to generate highly confident pseudo-labels and the large confidence threshold \(\tau\) in Eq.~\eqref{ulb_loss}. According to the results in~\cite{allen-zhu2023towards}, partial features are learned during the supervised learning Phase I. We review the results on supervised training in Appendix \ref{app:thm_supervised}.

Then in Phase II, the network predicts highly confident pseudo-labels for weakly-augmented samples and uses these correct pseudo-labels to supervise the corresponding strongly-augmented samples via consistency regularization. To theoretically analyze the learning process in Phase II, we build on the proof framework of \citet{allen-zhu2023towards} and demonstrate how the network learns the unlearned features while preserving the learned features during Phase II. Specifically, we present the induction hypothesis for Phase II in Appendix \ref{app_induction}, along with gradient calculations and function approximations for the unsupervised loss Eq.~\eqref{ulb_loss} in Appendix \ref{appsec:func}. We then provide a detailed proof of SA-FixMatch in Appendix \ref{app:safixmatch} and extend the results to FixMatch in Appendix \ref{app:fixmatch}. Finally, we generalize our proof to other FixMatch-like SSL methods in Appendix \ref{app:like}.

\section{Results on Supervised Learning}
\label{app:thm_supervised}
In this section, we first recall the results in SL that were derived in~\cite{allen-zhu2023towards}. Before showing their main results, we first introduce some necessary notations. For every $i\in [k]$, define $\Phi_{i,l}^{(t)}: = \sum_{r\in[m]} [\langle w_{i,r}^{(t)}, v_{i,l}\rangle]^{+}$ and $\Phi_{i}^{(t)}:= \sum_{l\in[2]} \Phi_{i,l}^{(t)}$. Define
\begin{align*}
    \Lambda_{i}^{(t)} := \max_{r\in[m],l\in[2]} [\langle w_{i,r}^{(t)}, v_{i,l}\rangle]^{+}\quad \mbox{and}\quad \Lambda_{i,l}^{(t)} := \max_{r\in[m]} [\langle w_{i,r}^{(t)}, v_{i,l} \rangle]^{+},
\end{align*}
where $\Lambda_{i,l}$ indicates the largest correlation between the feature vector $v_{i,l}$ and all neurons $ w_{i,r}$ ($r\in[m]$) from class $i$.
Then we define the "view lottery winning" set:
\begin{align*}
    \calM:=\left\{ (i, l^*)\in [k]\times [2]\bigg|\Lambda_{i,l^*}^{(0)} \geq \Lambda_{i,3-l^*}^{(0)}\left(1+\frac{2}{\log^2 m}\right)\right\}.
\end{align*}
The intuition behind $\calM$ is that, subject to model initialization, if $(i,l)\in\calM$, then the feature $v_{i,l}$ will be learned by the model during supervised learning process and the feature $v_{i,3-l}$ will be missed. The set $\calM$ satisfies the following property (refer to the Proposition C.2. of~\cite{allen-zhu2023towards}):
\begin{proposition}
    Suppose $m\leq\polyk$. For every $i\in[k]$, $\Pr[(i,1)\in\calM~\mbox{or } (i,2)\in\calM]\geq 1-o(1)$.
\end{proposition}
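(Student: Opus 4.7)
The plan is to reduce the proposition to an anti-concentration estimate for two i.i.d.\ maxima of half-Gaussians. Fix $i \in [k]$. Because the kernels $w_{i,r}^{(0)} \sim \mathcal{N}(0, \sigma_0^2 \bI)$ are independent across $r \in [m]$ and the semantic features $v_{i,1}, v_{i,2}$ are orthonormal, the $2m$ scalar projections $\langle w_{i,r}^{(0)}, v_{i,l}\rangle$ for $(r,l) \in [m]\times[2]$ are i.i.d.\ $\mathcal{N}(0,\sigma_0^2)$. Hence $\Lambda_{i,1}^{(0)}$ and $\Lambda_{i,2}^{(0)}$ are two independent copies of $A := \max_{r\in[m]} [\xi_r]^+$ with $\xi_r$ i.i.d.\ $\mathcal{N}(0,\sigma_0^2)$. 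Writing $A_1, A_2$ for these copies and $\delta := 2/\log^2 m$, the complement of the target event is precisely the symmetric ratio event $A_1/A_2 \in (1/(1+\delta),\, 1+\delta)$, so the proposition reduces to showing this event has probability $o(1)$.

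Next, using the CDF $F_A(t) = \Phi(t/\sigma_0)^m$ together with Mills-ratio estimates, I would show that $A$ satisfies a Gumbel-type concentration around $t^* := \sigma_0 \sqrt{2\log m}$ with fluctuations of order $\sigma_0/\sqrt{\log m}$, and that the density $f_A(t) = (m/\sigma_0)\,\Phi(t/\sigma_0)^{m-1}\,\phi(t/\sigma_0)$ is uniformly bounded by $O(\sqrt{\log m}/\sigma_0)$ on a window containing $A$ with probability $1-o(1)$. Conditioning on $A_2 = b$ in this good window,
\[
\Pr\!\left[A_1 \in \bigl(b/(1+\delta),\, b(1+\delta)\bigr) \,\big|\, A_2 = b\right] \leq 2 b \delta \cdot \sup_t f_A(t) = O\!\left(t^* \delta \cdot \sqrt{\log m}/\sigma_0\right) = O(1/\log m).
\]
Combining this with the $o(1)$ contribution from the tail event that $A_1$ or $A_2$ escapes the window yields the desired $o(1)$ upper bound on the complementary event, which is indeed $o(1)$ as $k \to \infty$ in the regime $m \geq \polylogk$.

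The main obstacle is the density bound: one must verify that $f_A$ admits the uniform upper bound $O(\sqrt{\log m}/\sigma_0)$ on a high-probability window rather than blowing up near the mode. This is controlled by the Gumbel-scaling $A \approx t^* + \sigma_0 B/\sqrt{2\log m}$ with $B$ asymptotically a standard Gumbel variable, which caps the peak density at $O(\sqrt{\log m}/\sigma_0)$; the nontrivial part is making the tail truncation quantitative so the combined error is $o(1)$. The rest of the argument is clean because the independence of $A_1$ and $A_2$ afforded by $v_{i,1}\perp v_{i,2}$ reduces the two-variable ratio-window estimate to a one-dimensional density integration.
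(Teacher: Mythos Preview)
Your argument is correct. The reduction to two i.i.d.\ copies $A_1,A_2$ of $\max_{r\in[m]}[\xi_r]^+$ via the orthogonality of $v_{i,1},v_{i,2}$ and the isotropy of $w_{i,r}^{(0)}$ is exactly right, and the anti-concentration step goes through: the global density bound $\|f_A\|_\infty=O(\sqrt{\log m}/\sigma_0)$ (which follows from the Gumbel scaling you invoke, and in fact holds without restricting to a window) combined with the $O(\sigma_0\sqrt{\log m}\cdot\delta)$ width of the ratio window on the event $\{A_2\le 2\sigma_0\sqrt{2\log m}\}$ gives the $O(1/\log m)$ bound you state.

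The paper itself does not prove this proposition; it simply cites Proposition~C.2 of \cite{allen-zhu2023towards}. So there is no paper proof to compare against --- you have supplied the argument that the citation points to, and your sketch matches the natural proof. One cosmetic point: the proposition as stated only assumes $m\le\polyk$, whereas your final line needs $m\to\infty$ (you write ``$m\ge\polylogk$''); this lower bound is indeed the standing assumption in the paper's theorems ($m=\polylogk$), so the discrepancy is in the proposition's phrasing rather than in your reasoning.
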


Based on Theorem 1 of \cite{allen-zhu2023towards}, after training for $T$ iterations with the supervised training loss 
$L_s^{(t)}= \bbE_{(X,y)\sim\calZ_l}\left[-\log\logit_y(F^{(t)},X)\right],
$
the training accuracy on labeled samples is perfect and $L_s^{(T)}$ approaches zero, i.e., for every $(X,y)\in\calZ_l$, 
\begin{align*}
    \forall i\neq y: F_y^{(T)}(X)\geq F_i^{(T)}(X)+\Omega(\log k),
\end{align*}
and we have \(L_s^{(T)} \le \frac{1}{\polyk}\).
Besides, it satisfies that $0.4\Phi_i^{(T)} - \Phi_j^{(T)}\leq - \Omega(\log k)$ for every pair $i,j\in[k]$. This means that at least one of $\Phi_{i,1}^{(T)}$ or $\Phi_{i,2}^{(T)}$ for all $i\in[k]$ increase to a large scale of $\Theta(\log(k))$, which means at least one of $v_{i,1}$ and $v_{i,2}$ for all $i\in [k]$ is learned after supervised training for $T$ iterations. Thus, all multi-view training data are classified correctly. For single-view training data without the learned features, they are classified correctly by memorizing the noises in the data during the supervised training process.  
Then for the test accuracy, for the multi-view data point $(X,y)\sim\calD_m$, with the probability at least $1-e^{-\Omega(\log^2 k)}$, it has 
\begin{align*}
    \logit_y(F^{(T)},X)\geq 1-\tilde{O} \left(\frac{1}{s^2}\right),
\end{align*}
and 
\begin{align*}
    \Pr_{(X,y)\sim \calD_m}\left[ F_y^{(T)}(X)\geq \max_{j\neq y}F_j^{(T)}(X)+\Omega(\log k)\right]\geq 1-e^{-\Omega(\log^2k)}.
\end{align*} 
This means that the test accuracy of multi-view data is good. However, for the single-view data $(X,y)\sim\calD_s$, whenever $(i,l^*)\in\calM$, we have $\Phi_{i, 3-l^*}^{(T)}\ll \frac{1}{\polylogk}$ and 
\begin{align*}
    \Pr_{(X,y)\sim \calD_s}\left[ F_y^{(T)}(X)\geq \max_{j\neq y}F_j^{(T)}(X)-\frac{1}{\polylogk}\right]\leq \frac{1}{2}(1+o(1)),
\end{align*}
which means that the test accuracy on single-view data is nearly 50\%.

The results in~\cite{allen-zhu2023towards} fully indicate the feature learning process of SL. The main reason for the imperfect performance of SL is that, due to "lottery winning", it only captures one of the two semantic features for each class during the supervised training process. Therefore, for single-view data without this feature, it has low test accuracy.

In the following, we will consider the effect of loss $L_u^{(t)}$ on unlabeled data for training:
\begin{align*}
L_u^{(t)} = \bbE_{(X,y)\sim\calZ_u}\left[\bbI_{\{\max_{i}\logit_i(F^{(t)},\alpha(X))\ge \tau\}} \cdot - \log \logit_b(F^{(t)},\calA(X))\right].
\end{align*}
where $b=\argmax_{i\in[k]}\logit_i(F^{(t)},\alpha(X))$, $\tau$ is the confidence threshold and $\alpha, \calA$ are the weak and strong augmentations, respectively. For the simplicity of proof, we set $\alpha$ to be identity mapping. In the following, we will prove Theorem~\ref{appthm:fixmatch}. By setting $\tau = 1-\tilde{O}(1/s^2)$, we will show that after training the supervised network $F^{(T_1)}$ with the unsupervised loss $L_u^{(t)}$ for an additional $T_2 = \frac{\polyk}{\eta}$ epochs, the FixMatch-trained network $F^{(T)}$ learns complete semantic features for all classes, achieving perfect test performance on both multi-view and single-view data.

\section{Induction Hypothesis} \label{app_induction}
 In this section, to prove our theorem, similar to~\cite{allen-zhu2023towards}, we present an induction hypothesis for every training iteration $t$ in Phase II. We first show the loss function in Phase II. 

\paragraph{Loss Function.} Recall $\logit_i(F,X):=\frac{e^{F_i(X)}}{\sum_{j\in[k]} e^{F_j(X)}}$.  In learning Phase I, before the network learned partial features to make confident prediction, only the supervised loss $L_s^{(t)}$ takes effect
\begin{align*}
    L_s^{(t)}= \bbE_{(X,y)\sim\calZ_l}\left[- \log\logit_y(F^{(t)},X)\right].
\end{align*}
In Phase II, after we train the network $F$ for $T_1=\frac{\polyk}{\eta}$ epochs using $L_s^{(t)}$ in the Phase I, according to the results in Appendix~\ref{app:thm_supervised}, one of the features in each class is captured. Then we consider to optimize the network $F^{(T_1)}$ using the following combination of losses:
\begin{align*}
    L^{(t)} =& \bbE_{(X,y)\sim\calZ_l}\left[-\log\logit_y(F^{(t)},X)\right] \\ & + \lambda \bbE_{X\sim\calZ_u}\left[\bbI_{\{\max_{i}\logit_i(F^{(t)},\alpha(X))\ge \tau\}} \cdot - \log \logit_b(F^{(t)},\calA(X))\right], 
\end{align*}
where $b=\argmax_{i\in[k]}\logit_i(F^{(t)},\alpha(X))$. Recall $\tau = 1-\tilde{O}(1/s^2)$, when $t\geq T_1$ and we use $F^{(t)}$ to classify the unlabeled data $X\sim \calZ_u$, we will get a correct pseudo-label with high probability, i.e., $b = y$, where $y$ denotes the ground truth label of $X$. This means that for $X\sim\calZ_{u,m}$, with probability at least $1-e^{-\Omega(\log^2k)}$, $\logit_y(F^{(t)},X)\geq \tau$ and for $X\sim\calZ_{u,s}$, when $(y,l^*)\in\calM$ and $\hat{l}(X)=l^*$, with the probability at least $1-e^{-\Omega(\log^2k)}$, $\logit_y(F^{(t)},X)\geq \tau$. We denote the samples in $\calZ_u$ that satisfy $\logit_y(F^{(t)},X)\geq \tau$ as $\tilde{\calZ}_u$ and let $\tilde{N}_u=|\tilde{\calZ}_u|$. In this way, we can further simplify the loss as 
\begin{equation}\label{eq:app_ssl_loss}
\begin{aligned}
    L^{(t)} =& L_s^{(t)} + \lambda L_u^{(t)} \\
    =& \bbE_{(X,y)\sim\calZ_l}\left[-\log\logit_y(F^{(t)},X)\right] + \lambda \bbE_{X\sim\tilde{\calZ}_u}\left[-\log \logit_b(F^{(t)},\calA(X))\right].
\end{aligned}
\end{equation}
We introduce the following induction hypothesis:

\begin{hypothesis}
\label{hyp}
During Phase II ($t \ge T_1$), for every $l\in[2]$, for every $r\in[m]$, for every $X\in\tilde{\calZ}_u$ and $i\in[k]$,
\begin{itemize}
    \item [(a)] For every $p\in\calP_{v_{i,l}}(X)$, we have: $\langle w_{i,r}^{(t)}, x_p\rangle = \langle w_{i,r}^{(t)}, v_{i,l}\rangle z_p \pm \tilde{o}(\sigma_0) $.
    \item[(b)] For every $p \in \calP(X)\setminus (\calP_{v_{i,1}}(X)\cup \calP_{v_{i,2}}(X))$, we have: $|\langle w_{i,r}^{(t)}, x_p\rangle| \leq \tilde{O}(\sigma_0)$.
    \item[(c)] For every $p\in[P]\setminus \calP(X)$, we have $|\langle w_{i,r}^{(t)}, x_p\rangle| \leq \tilde{O}(\sigma_0\gamma k)$.
\end{itemize}
Moreover, we have for every $i\in[k]$, every $l\in[2]$, 
\begin{itemize}
    \item [(d)] $\Phi_{i,l}^{(t)}  \geq \tilde{\Omega}(\sigma_0)$ and $\Phi_{i,l}^{(t)}   \leq \tilde{O}(1)$.
    \item [(e)] for every $r\in[m]$, it holds that $\langle w_{i,r}^{(t)}, v_{i,l}\rangle \geq -\tilde{O}(\sigma_0)$.
\end{itemize}
Recall that $\Phi_{i,l}^{(t)}: = \sum_{r\in[m]} [\langle w_{i,r}^{(t)}, v_{i,l}\rangle]^{+}$ and $\Phi_{i}^{(t)}:= \sum_{l\in[2]} \Phi_{i,l}^{(t)}$.

\end{hypothesis}

The intuition behind Induction Hypothesis~\ref{hyp} is that training with semi-supervised loss Eq. \eqref{eq:app_ssl_loss} filters out feature noises and background noises for both multi-view data and single-view data. This can be seen in comparison with Induction Hypothesis C.3 of \cite{allen-zhu2023towards}. 
With the help of Induction Hypothesis~\ref{hyp}, we can prove that the correlations between $w_{i,r}$ and learned features in Phase I are retained in Phase II, and the correlations between $w_{i,r}$ and unlearned features will increase to a large scale ($\log(k)$) in the end of learning Phase II. 

\section{Gradient Calculations and Function Approximation}\label{appsec:func}

\paragraph{Gradient Calculation.} We present the gradient calculations for the cross-entropy loss \(L_u(F;X,y)=-\log \logit_y(F,\calA(X))\) on unlabeled data $X$ with correctly predicted pseudo-label $y$. With a slight abuse of notation, we use $(X, y) \sim \tcalZ_u$ to denote unlabeled data $X \sim \tcalZ_u$ along with its corresponding ground truth label $y$.
\begin{fact}\label{app:grad_fact}
    Given data point $(X, y)\sim\tilde{\calZ}_u$, for every $i\in[k], r\in[m]$,
    \begin{align}
        -\nabla_{w_{i,r}} L_u(F;X,y)& = (1-\logit_i(F,\calA(X))) \sum_{p\in[P]} \trelu'(\langle w_{i,r}, \calA(x_p)\rangle ) \calA(x_p),\quad~\mbox{when } i=y,\\
        -\nabla_{w_{i,r}} L_u(F;X,y)& = -\logit_i(F,\calA(X)) \sum_{p\in[P]} \trelu'(\langle w_{i,r}, \calA(x_p)\rangle ) \calA(x_p),\quad~\mbox{when } i\neq y.
    \end{align}
\end{fact}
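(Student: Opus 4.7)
\textbf{Proof proposal for Fact \ref{app:grad_fact}.} The plan is a direct chain-rule calculation, identical in structure to the standard softmax-with-cross-entropy gradient derivation; the only non-standard ingredient is that the logit \(F_i\) is a sum over patches of \(\trelu(\langle w_{i,r}, \calA(x_p)\rangle)\) rather than a single inner product, so the inner derivative will just split patch-wise. I expect no real obstacle: every piece is smooth in \(w_{i,r}\) since \(\trelu\) is smooth by construction.

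First I would rewrite the loss in the ``logit-space'' form
\begin{equation*}
L_u(F;X,y) \;=\; -\log \logit_y(F,\calA(X)) \;=\; -F_y(\calA(X)) + \log \sum_{j\in[k]} \exp\bigl(F_j(\calA(X))\bigr),
\end{equation*}
so that the dependence on the parameter block \(w_{i,r}\) is channelled entirely through \(F_i(\calA(X))\) (the other logits \(F_j\), \(j\neq i\), do not depend on \(w_{i,r}\) since the kernels across classes are disjoint parameter blocks). Differentiating,
\begin{equation*}
\nabla_{w_{i,r}} L_u \;=\; \bigl(-\mathbf{1}_{\{i=y\}} + \logit_i(F,\calA(X))\bigr)\,\nabla_{w_{i,r}} F_i(\calA(X)).
\end{equation*}
This immediately yields the prefactors \(-(1-\logit_i)\) when \(i=y\) and \(+\logit_i\) when \(i\neq y\) after a sign flip, matching the two cases in the statement.

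Next I would compute \(\nabla_{w_{i,r}} F_i(\calA(X))\). Since \(F_i(\calA(X))=\sum_{r'\in[m]}\sum_{p\in[P]}\trelu(\langle w_{i,r'},\calA(x_p)\rangle)\), only the \(r'=r\) summand depends on \(w_{i,r}\), and by the chain rule together with \(\nabla_{w_{i,r}}\langle w_{i,r},\calA(x_p)\rangle = \calA(x_p)\),
\begin{equation*}
\nabla_{w_{i,r}} F_i(\calA(X)) \;=\; \sum_{p\in[P]} \trelu'\bigl(\langle w_{i,r},\calA(x_p)\rangle\bigr)\,\calA(x_p).
\end{equation*}
Substituting this expression into the display above and negating both sides produces the two identities claimed in the fact. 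The only subtlety worth flagging is differentiability of \(\trelu\) at the knots \(0\) and \(\varrho\); this is handled by the explicit smooth (piecewise polynomial and linear) definition of \(\trelu\) recalled in Appendix \ref{app:thm_state}, so \(\trelu'\) is well defined everywhere and the chain rule applies without caveat.
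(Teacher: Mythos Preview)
Your proposal is correct and is exactly the standard chain-rule derivation that underlies this statement; the paper itself treats this as an elementary fact and does not write out a proof, so there is nothing to compare beyond noting that your argument is the natural one.
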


\begin{definition}
    For each data point $X$, we define a value $V_{i,r,l}(X)$ as
    \begin{align*}
        V_{i,r,l}(X):=\bbI_{v_{i,l}\in\calV(X)}\sum_{p\in\calP_{v_{i,l}}(X)} \trelu'(\langle w_{i,r}, \calA(x_p)\rangle)\calA(z_p).
    \end{align*}
\end{definition}

\begin{definition}
    We also define small error terms which will be frequently used:
    \begin{align*}
        \calE_1:=\tilde{O}(\sigma_0^{q-1})\gamma s \qquad & \calE_{2,i,r}(X):= O(\gamma(V_{i,r,1}(X) + V_{i,r,2}(X)))\\
        \calE_3:= \tilde{O}(\sigma_0\gamma k)^{q-1}\gamma P \qquad & \calE_{4,j,l}(X): =\tilde{O}(\sigma_0^{q-1})\bbI_{v_{j,l}\in\calV(X)}.
    \end{align*}
\end{definition}

Then we have the following bounds for positive gradients, i.e., when $i=y$:
\begin{claim}[positive gradients]
\label{claim:pos}
    Suppose Induction Hypothesis~\ref{hyp} holds at iteration $t$. For every $(X,y)\in\tilde{\calZ}_u$, every $r\in[m]$, every $l\in[2]$, and $i=y$, we have
    \begin{itemize}
        \item [(a)] $\langle -\nabla_{w_{i,r}} L_u(F^{(t)};X,y), v_{i,l}\rangle \geq \left(V_{i,r,l}(X)-\tilde{O}(\sigma_p P)\right)\left(1-\logit_i(F^{(t)},\calA(X))\right).$
        \item [(b)] $\langle -\nabla_{w_{i,r}} L_u(F^{(t)};X,y), v_{i,l}\rangle \leq \left(V_{i,r,l}(X)+\calE_1+\calE_3\right)(1-\logit_i(F^{(t)},\calA(X))).$
        \item [(c)] For every $j\in[k]\setminus \{i\}$, 
        \begin{align*}
            |\langle -\nabla_{w_{i,r}} L_u(F^{(t)};X,y), v_{j,l}\rangle| \leq \left(\calE_1+\calE_{2,i,r}(X)+\calE_3+\calE_{4,j,l}(X)\right)(1-\logit_i(F^{(t)},\calA(X))).
        \end{align*}
    \end{itemize}
\end{claim}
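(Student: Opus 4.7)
The plan is to start from Fact~\ref{app:grad_fact} in the case $i=y$, which factors as
\[
-\nabla_{w_{i,r}} L_u(F^{(t)};X,y) = \bigl(1-\logit_i(F^{(t)},\calA(X))\bigr) \sum_{p\in[P]} \trelu'(\langle w_{i,r}^{(t)}, \calA(x_p)\rangle)\,\calA(x_p).
\]
The common scalar $(1-\logit_i)$ pulls out of all three inner products, so the task reduces to bounding $S_v := \sum_{p\in[P]} \trelu'(\langle w_{i,r}^{(t)}, \calA(x_p)\rangle)\langle \calA(x_p), v\rangle$ for $v=v_{i,l}$ (parts (a) and (b)) and $v=v_{j,l}$ with $j\neq i$ (part (c)). Since by Assumption~\ref{app:assum_strong} $\calA(x_p)$ is either $x_p$ or the zero vector (up to the scalar factors in \eqref{eq:app_cutout}), every pointwise bound from Induction Hypothesis~\ref{hyp} on $\langle w_{i,r}^{(t)}, x_p\rangle$ transfers directly, and masked patches simply drop from the sum in both the main and the error terms symmetrically.

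The core is to partition $[P]$ into patch types and bound the two factors on each using Hypothesis~\ref{hyp}(a)--(c) and the decomposition $x_p = z_p v + \sum_{v'} \alpha_{p,v'}v' + \xi_p$ from Definition~\ref{app:def_data}. For $v = v_{i,l}$: on $p\in\calP_{v_{i,l}}(X)$, Hypothesis~\ref{hyp}(a) gives $\trelu'(\langle w_{i,r}^{(t)},\calA(x_p)\rangle) \approx \trelu'(\langle w_{i,r}^{(t)}, v_{i,l}\rangle \calA(z_p))$ and $\langle \calA(x_p), v_{i,l}\rangle = \calA(z_p) + O(\gamma) + \langle \xi_p, v_{i,l}\rangle$, producing the main term $V_{i,r,l}(X)$, a Gaussian tail controlled by $\tilde O(\sigma_p P)$ after summing, and a negligible feature-noise cross term; on $p\in\calP(X)\setminus\calP_{v_{i,l}}(X)$, Hypothesis~\ref{hyp}(b) bounds $\langle w_{i,r}^{(t)},\calA(x_p)\rangle$ by $\tilde O(\sigma_0)$ so $\trelu'=\tilde O(\sigma_0^{q-1})$, while $\langle \calA(x_p),v_{i,l}\rangle$ is pure feature noise of order $\gamma$, yielding $\calE_1$ after summing over the $O(s)$ feature patches; on purely noisy patches $p\in[P]\setminus\calP(X)$, Hypothesis~\ref{hyp}(c) gives $\trelu'=\tilde O((\sigma_0\gamma k)^{q-1})$ and each term contributes at most $\gamma$, producing $\calE_3$ after summing over $P$. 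Collecting the groups gives (b); subtracting the noise contributions rather than adding them gives (a).

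For part (c), the same partition applies, but an additional group of patches $p\in\calP_{v_{j,l}}(X)$ (where $\langle x_p,v_{j,l}\rangle=z_p+O(\gamma)$ may be $\Theta(1)$) must be handled. The key observation is that $w_{i,r}^{(t)}$ is associated with class $i\neq j$, so Hypothesis~\ref{hyp}(b) still gives $\langle w_{i,r}^{(t)},x_p\rangle = \tilde O(\sigma_0)$ on these patches and hence $\trelu'=\tilde O(\sigma_0^{q-1})$; summed over the $O(1)$ such patches this produces the $\calE_{4,j,l}(X)\bbI_{v_{j,l}\in\calV(X)}$ term. The new $\calE_{2,i,r}(X)$ term appears because on patches in $\calP_{v_{i,l'}}(X)$ (where the class-$i$ feature is strongly activated and $\trelu'$ may be of constant order) the feature noise $\alpha_{p,v_{j,l}}\leq \gamma$ in the direction $v_{j,l}$ now contributes $O(\gamma(V_{i,r,1}(X)+V_{i,r,2}(X)))$, matching the definition of $\calE_{2,i,r}(X)$.

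The main obstacle I expect is the bookkeeping in the dominant group for part (a): ensuring that the aggregate Gaussian contribution $\sum_p \trelu'(\cdot)\langle \xi_p, v_{i,l}\rangle$ can be uniformly controlled by $\tilde O(\sigma_p P)$ independently of the large multiplicative factors $\trelu'$ on activated patches (using that $\trelu'\leq 1$ in the linear regime), and verifying that the multiplicative random mask $\calA$ does not interact adversarially with the noise term — here the replacement $\calA(z_p)$ for $z_p$ in $V_{i,r,l}(X)$ is tailored so that the leading-order contributions on unmasked patches match exactly. All remaining steps are routine bookkeeping that reuse the patch decomposition together with Hypothesis~\ref{hyp}(a)--(c) and the magnitudes of $\sigma_0, \gamma, \sigma_p, s, P$ fixed in Parameter Assumption~\ref{assp:para}.
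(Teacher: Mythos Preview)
Your approach is correct and matches the paper's proof, which simply states that the argument is identical to \cite{allen-zhu2023towards} modulo the presence of $\calA(\cdot)$ on the patches; the patch-type partition you describe is exactly that argument, and your observation that $\calA$ merely zeros or retains each patch so that all bounds from Induction Hypothesis~\ref{hyp} transfer is precisely the adaptation the paper alludes to. One small bookkeeping slip: in parts (a)/(b) you invoke Hypothesis~\ref{hyp}(b) on $p\in\calP(X)\setminus\calP_{v_{i,l}}(X)$, but that hypothesis only covers $\calP(X)\setminus(\calP_{v_{i,1}}(X)\cup\calP_{v_{i,2}}(X))$; the patches in $\calP_{v_{i,3-l}}(X)$ need the same separate treatment you correctly give them in (c), contributing a feature-noise cross term of order $O(\gamma)$ (nonnegative, hence harmless for (a)) and a Gaussian term absorbed into $\tilde O(\sigma_p P)$.
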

We also have the following claim about the negative gradients (i.e., $i\neq y$). The proof of positive and negative gradients is identical to the proof in~\cite{allen-zhu2023towards}, except that in our case, we have the augmentation operations on the data patches. 
\begin{claim}[negative gradients]
\label{claim:neg}
    Suppose Induction Hypothesis~\ref{hyp} holds at iteration $t$. For every $(X,y)\sim\tilde{\calZ}_u$, every $r\in[m]$, every $l\in[2]$, and $i\in[k]\setminus \{y\}$, we have 
    \begin{itemize}
        \item[(a)] $\langle -\nabla_{w_{i,r}} L_u(F^{(t)};X,y), v_{i,l}\rangle \geq -\logit_i(F^{(t)},\calA(X))\left(\calE_1+\calE_3 + V_{i,r,l}(X)\right).$
        \item[(b)] For every $j\in[k]$: $\langle -\nabla_{w_{i,r}} L_u(F^{(t)};X,y), v_{j,l}\rangle \leq \logit_i(F^{(t)},\calA(X))\tilde{O}(\sigma_p P).$
        \item[(c)] For every $j\in[k]\setminus \{i\}$:$\langle -\nabla_{w_{i,r}} L_u(F^{(t)};X,y), v_{j,l}\rangle \geq -\logit_i(F^{(t)},\calA(X))\left(\calE_1+\calE_3 + \calE_{4,j,l}(X)\right).$
    \end{itemize}
\end{claim}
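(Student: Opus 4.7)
The plan is to adapt the proof of Claim~\ref{claim:pos} (positive gradients) to the $i \neq y$ case, using the second formula in Fact~\ref{app:grad_fact}. First I would write
\[
\langle -\nabla_{w_{i,r}} L_u(F^{(t)};X,y), v_{j,l}\rangle \;=\; -\logit_i(F^{(t)},\calA(X))\cdot S, \quad S := \sum_{p \in [P]} \trelu'(\langle w_{i,r}, \calA(x_p)\rangle)\,\langle \calA(x_p), v_{j,l}\rangle,
\]
so that each of (a)--(c) reduces to an upper or lower bound on $S$ followed by multiplication by $-\logit_i \in [-1,0]$. The key structural observation is that under Assumption~\ref{app:assum_strong}, the augmentation acts patchwise as a zero-one mask, i.e., $\calA(x_p) \in \{0, x_p\}$; consequently $\trelu'(\langle w_{i,r}, \calA(x_p)\rangle)$ vanishes on masked patches and equals its unaugmented value otherwise, which allows Induction Hypothesis~\ref{hyp}(a)-(c) to be applied verbatim to $\calA(X)$.

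Next I would partition $[P]$ into the feature-patch sets $\{\calP_{v_{i',l'}}(X) : v_{i',l'} \in \calV(X)\}$ and the pure-noise set $[P]\setminus \calP(X)$, and bound each piece's contribution to $S$ using Induction Hypothesis~\ref{hyp} together with Def.~\ref{app:def_data}. For part~(a), since $i \neq y$, the $v_{i,l}$-correlation can only be of order $z_p$ on patches in $\calP_{v_{i,l}}(X)$ when $v_{i,l}$ appears in the noisy set $\calV'(X)$; this contribution is captured exactly by $V_{i,r,l}(X)$ (whose indicator handles the case $v_{i,l}\notin\calV(X)$), while the other feature patches produce $\gamma$-scale feature-noise totaling $\calE_1$ and the pure-noise patches produce $\calE_3$. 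Combining yields $S \leq V_{i,r,l}(X) + \calE_1 + \calE_3$, and multiplying by $-\logit_i$ gives the required lower bound. Part~(c) is handled identically with $v_{j,l}$ in place of $v_{i,l}$; the role of $V_{i,r,l}(X)$ is now played by $\calE_{4,j,l}(X)$, whose indicator $\bbI_{v_{j,l}\in\calV(X)}$ encodes the only case in which $\calP_{v_{j,l}}(X)$ is non-empty.

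For part~(b), the target is a lower bound $S \geq -\tilde{O}(\sigma_p P)$. Since $\trelu' \geq 0$ and the feature-containing contributions to $\langle \calA(x_p), v_{j,l}\rangle$ are non-negative up to $\gamma$-level feature noise, the only genuine source of negativity is the Gaussian component $\langle \xi_p, v_{j,l}\rangle$, which is sub-Gaussian with scale $\sigma_p$ on feature patches and $\gamma k/\sqrt{d}$ on pure-noise patches. Using $\|\trelu'\|_\infty = O(1)$ (valid for the saturated regime, and directly controllable on the small-input regime via Induction Hypothesis~\ref{hyp}) together with a union bound over the $P$ patches produces the $\tilde{O}(\sigma_p P)$ lower bound on $S$, hence the stated upper bound on $\langle -\nabla, v_{j,l}\rangle$.

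The main obstacle is the bookkeeping around the strong augmentation: one must verify that the $(\epsilon_1, \epsilon_2)$ Bernoulli mask does not upset cancellations in the feature-noise sums, and that $V_{i,r,l}(X)$, which is defined with $\calA$ already built in, is exactly the quantity arising on the right-hand side. Because $\calA$ only zeroes out patches (it never rescales) we have $\calA(z_p) \in \{0, z_p\}$ and $\trelu'$ vanishes on masked patches, so all per-patch estimates transfer directly from the unaugmented setting treated in \cite{allen-zhu2023towards}; the remaining error-term accounting parallels Claim~\ref{claim:pos} symmetrically under the sign flip dictated by Fact~\ref{app:grad_fact}.
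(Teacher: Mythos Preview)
Your proposal is correct and follows the same approach as the paper, which explicitly states that the proof of both the positive and negative gradient claims is identical to that in \cite{allen-zhu2023towards}, with the only modification being the augmentation on the data patches. Your key structural observation that $\calA$ acts as a patchwise zero-one mask (so that Induction Hypothesis~\ref{hyp}(a)--(c) transfer directly to $\calA(X)$) is precisely the modification the paper alludes to, and your partition-and-bound strategy using the gradient formula from Fact~\ref{app:grad_fact} mirrors the standard argument.
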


\paragraph{Function Approximation.} Let us denote $Z_{i,l}^{(t)}(X):=\bbI_{v_{i,l}\in\calV(X)}\left(\sum_{p\in\calP_{v_{i,l}}(X)}\calA(z_p)\right)$, we can easily derive the following result on function approximation.
\begin{claim}[function approximation]
\label{claim:func}
    Suppose Induction Hypothesis~\ref{hyp} holds at iteration $t$ and suppose $s\leq\tilde{O}(\frac{1}{\sigma_0^q m})$ and $\gamma\leq \tilde{O}(\frac{1}{\sigma_0 k (mP)^{1/q}})$, we have:
    \begin{itemize}
        \item for every $t$, every $(X,y)\in\tilde{\calZ}_u$ and $i\in[k]$, we have 
        \begin{align*}
            F_i^{(t)}(X) = \sum_{l\in[2]} \left(\Phi_{i,l}^{(t)}\times Z_{i,l}^{(t)}(X)\right) \pm O(\frac{1}{\polylogk}).
        \end{align*}
        \item for every $(X,y)\sim\calD$, with probability at least $1-e^{-\Omega(\log^2 k)}$, it satisfies for every $i\in[k]$,
        \begin{align*}
            F_i^{(t)}(X)=\sum_{l\in[2]} \left(\Phi_{i,l}^{(t)}\times Z_{i,l}^{(t)}(X)\right) \pm O(\frac{1}{\polylogk}).
        \end{align*}
    \end{itemize}
\end{claim}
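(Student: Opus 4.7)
The plan is to decompose the double sum \(F_i^{(t)}(X)=\sum_{r\in[m]}\sum_{p\in[P]}\trelu(\langle w_{i,r}^{(t)}, x_p\rangle)\) patch-by-patch into three groups: the class-\(i\) feature patches \(p\in\calP_{v_{i,l}}(X)\) for \(l\in[2]\) with \(v_{i,l}\in\calV(X)\), the noisy-feature patches \(\calP(X)\setminus(\calP_{v_{i,1}}(X)\cup\calP_{v_{i,2}}(X))\), and the pure-noise patches \([P]\setminus\calP(X)\). The class-\(i\) patches should reconstruct the main term \(\Phi_{i,l}^{(t)}\,Z_{i,l}^{(t)}(X)\), while the two remaining groups should fit inside the \(O(1/\polylogk)\) error using Induction Hypothesis~\ref{hyp}(b), (c) together with the parameter assumptions \(s\leq\tilde O(1/(\sigma_0^q m))\) and \(\gamma\leq\tilde O(1/(\sigma_0 k(mP)^{1/q}))\).

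\textbf{Main term and error regions.} For a patch \(p\in\calP_{v_{i,l}}(X)\), Hypothesis~\ref{hyp}(a) lets me replace the inner product by \(\langle w_{i,r}^{(t)}, v_{i,l}\rangle\,z_p\pm\tilde o(\sigma_0)\). I would then exploit the piecewise form of \(\trelu\) (cubic of magnitude at most \(z^q/(q\varrho^{q-1})\) on \([0,\varrho]\), affine with slope one and intercept \(-(1-1/q)\varrho\) above): neurons with \(\langle w_{i,r}^{(t)}, v_{i,l}\rangle\,z_p\ge\varrho\) contribute \(\langle w_{i,r}^{(t)}, v_{i,l}\rangle\,z_p\) up to an \(O(\varrho)\) intercept, while neurons in the cubic regime (including the mildly negative case permitted by Hypothesis~\ref{hyp}(e)) contribute at most \(\tilde O(\sigma_0^q/\varrho^{q-1})\) each, which is absorbed into the error. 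Summing the affine branch over \(r\) yields \([\langle w_{i,r}^{(t)}, v_{i,l}\rangle]^+ z_p\); summing over \(p\) together with the augmentation scaling then assembles \(\Phi_{i,l}^{(t)}\,Z_{i,l}^{(t)}(X)\) modulo an \(\tilde O(m\varrho)=O(1/\polylogk)\) drift from the accumulated intercepts. For the two error regions, Hypothesis~\ref{hyp}(b) gives \(|\langle w_{i,r}^{(t)}, x_p\rangle|\le\tilde O(\sigma_0)\) on each of the at most \(\Theta(s)C_p\) noisy-feature patches, contributing \(\tilde O(sm\sigma_0^q/\varrho^{q-1})\le O(1/\polylogk)\) by the bound on \(s\); and Hypothesis~\ref{hyp}(c) gives \(|\langle w_{i,r}^{(t)}, x_p\rangle|\le\tilde O(\sigma_0\gamma k)\) on the at most \(P\) pure-noise patches, contributing \(\tilde O(mP(\sigma_0\gamma k)^q/\varrho^{q-1})\le O(1/\polylogk)\) by the bound on \(\gamma\). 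Adding the three groups and summing over \(l\in[2]\) yields the first bullet.

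\textbf{Test samples and main obstacle.} For the second bullet, \((X,y)\sim\calD\) lies outside \(\tilde\calZ_u\), so the induction hypothesis cannot be invoked verbatim. The plan is to show that the inner-product statements (a)--(c) continue to hold with probability at least \(1-e^{-\Omega(\log^2 k)}\) over the draw of \(X\): the feature-direction components of \(\langle w_{i,r}^{(t)}, x_p\rangle\) are deterministic functions of \(w_{i,r}^{(t)}\) alone (so identical to the training-sample analysis), while the random parts stem from the independent feature noise \(\alpha_{p,v'}\in[0,\gamma]\) and the Gaussian noise \(\xi_p\); the norm bound \(\|w_{i,r}^{(t)}\|_2=\tilde O(1)\), implicit from Hypothesis~\ref{hyp}(d), lets a standard sub-Gaussian tail control \(\langle w_{i,r}^{(t)},\xi_p\rangle\) uniformly over \(i,r,p\) via a union bound. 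Once this probabilistic analog of (a)--(c) is in place, the same decomposition argument used for training data goes through unchanged. The main obstacle is the boundary behavior of \(\trelu\) near the threshold \(\varrho\): neurons whose correlation \(\langle w_{i,r}^{(t)}, v_{i,l}\rangle\) sits within an \(\tilde o(\sigma_0)\)-neighborhood of \(\varrho/z_p\) are ambiguous between the cubic and affine branches, and one must verify that the accumulated intercept \(m(1-1/q)\varrho\) per patch, together with the \(\pm\tilde o(\sigma_0)\) slack from Hypothesis~\ref{hyp}(a) and the feature-noise crosstalk \(\sum_{v'\ne v_{i,l}}\alpha_{p,v'}\langle w_{i,r}^{(t)}, v'\rangle\), once multiplied by \(\sum_p z_p=O(1)\), still fits inside the \(O(1/\polylogk)\) budget after choosing the polylog exponent in \(\varrho=1/\polylogk\) sufficiently large.
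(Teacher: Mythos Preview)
Your proposal is correct and follows essentially the same approach as the paper, which defers this claim to the framework of \cite{allen-zhu2023towards}: decompose \(F_i^{(t)}(X)\) into the three patch groups, use Induction Hypothesis~\ref{hyp}(a) together with the piecewise structure of \(\trelu\) to extract \(\Phi_{i,l}^{(t)}Z_{i,l}^{(t)}(X)\) from the class-\(i\) feature patches (with the \(O(m\varrho)\) intercept absorbed by choosing the polylog exponent in \(\varrho\) large enough), and bound the remaining two groups via Hypothesis~\ref{hyp}(b),(c) using exactly the two parameter assumptions in the statement. Your treatment of the second bullet---re-deriving (a)--(c) for a fresh \((X,y)\sim\calD\) via sub-Gaussian concentration of \(\langle w_{i,r}^{(t)},\xi_p\rangle\) and a union bound---is also the intended route.
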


\begin{claim}[classification test performance]
\label{app:inference}
Suppose Parameter Assumption~\ref{assp:para} holds. Assume  for $\forall i\in[k], \exists l\in[2]$ such that $\Phi_{i,l}\!\geq \! \Omega(\log k)$ and $\Phi_{i,3-l}\leq \frac{1}{\polylogk}$ in the trained network $F$. Then the following statements hold with probability at least $1-e^{-\Omega(\log^2 k)}$:
\begin{itemize}
    \item For any $(X,y)\sim\calD$ which contains $v_{y,l}$ as the main semantic feature,  network $F$  can correctly predict the label $y$ of  $X$.
    \item For any $(X,y)\sim\calD$ only with $v_{y,3-l}$ as the main semantic feature,  the network $F$ would mistakenly predict the label of $X$.
\end{itemize} 
\end{claim}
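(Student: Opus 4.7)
The plan is to reduce the statement to an arithmetic comparison of the logits via the function-approximation guarantee (Claim~\ref{claim:func}), and then to read off the two regimes directly from Definition~\ref{app:def_data}. Denote by $l_i \in [2]$ the learned index of class $i$, so the hypothesis reads $\Phi_{i,l_i} \geq \Omega(\log k)$ and $\Phi_{i,3-l_i} \leq 1/\polylogk$, while Induction Hypothesis~\ref{hyp}(d) supplies the complementary upper bound $\Phi_{i,l} \leq \tilde{O}(1)$. For a fresh test sample $(X,y)\sim \calD$, Claim~\ref{claim:func} gives, with probability at least $1-e^{-\Omega(\log^2 k)}$, the compact expression
\[
F_i(X)\;=\;\sum_{l\in[2]}\Phi_{i,l}\,Z_{i,l}(X)\ \pm\ O\bigl(1/\polylogk\bigr),\qquad \forall i\in[k],
\]
so the rest of the argument is to bound each $Z_{i,l}(X)$ from the data definition and then take $\argmax$.

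For the first bullet, $X$ contains $v_{y,l_y}$ as a main feature, so $Z_{y,l_y}(X)\in[1,O(1)]$ and hence $F_y(X)\geq \Phi_{y,l_y}-o(1)=\Omega(\log k)$. For any $j\neq y$, either feature $v_{j,l'}$ can enter $\calV(X)$ only as a noisy feature, in which case Definition~\ref{app:def_data} forces $Z_{j,l'}(X)\leq 0.4$ in the multi-view case and $Z_{j,l'}(X)\leq \Gamma=1/\polylogk$ in the single-view case. Combining this with $\Phi_j \leq \tilde{O}(\log k)$ yields $F_j(X)\leq 0.4\,\Phi_j + o(1)$, so the margin $F_y(X)-F_j(X)\geq \Omega(\log k)$ holds for every $j\neq y$, confirming the correct prediction of $y$.

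For the second bullet, $X$ is single-view with $\hat l(X)=3-l_y$, so only $v_{y,3-l_y}$ has $Z_{y,3-l_y}(X)\in[1,O(1)]$ while the learned feature satisfies $Z_{y,l_y}(X)\in[\rho,O(\rho)]$ with $\rho=k^{-0.01}$. Therefore
\[
F_y(X)\ \leq\ \Phi_{y,l_y}\cdot O(\rho) + \Phi_{y,3-l_y}\cdot O(1) + o(1) \;=\; O(\rho\log k)+O(1/\polylogk) \;=\; o(1).
\]
To produce a class that beats $y$, I would argue that with probability at least $1-e^{-\Omega(\log^2 k)}$ over the random noisy-feature set $\calV'$, there exists some $j\neq y$ whose learned feature $v_{j,l_j}$ is included in $\calV'$: each inclusion is independent with probability $s/k$, so the probability of none is $(1-s/k)^{k-1}\leq e^{-\Omega(s)}\leq e^{-\Omega(\log^2 k)}$ because $s=\polylogk$ with degree at least two. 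For any such $j$, Definition~\ref{app:def_data} for single-view data yields $Z_{j,l_j}(X)\geq \Omega(\Gamma)$, giving $F_j(X)\geq \Phi_{j,l_j}\cdot\Omega(\Gamma)=\tilde{\Omega}(\log k/\polylogk)$, which strictly dominates the $O(\rho\log k)=O(k^{-0.01}\log k)$ bound on $F_y(X)$ and forces $\argmax_i F_i(X)\neq y$.

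The main obstacle is the second bullet: the function-approximation error $O(1/\polylogk)$ is of the same order as the dominant term $\tilde{\Omega}(\log k/\polylogk)$ in the winning class, so the argument relies sensitively on the quantitative gap between $\rho\log k = k^{-0.01}\log k$ (an inverse polynomial) and $\log k/\polylogk$ (inverse polylogarithmic). A second delicate point is layering two independent high-probability events, namely the function-approximation event from Claim~\ref{claim:func} and the combinatorial event on $\calV'$; a union bound preserves the $1-e^{-\Omega(\log^2 k)}$ guarantee because both contributions are of that same tail form.
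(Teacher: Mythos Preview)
Your approach is essentially the same as the paper's---reduce to the function-approximation formula (Claim~\ref{claim:func}) and read off the coefficients $Z_{i,l}(X)$ from Definition~\ref{app:def_data}---and the second bullet is handled the same way (the paper also argues that w.h.p.\ some noisy feature $v_{j,l_j}$ with the learned index enters $\calV'$, giving $F_j(X)\geq \tilde{\Omega}(\Gamma)$ while $F_y(X)\leq \tilde{O}(\rho)+1/\polylogk$). Your honest flag about the $1/\polylogk$-scale competition in the second bullet is accurate; the paper's proof is equally terse there and simply records $F_y(X)\leq \max_{i\neq y}F_i(X)-1/\polylogk$ without spelling out the polylog bookkeeping.

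There is, however, a genuine gap in your first bullet for multi-view $X$. From $F_y(X)\geq \Phi_{y,l_y}\geq \Omega(\log k)$ and $F_j(X)\leq 0.4\,\Phi_j$ you cannot conclude a margin of $\Omega(\log k)$ using only $\Phi_j\leq \tilde{O}(1)$: the $\tilde{O}(\cdot)$ hides polylog factors, so $\Phi_j$ could in principle be $(\log k)^{10}$ while $\Phi_{y,l_y}=\log k$, and then $0.4\,\Phi_j\gg \Phi_{y,l_y}$. The inequality you need is a \emph{balance} statement of the form $0.4\,\Phi_i-\Phi_j\leq -\Omega(\log k)$ for all $i,j$, which does not follow from the claim's hypotheses alone. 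The paper closes this gap by invoking Claim~D.16 of \cite{allen-zhu2023towards}, which establishes exactly this balance at the end of Phase~I. You should invoke that result (or an equivalent) rather than the raw upper bound from Induction Hypothesis~\ref{hyp}(d); once you do, the multi-view margin goes through as $F_y(X)\geq \Phi_y - o(1)\geq 0.4\,\Phi_j+\Omega(\log k)\geq F_j(X)+\Omega(\log k)$.
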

\begin{proof}
    Based on our definition of multi-view and single-view data, for any multi-view data $(X,y)\sim\calD_m$, when we have $\Phi_{i}\!\geq \! \Omega(\log k)$ ($\forall i\in[k]$), according to Claim~\ref{claim:func} and Claim D.16 in \cite{allen-zhu2023towards}, we have $0.4\Phi_i -\Phi_j\leq -\Omega(\log k)$, which means $F_y(X) \geq \max_{j\neq y}F_j(X) + \Omega(\log k)$. For any single-view data $(X,y)\sim\calD_s$ with $v_{y,l}$ as the main semantic feature, according to Claim~\ref{claim:func}, $F_y(X) \geq \Omega(\log k)$ and for $i\neq y$, $F_i(X) \leq O (\Gamma)$. Thus, we have $F_y(X) \geq \max_{j\neq y}F_j(X) + \Omega(\log k)$.
    In the above two cases, the network $F$ can correctly predict the label $y$ of $X$.
    
    For any single-view data $(X,y)\sim\calD_s$ with $v_{y,3-l}$ as the main semantic feature, according to Claim~\ref{claim:func}, $F_y(X) \leq \tilde{O}(\rho)+ \frac{1}{\polylogk}$ and with probability at least $1-e^{-\Omega(\log^2 k)}$ there exists $i\in[k]$ and $i\neq y$ such that $F_i(X) \geq \tilde{\Omega}(\Gamma) $. This means that $F_y(X)\leq \max_{i\neq y} F_i(X) -\frac{1}{\polylogk}$. In this case, the network $F$ will mistakenly predict the label of $X$.
\end{proof}

\section{Proof for Semantic-Aware FixMatch}
\label{app:safixmatch}
Here we consider to prove the SA-FixMatch case first. SA-FixMatch replaces CutOut operation in strong augmentation of FixMatch with SA-CutOut, which deterministically removes the learned features in Phase I. This helps to reduce the number of unlabeled samples needed during the Phase II training as shown in Assumption \ref{assp:para2}.  Since the learned features of Phase I are deterministically removed in SA-FixMatch, for the simplicity of theoretical analysis, we first prove the results under SA-FixMatch and then we can easily generalize the results to FixMatch. 

For theoretical proof, we assume that Grad-CAM in SA-CutOut can correctly identify the learned feature after the first stage. In this case, the formulation of strong augmentation $\calA(\cdot)$ with SA-CutOut for $X\sim\tilde{\calZ}_u$ and $(i,l^*)\in\calM\cap \calV(X)$ ($l^*$ varies depending on $i$) is
\begin{equation}\label{eq:app_sacutout}
\begin{aligned}
    \calA(x_p) = 
    \begin{cases}
        0,&~\mbox{if } p\in\calP_{v_{i,l^*}}(X),\\
        x_p,&~\mbox{otherwise}.
    \end{cases}
\end{aligned}
\end{equation}
In the following, we will begin to prove Theorem~\ref{appthm:sa_fixmatchmatch}. We first introduce some useful claims as consequences of the Induction Hypothesis~\ref{hyp}.

\subsection{Useful Claims} \label{app_phase2_growth}
Based on the results from~\cite{allen-zhu2023towards}, at the end of learning Phase I, for $(i,l^*)\in\calM$, we have $\Phi_{i,l^*}^{(T_1)}\geq\Omega(\log k)$ while $\Phi_{i,3-l^*}^{(T_1)}\leq 1/\polylogk$. 
Below the first claim addresses the initial growth of the correlations between $w_{i,r}$ and the unlearned feature ($\Phi_{i,3-l^*}^{(t)}$) in learning Phase II, and the second claim asserts that the correlations between $w_{i,r}$ and the learned feature ($\Phi_{i,l^*}^{(t)}$) are preserved during learning Phase II.
Here we first give a naive bound on the \(\logit\) function based on function approximation result Claim \ref{claim:func}.
\begin{claim}[approximation of logits]
\label{claim:logit}
Suppose Induction Hypothesis~\ref{hyp} holds at iteration $t$, and suppose $s\leq\tilde{O}(\frac{1}{\sigma_0^q m})$ and $\gamma\leq \tilde{O}(\frac{1}{\sigma_0 k (mP)^{1/q}})$, then 
\begin{itemize}
    \item for every $(X,y)\sim\tilde{\calZ}_{u,m}$ and $(i,l^*)\in\calM$: $\logit_i(F^{(t)}, \calA(X)) = O\left(\frac{e^{O(\Phi_{i,3-l^*})}}{e^{O(\Phi_{i,3-l^*})} + k}\right)$.
    \item for every $(X,y)\sim\tilde{\calZ}_{u,s}$ and every $i\in[k]$: $\logit_i(F^{(t)}, \calA(X)) = O\left(\frac{1}{k}\right)$.
\end{itemize}
\end{claim}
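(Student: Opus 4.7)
The plan is to invoke the function approximation Claim~\ref{claim:func} and exploit the deterministic feature-removal structure of SA-CutOut in Eq.~\eqref{eq:app_sacutout}. Write $(y,l^*)\in\calM$ for the Phase-I ``winning'' pair associated with the true label of $X$. Since SA-CutOut annihilates exactly the patches in $\calP_{v_{y,l^*}}(X)$ and leaves all other patches unchanged, Claim~\ref{claim:func} specializes to
$$F_i^{(t)}(\calA(X)) \;=\; \sum_{l\in[2]} \Phi_{i,l}^{(t)}\, Z_{i,l}^{(t)}(\calA(X)) \pm O\!\left(\tfrac{1}{\polylogk}\right), \qquad i\in[k],$$
where the augmented $Z_{i,l}^{(t)}(\calA(X))$ vanishes precisely when $(i,l)=(y,l^*)$ and otherwise equals its unaugmented value $\bbI_{v_{i,l}\in\calV(X)}\sum_{p\in\calP_{v_{i,l}}(X)}z_p$.

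For the multi-view case $(X,y)\sim\tilde{\calZ}_{u,m}$ and the true-label class $i=y$: since $v_{y,l^*}$ has been erased while $v_{y,3-l^*}$ survives with $\sum z_p=\Theta(1)$ (by Definition~\ref{app:def_data}), we obtain $F_y^{(t)}(\calA(X))=O(\Phi_{y,3-l^*}^{(t)})$ and hence numerator $e^{O(\Phi_{y,3-l^*}^{(t)})}$. For the denominator, observe that for each $j\neq y$ either of $v_{j,1},v_{j,2}$ enters $\calV(X)$ only as a noisy feature, independently with probability $s/k$; a Chernoff/union bound guarantees that, with probability at least $1-e^{-\Omega(\log^2 k)}$, only $O(s+\log k)=\polylogk$ classes $j\neq y$ contribute any such noisy feature to $X$. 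On the remaining $\Omega(k)$ ``silent'' classes $F_j^{(t)}(\calA(X))=O(1/\polylogk)$ and $e^{F_j}=\Theta(1)$, so the softmax denominator is at least $\Omega(k)$. Dividing numerator by denominator yields the claimed multi-view bound (for $i\neq y$ the analogous bound follows because either $v_{i,l}$ avoids $\calV(X)$ and $F_i$ is negligibly small, or it appears as a single noisy feature producing only an $\tilde{O}(1)$ contribution absorbed into the $O(\cdot)$ constant in the exponent).

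For the single-view case $(X,y)\sim\tilde{\calZ}_{u,s}$, membership in $\tilde{\calZ}_u$ (a confident, and therefore correct, pseudo-label) forces the main single-view feature to coincide with the learned feature, i.e.\ $\hat{l}(X)=l^*$, so SA-CutOut fully erases the dominant $v_{y,l^*}$. What survives of $X$ is the secondary feature $v_{y,3-l^*}$ with coefficient $O(\rho)=O(k^{-0.01})$ together with noisy features of coefficient $O(\Gamma)=O(1/\polylogk)$. Combined with the Induction Hypothesis bound $\Phi_{i,l}^{(t)}\leq \tilde{O}(1)$, this makes every $F_i^{(t)}(\calA(X))=O(1/\polylogk)$; hence $e^{F_i^{(t)}(\calA(X))}=\Theta(1)$ uniformly in $i\in[k]$, the softmax is nearly uniform, and $\logit_i(F^{(t)},\calA(X))=O(1/k)$. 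The main technical step throughout is the $\Omega(k)$ lower bound on the softmax denominator in the multi-view case; its proof hinges on the concentration of the number of active noisy classes via the Chernoff/union bound above, which is precisely the $1-e^{-\Omega(\log^2 k)}$ high-probability event already threaded through the Induction Hypothesis.
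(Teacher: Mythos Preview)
Your overall strategy (invoke Claim~\ref{claim:func} together with the deterministic feature-removal of SA-CutOut) is exactly the paper's, but two points deserve correction.

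\textbf{Denominator bound.} Your Chernoff/union-bound argument for the $\Omega(k)$ lower bound on $\sum_{j}e^{F_j^{(t)}(\calA(X))}$ is correct but unnecessary. Since $\trelu\geq 0$ pointwise, every $F_j^{(t)}(\calA(X))\geq 0$, so each summand is at least $1$ and the denominator is trivially $\geq k$. This is exactly how the paper proceeds; no concentration is needed.

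\textbf{The $i\neq y$ case in $\tilde{\calZ}_{u,m}$.} Here there is a genuine gap. You read Eq.~\eqref{eq:app_sacutout} as annihilating only the patches of $v_{y,l^*}$, but the paper's formulation is that SA-CutOut zeros out the patches of \emph{every} learned feature present in $X$, i.e.\ of every $v_{i,l^*}$ with $(i,l^*)\in\calM$ and $v_{i,l^*}\in\calV(X)$ (this is why the paper later uses $V_{i,r,l^*}(X)=0$ for all $i$ in Claim~\ref{claim:learned}). Under your reading, if $v_{i,l^*}$ happens to be a noisy feature of $X$ (which occurs with probability $s/k$), it survives augmentation and contributes $0.4\,\Phi_{i,l^*}^{(t)}$ to $F_i^{(t)}(\calA(X))$. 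But $\Phi_{i,l^*}^{(t)}\geq\Omega(\log k)$ throughout Phase~II (it is a learned feature), so this term is $\Omega(\log k)$ and cannot be ``absorbed into the $O(\cdot)$ constant in the exponent'' when the target is $e^{O(\Phi_{i,3-l^*}^{(t)})}$ with $\Phi_{i,3-l^*}^{(t)}\leq 1/\polylogk$ (as it is early in Phase~II). With the correct reading of SA-CutOut, only $v_{i,3-l^*}$ can survive for any $i$, and the paper's uniform bound $F_i^{(t)}(\calA(X))\leq 0.4\,\Phi_{i,3-l^*}^{(t)}+O(1/\polylogk)$ for $i\neq y$ follows directly from Claim~\ref{claim:func}, yielding the stated logit bound for every $i$.

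Your single-view argument is fine in outcome; note, though, that the paper only needs the cruder bound $F_i^{(t)}(\calA(X))\leq O(1)$ for every $i$ (again from $\Phi_{i,l}^{(t)}\leq\tilde{O}(1)$ and the $O(\Gamma)$ or $O(\rho)$ coefficients), which already gives $\logit_i=O(1/k)$ once the denominator is $\geq k$.
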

\begin{proof}
    Recall $F_i^{(t)}(\calA(X))=\sum_{r\in[m]} \sum_{p\in[P]} \trelu(\langle w_{i,r}, \calA(x_p)\rangle)$. According to Claim \ref{claim:func}, data distribution Def. \ref{app:def_data} and data augmentation defined in Eq. \eqref{eq:app_sacutout}, we have that for $(X,y)\sim\tilde{\calZ}_{u,m}$ and $(i, l^*)\in\calM$ ($l^*$ varies depending on $i$),
    \begin{align*}
        0\leq F_y^{(t)}(\calA(X)) \leq \Phi_{y, 3-l^*}^{(t)} \cdot O(1) + O(\frac{1}{\polylogk}),
    \end{align*}
    and for $i\in[k]\setminus \{y\}$,
    \begin{align*}
        0\leq F_i^{(t)}(\calA(X)) \leq \Phi_{i, 3-l^*}^{(t)} \cdot 0.4 +O(\frac{1}{\polylogk}).
    \end{align*}
    Thus, combining the above results, for every $(X,y)\sim\tilde{\calZ}_{u,m}$ and $(i, l^*)\in\calM$, we have for every $i\in [k]$,
    \begin{align*}
    \logit_i(F^{(t)}, \calA(X)) = O\left(\frac{e^{O(\Phi_{i,3-l^*}^{(t)})}}{e^{O(\Phi_{i,3-l^*}^{(t)})} + k}\right).
    \end{align*}
    On the other hand, for the single-view data in $\tilde{\calZ}_{u,s}$, as the only class-specific semantic feature is removed after we conduct strong augmentation, only noisy unlearned features and background noises remain. Thus, for $(X,y)\sim\tilde{\calZ}_{u,s}$  and $(i, l^*)\in\calM$, we have for $i\neq y$,
    \begin{align*}
        0\leq F_y^{(t)}(\calA(X)) \leq O(1) \quad \mbox{and} \quad 0\leq F_i^{(t)}(\calA(X)) \leq \Phi_{i, 3-l^*}^{(t)} \cdot O(\Gamma) + O(\frac{1}{\polylogk}) \le O(1),
    \end{align*}
    and thus we have for every $i\in [k]$,
    \begin{align*}
        \logit_i(F^{(t)}, \calA(X)) = O\left(\frac{1}{k}\right).
    \end{align*}
\end{proof}

Now we are ready to prove the following claim on the initial growth of $\Phi_{i,3-l^*}^{(t)}$ with $(i,l^*)\in \calM$.

\begin{claim}[initial growth]
\label{claim:growth}
    Suppose Induction Hypothesis~\ref{hyp} holds at iteration $t$, then for every $i\in[k]$ with $(i,l^*)\in \calM$, suppose $\Phi_{i,3-l^*}^{(t)} \leq O(1)$, then it satisfies
    \begin{align*}
        \Phi_{i,3-l^*}^{(t+1)} = \Phi_{i,3-l^*}^{(t)}+\tilde{\Theta}\left(\frac{\eta}{k}\right)\trelu'(\Phi_{i,3-l^*}^{(t)}).
    \end{align*}
\end{claim}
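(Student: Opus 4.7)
The plan is to start from the gradient descent update~\eqref{eqn:gd} and compute
$\Phi_{i,3-l^*}^{(t+1)}-\Phi_{i,3-l^*}^{(t)}=\sum_{r\in[m]}\bigl([\langle w_{i,r}^{(t+1)},v_{i,3-l^*}\rangle]^+-[\langle w_{i,r}^{(t)},v_{i,3-l^*}\rangle]^+\bigr)$.
Since $\eta\le 1/\polyk$, the per-step change in each inner product is far smaller than $\sigma_0$, so up to a vanishing set of indices $r$ the sign of $\langle w_{i,r},v_{i,3-l^*}\rangle$ is preserved, and each summand equals $-\eta\langle\nabla_{w_{i,r}}L^{(t)},v_{i,3-l^*}\rangle$ times the appropriate sign indicator. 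I would then split $L^{(t)}=L_s^{(t)}+L_u^{(t)}$; because Phase~I drives $L_s^{(t)}\le 1/\polyk$ (Appendix~\ref{app:thm_supervised}), its gradient projection onto $v_{i,3-l^*}$ is absorbed into lower-order error terms, so only the unsupervised loss drives the growth.

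For the unsupervised part, I decompose $\bbE_{(X,y)\sim\tcalZ_u}$ according to whether the (correct) pseudo-label equals $i$. On samples with $y=i$, Claim~\ref{claim:pos}(a)--(b) combined with Claim~\ref{claim:logit} (applicable because $\Phi_{i,3-l^*}^{(t)}\le O(1)$ and hence $\logit_i=O(1/k)$) yield a positive contribution $(1-O(1/k))\cdot V_{i,r,3-l^*}(X)$, up to the error terms $\calE_1+\calE_3$. On samples with $y\neq i$, Claim~\ref{claim:neg} together with the same logit bound gives $|\langle-\nabla_{w_{i,r}}L_u,v_{i,3-l^*}\rangle|\le O(1/k)\cdot(V_{i,r,3-l^*}(X)+\calE_1+\calE_3+\calE_{4,i,3-l^*}(X)+\tilde{O}(\sigma_p P))$, so the extra factor $1/k$ makes this strictly dominated by the positive contribution.

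The heart of the argument is converting $\sum_r V_{i,r,3-l^*}(X)$ into $\tilde{\Theta}(\trelu'(\Phi_{i,3-l^*}^{(t)}))$. The SA-CutOut rule~\eqref{eq:app_sacutout} zeroes only the patches in $\calP_{v_{i,l^*}}(X)$, so on $p\in\calP_{v_{i,3-l^*}}(X)$ we have $\calA(x_p)=x_p$, and Induction Hypothesis~\ref{hyp}(a) gives $\langle w_{i,r}^{(t)},\calA(x_p)\rangle=\langle w_{i,r}^{(t)},v_{i,3-l^*}\rangle z_p\pm\tilde{o}(\sigma_0)$. Setting $a_r:=[\langle w_{i,r}^{(t)},v_{i,3-l^*}\rangle]^+$ so that $\sum_r a_r=\Phi_{i,3-l^*}^{(t)}$, I would invoke standard power-mean inequalities for $\trelu'$ (which behaves like $z\mapsto z^{q-1}/\varrho^{q-1}$ in the low-magnitude regime and linearly thereafter): the bounds $\sum_r a_r^{q-1}\le(\sum_r a_r)^{q-1}\le m^{q-2}\sum_r a_r^{q-1}$, together with $m=\polylogk$, pin $\sum_r\trelu'(a_r z_p)$ at $\tilde{\Theta}(\trelu'(\Phi_{i,3-l^*}^{(t)}))\cdot z_p^{q-1}$ up to $\polylogk$ factors, and summing over patches with $\sum_{p\in\calP_{v_{i,3-l^*}}(X)}z_p=\Theta(1)$ yields $\sum_r V_{i,r,3-l^*}(X)=\tilde{\Theta}(\trelu'(\Phi_{i,3-l^*}^{(t)}))\cdot\bbI_{v_{i,3-l^*}\in\calV(X)}$.

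Aggregating, the fraction of samples in $\tcalZ_u$ with $y=i$ is $\tilde{\Theta}(1/k)$ by label balance and Parameter Assumption~\ref{assp:para}, and for every such sample $v_{i,3-l^*}$ is a main feature, so the positive contribution totals $\tilde{\Theta}(\eta/k)\trelu'(\Phi_{i,3-l^*}^{(t)})$ in expectation; the error terms $\calE_1,\calE_3,\calE_{4,i,3-l^*}$ are all $\ll\trelu'(\Phi_{i,3-l^*}^{(t)})$ under Parameter Assumption~\ref{assp:para}. The main obstacle is the bookkeeping: verifying that (a) the signs of $\langle w_{i,r}^{(t)},v_{i,3-l^*}\rangle$ on the summed neurons persist so that $\Phi$ genuinely tracks the gradient we have bounded, and (b) the aggregate of the negative-gradient contributions from classes $j\neq i$, together with the noise terms $\calE_1,\calE_{2,i,r}(X),\calE_3,\calE_{4,j,l}(X)$, is strictly smaller than the main $\trelu'(\Phi_{i,3-l^*}^{(t)})$ growth; this relies on Parameter Assumption~\ref{assp:para} and on the fact that $\logit_j=O(1/k)$ provides the needed suppression.
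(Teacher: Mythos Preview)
Your proposal is correct and follows essentially the same approach as the paper: both argue that the supervised gradient is negligible in Phase~II, invoke Claims~\ref{claim:pos},~\ref{claim:neg},~\ref{claim:logit} to bound the unsupervised gradient projection onto $v_{i,3-l^*}$, use $\Pr(v_{i,3-l^*}\in\calV(X)\mid y\neq i)=s/k$ to suppress the negative contribution, and exploit $m=\polylogk$ to convert between per-neuron inner products and $\Phi_{i,3-l^*}^{(t)}$.

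The one presentational difference is in how the conversion to $\trelu'(\Phi_{i,3-l^*}^{(t)})$ is carried out. You sum over all $r\in[m]$ and use the power-mean inequalities $\sum_r a_r^{q-1}\le(\sum_r a_r)^{q-1}\le m^{q-2}\sum_r a_r^{q-1}$; the paper instead isolates the single index $r=\argmax_{r}\langle w_{i,r}^{(t)},v_{i,3-l^*}\rangle$, notes that $\langle w_{i,r}^{(t)},v_{i,3-l^*}\rangle\ge\tilde{\Omega}(\Phi_{i,3-l^*}^{(t)})$ since $m=\polylogk$, and obtains the lower bound on $\Phi^{(t+1)}-\Phi^{(t)}$ from this one neuron's growth alone (the upper bound then follows from $\langle w_{i,r}^{(t)},v_{i,3-l^*}\rangle\le\Phi_{i,3-l^*}^{(t)}$ for every $r$). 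The paper's shortcut sidesteps your bookkeeping point~(a) about sign persistence, since for the lower bound only the argmax neuron (which is positive by Induction Hypothesis~\ref{hyp}(d)) needs to be tracked, and the other neurons are handled by Hypothesis~\ref{hyp}(e). Your route is equally valid and perhaps more symmetric, but requires the extra sign argument you flagged.
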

\begin{proof}
    For any \(w_{i,r}\) and \(v_{i,3-l^*}\) (\(i\in[k],r\in[m]\)), we have
    \begin{align*}
        \langle w_{i,r}^{(t+1)}, v_{i,3-l^*}\rangle 
        &=\langle w_{i,r}^{(t)}, v_{i,3-l^*}\rangle - \eta \bbE_{(X,y)\sim\calZ_l}[\langle \nabla_{w_{i,r}} L_s(F^{(t)};X,y) , v_{i,3-l^*}\rangle]\\
        &\quad - \eta \bbE_{(X,y)\sim\tilde{\calZ}_u}[\langle \nabla_{w_{i,r}} L_u(F^{(t)};X,y) , v_{i,3-l^*}\rangle].
    \end{align*}
    For the loss term on $\calZ_l$, we have 
    \begin{align*}
        & - \bbE_{(X,y)\sim\calZ_l}[\langle \nabla_{w_{i,r}} L_s(F^{(t)};X,y) , v_{i,3-l^*}\rangle] \\= & \bbE_{(X,y)\sim\calZ_l}\big[\bbI_{i=y}(1-\logit_i(F,X)) \sum\nolimits_{p\in[P]} \trelu'(\langle w_{i,r}, x_p\rangle )\langle x_p, v_{i,3-l^*}\rangle - \\
        &\bbI_{i\neq y} \logit_i(F,X) \sum\nolimits_{p\in[P]} \trelu'(\langle w_{i,r}, x_p\rangle ) \langle x_p, v_{i,3-l^*}\rangle \big].
    \end{align*}
    Based on the results from~\cite{allen-zhu2023towards}, when $t\geq T_1$, we have 
    \begin{align*}    \bbE_{(X,y)\sim\calZ_l}\big[(1-\logit_y(F,X)) \big]\leq  \frac{1}{\polyk} \quad \mbox{and} \quad \bbE_{(X,y)\sim\calZ_l}\big[ \bbI_{i\neq y} \logit_i(F,X) \big]\leq  \frac{1}{\polyk},
    \end{align*}
    which means
    \begin{align*}
    - \bbE_{(X,y)\sim\calZ_l}[\langle \nabla_{w_{i,r}} L_s(F^{(t)};X,y) , v_{i,3-l^*}\rangle] \leq  \frac{1}{\polyk},
    \end{align*}
    i.e., the supervised loss is fully minimized in Phase I and contributes little in Phase II.
    
    Then, from Claim \ref{claim:pos} and Claim \ref{claim:neg}, we know
    \begin{align*}
    \langle w_{i,r}^{(t+1)}, v_{i,3-l^*}\rangle \geq & \langle w_{i,r}^{(t)}, v_{i,3-l^*} \rangle + \eta \mathbb{E}_{(X,y) \sim \tcalZ_u} [ \mathbb{I}_{y=i} ( V_{i,r,3-l^*}(X) - \tilde{O}(\sigma_p P) ) ( 1 - \logit_i(F^{(t)}, \calA(X)) )\\ & - \mathbb{I}_{y \neq i} ( \mathcal{E}_1 + \mathcal{E}_3 + \mathbb{I}_{v_{i,3-l^*} \in \mathcal{P}(X)} V_{i,r,3-l^*}(X) ) \logit_i(F^{(t)}, \calA(X))].
    \end{align*}
    Consider \(r = \argmax_{r\in [m]} \{\langle w_{i,r}^{(t)}, v_{i,3-l^*}\rangle\}\), then as \(m = \polylogk\), we know \(\langle w_{i,r}^{(t)}, v_{i,3-l^*}\rangle \ge \tilde{\Omega}(\Phi_{i,3-l^*}^{(t)})\).
    Recall \(V_{i,r,3-l^*}(X):=\bbI_{v_{i,3-l^*}\in\calV(X)}\sum_{p\in\calP_{v_{i,3-l^*}}(X)} \trelu'(\langle w_{i,r}^{(t)}, \calA(x_p)\rangle)\calA(z_p)\), according to Induction Hypothesis \ref{hyp}(a) and definition in Eq. \eqref{eq:app_sacutout}, we have
    \begin{align*}
        V_{i,r,3-l^*}(X)= \bbI_{v_{i,3-l^*}\in\calV(X)}\sum_{p\in\calP_{v_{i,3-l^*}}(X)} \trelu'(\langle w_{i,r}^{(t)}, v_{i,3-l^*}\rangle z_p + \tilde{o}(\sigma_0)) z_p.
    \end{align*}
\begin{itemize}
    \item When \(i=y\), at least for \((X,y)\sim \tcalZ_{u,m}\), we have \(\sum_{p\in \calP_{v_{i,3-l^*}}(X)} z_p \ge 1\), and together with \(|\calP_{v_{i,3-l^*}}| \le C_p\), we know \(V_{i,r,3-l^*} \ge \Omega(1) \cdot \trelu'(\langle w_{i,r}^{(t)}, v_{i,3-l^*}\rangle)\).
    \item When \(i \neq y\) and when \(v_{i,3-l^*} \in \calP(X)\), we can use \(\sum_{p\in \calP_{v_{i,3-l^*}}(X)} z_p \le 0.4\) to derive that \(V_{i,r,3-l^*}\le 0.4\cdot \trelu'(\langle w_{i,r}^{(t)}, v_{i,3-l^*}\rangle)\).
\end{itemize}
    Moreover, when \(\Phi_{i,3-l^*}^{(t)} \le O(1)\), by Claim \ref{claim:logit}, we have \(\logit_i(F^{(t)}, \calA(X)) \le O(\frac{1}{k})\). Then we can derive that
    \begin{align*}
    \langle w_{i,r}^{(t+1)}, v_{i,3-l^*} \rangle \geq & \langle w_{i,r}^{(t)}, v_{i,l} \rangle +  \eta \mathbb{E}_{(X,y) \sim \tcalZ_u}[ \mathbb{I}_{y=i} \cdot \Omega(1) - O(1) \cdot \mathbb{I}_{y \neq i} \mathbb{I}_{v_{i,3-l^*} \in \mathcal{P}(X)} \cdot \frac{1}{k} ] \\ & \cdot \trelu' \langle w_{i,r}^{(t)}, v_{i,3-l^*} \rangle  - \eta \tilde{O}( \frac{\sigma_p P + \mathcal{E}_1 + \mathcal{E}_3}{k}).
    \end{align*}
    Finally, recall that $\Pr(v_{i,3-l^*}\in\calP(X)|i\neq y)=\frac{s}{k}\ll o(1)$, we have that
    \begin{align*}
        \langle w_{i,r}^{(t+1)}, v_{i,3-l^*}\rangle 
        &\geq \langle w_{i,r}^{(t)}, v_{i,3-l^*}\rangle +\tilde{\Omega}\left(\frac{\eta }{k}\right)\trelu'(\langle w_{i,r}^{(t)}, v_{i,3-l^*}\rangle).
    \end{align*}
    Similarly, using Claim \ref{claim:pos} and \ref{claim:neg}, we can derive:
    \begin{align*}
    \langle w_{i,r}^{(t+1)}, v_{i,3-l^*} \rangle \leq \langle w_{i,r}^{(t)}, v_{i,3-l^*} \rangle + \eta \mathbb{E}_{(X,y) \sim \tcalZ_u} \big[ & \mathbb{I}_{y=i} ( V_{i,r,3-l^*}(X) + \mathcal{E}_1 + \mathcal{E}_3 ) ( 1 - \logit_i(F^{(t)}, X) ) \\ &- \mathbb{I}_{y \neq i} \tilde{O}(\sigma_p P)  \logit_i(F^{(t)}, X) \big].
    \end{align*}
    With similar analyses to the upper bound, we can derive the lower bound
    \begin{align*}
    \langle w_{i,r}^{(t+1)}, v_{i,3-l^*}\rangle 
    &\leq \langle w_{i,r}^{(t)}, v_{i,3-l^*}\rangle +\tilde{O}\left(\frac{\eta }{k}\right)\trelu'(\langle w_{i,r}^{(t)}, v_{i,3-l^*}\rangle).
\end{align*}
\end{proof}

With initial growth analysis in Claim~\ref{claim:growth}, similar to Claim D.11 in~\citet{allen-zhu2023towards}, we can obtain the following result:
\begin{claim}
\label{claim:feature}
    Define iteration threshold $T_0:=\tilde{\Theta}\left(\frac{k}{\eta\sigma_0^{q-2}}\right)$, then for every $i\in[k], (i,l^*)\in \calM$ and $t\geq T_1+T_0$, it satisfies that $\Phi_{i, 3-l^*}^{(t)}=\Theta(1)$.
\end{claim}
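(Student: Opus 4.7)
The plan is to reduce Claim~\ref{claim:feature} to a scalar tensor-power-method recursion on $\Phi := \Phi_{i,3-l^*}^{(t)}$ and count iterations across the two regimes of $\trelu$. At the end of Phase I, the ``lottery-losing'' feature satisfies $\Phi_{i,3-l^*}^{(T_1)} = \tilde{\Theta}(\sigma_0)$: Gaussian initialization with standard deviation $\sigma_0$ and sum over $m=\polylogk$ kernels produces an initial positive correlation of order $\tilde{\Theta}(\sigma_0)$, and the supervised Phase I analysis in Appendix~\ref{app:thm_supervised} keeps it bounded above by $1/\polylogk$ while the non-negativity of the growth increment prevents it from shrinking below its initialization scale up to polylog factors. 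Plugging this initial value into the recursion $\Phi^{(t+1)} = \Phi^{(t)} + \tilde{\Theta}(\eta/k)\,\trelu'(\Phi^{(t)})$ from Claim~\ref{claim:growth} (valid as long as $\Phi^{(t)}\le O(1)$) then drives the entire analysis.

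Next, I would split the trajectory according to the piecewise definition of $\trelu$. In the polynomial regime $\Phi^{(t)} \in [\tilde{\Theta}(\sigma_0),\varrho]$, we have $\trelu'(\Phi) = (\Phi/\varrho)^{q-1}$, so the recursion is well-approximated by the separable ODE
\begin{equation*}
\frac{d\Phi}{dt} \;=\; \tilde{\Theta}\!\left(\frac{\eta}{k\varrho^{q-1}}\right)\Phi^{q-1}.
\end{equation*}
Integrating gives escape time $\tilde{\Theta}(k\varrho^{q-1}/(\eta \sigma_0^{q-2}))$, and since $\varrho = 1/\polylogk$ is absorbed into $\tilde{\Theta}$, this is exactly $\tilde{\Theta}(k/(\eta\sigma_0^{q-2}))$. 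In the subsequent linear regime $\Phi^{(t)}\in[\varrho,\Theta(1)]$, $\trelu'(\Phi)=1$ and an additional $\tilde{\Theta}(k/\eta)$ iterations suffice, which is dominated by the polynomial phase. Summing, after $T_0 = \tilde{\Theta}(k/(\eta\sigma_0^{q-2}))$ extra iterations we obtain $\Phi_{i,3-l^*}^{(T_1+T_0)} \ge \Omega(1)$.

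For the matching upper bound $\Phi_{i,3-l^*}^{(t)} \le O(1)$, I would invoke Induction Hypothesis~\ref{hyp}(d) which already maintains $\Phi_{i,l}^{(t)} \le \tilde{O}(1)$ throughout Phase II; combined with the monotonicity of $\Phi$ (the increment is non-negative since $\trelu'\ge 0$), this yields $\Phi_{i,3-l^*}^{(t)} \in [\Omega(1), O(1)] = \Theta(1)$ for every $t \ge T_1+T_0$, as claimed. Monotonicity also settles the concern that Claim~\ref{claim:growth} requires $\Phi^{(t)} \le O(1)$: the induction hypothesis guarantees this hypothesis stays in force throughout.

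The main obstacle I expect is the rigorous discrete-to-continuous transfer in the polynomial regime: one must show that the Euler step $\tilde{\Theta}(\eta/k)\Phi^{q-1}/\varrho^{q-1}$ does not overshoot the ODE solution and does not accumulate polylog slack across the $\tilde{\Theta}(k/(\eta\sigma_0^{q-2}))$ iterations. The standard fix is to sandwich $\Phi^{(t)}$ between two ODEs whose rate constants differ by the hidden polylog factor, verify that each integrates to the same $\tilde{\Theta}(k/(\eta\sigma_0^{q-2}))$ escape time, and separately check that the step size $\eta \le 1/\polyk$ from Parameter Assumption~\ref{assp:para} is sufficient for Euler's method to track each envelope with $O(1)$ multiplicative error, exactly as in the analogous Claim~D.11 of~\citet{allen-zhu2023towards}.
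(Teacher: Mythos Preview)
Your proposal is correct and takes essentially the same approach as the paper: the paper's own proof of Claim~\ref{claim:feature} consists entirely of the sentence ``With initial growth analysis in Claim~\ref{claim:growth}, similar to Claim D.11 in~\citet{allen-zhu2023towards}, we can obtain the following result,'' and you have reconstructed precisely that tensor-power-method two-regime analysis (polynomial then linear part of $\trelu'$), starting from $\Phi_{i,3-l^*}^{(T_1)}\ge\tilde{\Omega}(\sigma_0)$, integrating the $\Phi^{q-1}$ ODE to get the $\tilde{\Theta}(k/(\eta\sigma_0^{q-2}))$ escape time, and capping the upper bound via Induction Hypothesis~\ref{hyp}(d). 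One small wording point: you say $\Phi_{i,3-l^*}^{(T_1)}=\tilde{\Theta}(\sigma_0)$ but only the lower bound $\tilde{\Omega}(\sigma_0)$ is needed (and is what Hypothesis~\ref{hyp}(d) supplies), since a larger starting value only speeds up escape; this does not affect the argument.
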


As we stated in Claim~\ref{claim:logit}, model prediction for the augmented single-view data in $\tcalZ_{u,s}$ are kept to the scale of $O\left(\frac{1}{k}\right)$, since after strong augmentation there remains only noises. Now we present the convergence of multi-view data in $\tcalZ_{u,m}$ from $T_1+T_0$ till the end.
\begin{claim}[multi-view error till the end]
\label{claim:error}
    Suppose that the Induction Hypothesis~\ref{hyp} holds for every iteration $T_1<t\le T_1+T_2$, and suppose $\frac{\tilde{N}_{c,s}}{\tilde{N}_{c}}\leq \frac{k^2}{\eta sT_2}$, then 
    \begin{align*}
         \sum_{t=T_1+T_0}^{T_1+T_2} \bbE_{(X,y)\sim\tilde{\calZ}_{u,m}}[1-\logit_y(F^{(t)}, \calA(X))]\leq \tilde{O}\left(\frac{k}{\eta}\right).
    \end{align*}
\end{claim}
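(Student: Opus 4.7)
The plan is to convert each multi-view sample's prediction error $1-\logit_y$ into measurable growth of the unlearned-feature correlation $\Phi_{y,3-l^*}^{(t)}$, and then bound the telescoped sum against the uniform ceiling $\Phi_{i,3-l^*}^{(t)}\le\tilde{O}(1)$ from Induction Hypothesis~\ref{hyp}(d). The deterministic feature removal of SA-CutOut (Eq.~\eqref{eq:app_sacutout}) is the crucial enabler: on every multi-view sample it routes the gradient signal exactly into the unlearned coordinate $v_{y,3-l^*}$ rather than dissipating it across directions, which is precisely what makes the resulting growth proportional to $1-\logit_y$.

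First I would record that, for $(X,y)\in\tilde{\calZ}_{u,m}$ with $(y,l^*)\in\calM$, SA-CutOut preserves $\calP_{v_{y,3-l^*}}(X)$ unchanged, so by Induction Hypothesis~\ref{hyp}(a) and $\sum_{p}z_p\in[1,O(1)]$ one gets $V_{y,r,3-l^*}(X)\ge\Omega(1)\,\trelu'(\langle w_{y,r}^{(t)},v_{y,3-l^*}\rangle)$. Claim~\ref{claim:feature} keeps $\Phi_{y,3-l^*}^{(t)}=\Theta(1)$ over $[T_1+T_0,T_1+T_2]$, placing the dominating kernel in the linear regime where $\trelu'=1$, so Claim~\ref{claim:pos}(a) applied at $i=y$ produces a full-strength $(1-\logit_y)$ signal. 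Folding in the $i\ne y$ corrections from Claim~\ref{claim:neg} (each scaled down by $\logit_i=O(1/k)$ via Claim~\ref{claim:logit}) and absorbing the residuals $\calE_1,\calE_3,\tilde{O}(\sigma_pP)$ through Parameter Assumption~\ref{assp:para}, the per-class growth inequality reads
\begin{equation*}
\Phi_{i,3-l^*}^{(t+1)}-\Phi_{i,3-l^*}^{(t)}\ \ge\ \Omega(\eta)\,\bbE_{(X,y)\sim\tilde{\calZ}_u}\!\Bigl[\bbI_{y=i}\,\bbI_{\mathrm{mv}}(X)\bigl(1-\logit_i(F^{(t)},\calA(X))\bigr)\Bigr]\ -\ \tilde{o}(\eta/k^2),
\end{equation*}
with $\bbI_{\mathrm{mv}}(X)$ the multi-view indicator. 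Summing over $i\in[k]$ and $t\in\{T_1+T_0,\ldots,T_1+T_2-1\}$ telescopes the LHS to $\sum_i\Phi_{i,3-l^*}^{(T_1+T_2)}\le k\cdot\tilde{O}(1)=\tilde{O}(k)$, while on the RHS the regrouping $\bbE_{\tilde{\calZ}_u}[\bbI_{\mathrm{mv}}(1-\logit_y)]=\tfrac{\tilde{N}_{u,m}}{\tilde{N}_u}\bbE_{\tilde{\calZ}_{u,m}}[1-\logit_y]$ together with the hypothesized ratio $\tilde{N}_{u,s}/\tilde{N}_u\le k^2/(\eta sT_2)=o(1)$ gives $\tilde{N}_{u,m}/\tilde{N}_u=1-o(1)$; the accumulated residual $T_2\cdot\tilde{o}(\eta/k^2)$ is subleading under $T_2=\polyk/\eta$, and dividing by the $\Omega(\eta)$ coefficient yields the claimed $\tilde{O}(k/\eta)$ bound.

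The main obstacle I anticipate is controlling the negative-gradient pollution from non-target classes: if $\logit_i$ on SA-augmented data were not tiny, the $i\ne y$ contributions from Claim~\ref{claim:neg} could cancel the positive signal and destroy the monotone-potential argument. This is saved by Claim~\ref{claim:logit}, whose $O(1/k)$ bound holds precisely because SA-CutOut leaves only the heavily attenuated $v_{y,3-l^*}$ (plus noises) behind after deleting the learned feature. A secondary subtlety is justifying that a kernel in the $v_{y,3-l^*}$ direction remains in the linear regime $\trelu'=1$ uniformly across the entire $T_2$-long sweep; this is handled by bootstrapping Claim~\ref{claim:feature}, which also motivates why the summation index starts at $T_1+T_0$ rather than at $T_1$, namely to guarantee the initial-growth burn-in has already elevated every $\Phi_{y,3-l^*}^{(t)}$ past the threshold $\varrho$ before the telescoping argument kicks in.
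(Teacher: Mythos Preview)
Your proposal is correct and takes essentially the same approach as the paper: lower-bound the per-iteration growth of $\Phi_{i,3-l^*}^{(t)}$ via Claims~\ref{claim:pos} and~\ref{claim:neg} in the linear regime of $\trelu$ supplied by Claim~\ref{claim:feature}, then telescope against the ceiling $\Phi_{i,3-l^*}^{(t)}\le\tilde{O}(1)$ from Induction Hypothesis~\ref{hyp}(d). Your summing over $i\in[k]$ at the end is a harmless bookkeeping variant of the paper's per-class treatment (the paper instead passes from $\bbE[\bbI_{y=i}(1-\logit_y)]$ to $\tfrac{1}{k}\bbE[1-\logit_y]$ by symmetry); one small refinement is that after the sum the aggregated negative contribution is most cleanly controlled by $\sum_{i\ne y}\bbI_{v_{i,3-l^*}\in\calP(X)}\logit_i\le\sum_{i\ne y}\logit_i=1-\logit_y$, giving the $0.4$ vs.\ $0.89$ comparison directly, rather than by the per-class bound $\logit_i=O(1/k)$ you invoke, which Claim~\ref{claim:logit} does not guarantee uniformly once $\Phi_{i,3-l^*}$ grows past $O(1)$.
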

\begin{proof}
     For any \(w_{i,r}\) and \(v_{i,3-l^*}\) (\(i\in[k],r\in[m]\)), we have
    \begin{align*}
        \langle w_{i,r}^{(t+1)}, v_{i,3-l^*}\rangle 
        &=\langle w_{i,r}^{(t)}, v_{i,3-l^*}\rangle - \eta \bbE_{(X,y)\sim\calZ_l}[\langle \nabla_{w_{i,r}} L_s(F^{(t)};X,y) , v_{i,3-l^*}\rangle]\\
        &\quad - \eta \bbE_{(X,y)\sim\tilde{\calZ}_u}[\langle \nabla_{w_{i,r}} L_u(F^{(t)};X,y) , v_{i,3-l^*}\rangle].
    \end{align*}
    For the loss term on $\calZ_l$, as discussed above, we have 
    \begin{align*}
        - \bbE_{(X,y)\sim\calZ_l}[\langle \nabla_{w_{i,r}} L_s(F^{(t)};X,y) , v_{i,3-l^*}\rangle] \leq  \frac{1}{\polyk}.
    \end{align*}

    Again, by Claim \ref{claim:pos} and Claim \ref{claim:neg}, we know
    \begin{align*}
    \langle w_{i,r}^{(t+1)}, v_{i,3-l^*}\rangle \geq & \langle w_{i,r}^{(t)}, v_{i,3-l^*} \rangle + \eta \mathbb{E}_{(X,y) \sim \tcalZ_u} [ \mathbb{I}_{y=i} ( V_{i,r,3-l^*}(X) - \tilde{O}(\sigma_p P) ) ( 1 - \logit_i(F^{(t)}, \calA(X)) )\\ & - \mathbb{I}_{y \neq i} ( \mathcal{E}_1 + \mathcal{E}_3 + \mathbb{I}_{v_{i,3-l^*} \in \mathcal{P}(X)} V_{i,r,3-l^*}(X) ) \logit_i(F^{(t)}, \calA(X))].
    \end{align*}
    Take \(r = \argmax_{r\in [m]} \{\langle w_{i,r}^{(t)}, v_{i,3-l^*}\rangle\}\), then by \(m=\polylogk\) we know \(\langle w_{i,r}^{(t)}, v_{i,3-l^*}\rangle \ge \tilde{\Omega}(\Phi_{i,3-l^*}^{(t)})=\tilde{\Omega}(1)\) for \(t\ge T_1+T_0\). 
    
    Recall \(V_{i,r,3-l^*}(X):= \bbI_{v_{i,3-l^*}\in\calV(X)} \sum_{p\in\calP_{v_{i,3-l^*}}(X)} \trelu'(\langle w_{i,r}^{(t)}, \calA(x_p)\rangle)\calA(z_p)\) and our definition of $\calA$, using the Induction Hypothesis~\ref{hyp}, we have that for $(X,y)\sim\tilde{\calZ}_{u,m}$, it satisfies
    \begin{align*}
       V_{i,r,3-l^*}(X)=\sum\nolimits_{p\in\calP_{v_{i,3-l^*}}(X)}\trelu'(\langle w_{i,r}^{(t)}, v_{i,3-l^*}\rangle z_p \pm \tilde{o}(\sigma_0))z_p.
    \end{align*}
    Since we have $\langle w_{i,r}^{(t)}, v_{i,3-l^*}\rangle\geq \tilde{\Omega}(1)\gg \varrho$ and $|\calP_{v_{i,3-l^*}}(X)|\leq O(1)$, for most of $p\in\calP_{v_{i,3-l^*}}$, we have alreadly in the linear regime of $\trelu$ so
    \begin{align*}
        0.9 \sum\nolimits_{p\in\calP_{v_{i,3-l^*}}(X) }z_p \leq V_{i,r,3-l^*}(X) \leq \sum\nolimits_{p\in\calP_{v_{i,3-l^*}}(X) }z_p.
    \end{align*}
    Thus, when $(X,y)\sim\tilde{\calZ}_{u,m}$ and $y=i$, we have $V_{i,r,3-l^*}(X)\geq 0.9$; when $(X,y)\sim\tilde{\calZ}_{u,m}$, $y\neq i$ and $v_{i, 3-l^*}\in\calP(X)$, we have $V_{i,r,3-l^*}(X)\leq 0.4$. When $(X,y)\sim\tilde{\calZ}_{u,s}$ and $y=i$, we have $V_{i,r,3-l^*}(X)\geq 0$; when $(X,y)\sim\tilde{\calZ}_{u,s}$, $y\neq i$ and $v_{i, 3-l^*}\in\calP(X)$, we have $V_{i,r,3-l^*}(X)\leq O(\Gamma) \ll o(1)$. Then we can derive that 
    \begin{align*}
        \langle w_{i,r}^{(t+1)},& v_{i,3-l^*}\rangle \\
        &\geq \langle w_{i,r}^{(t)}, v_{i,3-l^*}\rangle + \eta \bbE_{(X,y)\sim\tilde{\calZ}_{u,m}}\big[0.89\cdot\bbI_{y=i}(1-\logit_i(F^{(t)},\calA(X)))\big]\\
        &\quad - \eta \bbE_{(X,y)\sim\tilde{\calZ}_{u,m}}\big[\bbI_{y\neq i}(\calE_1+\calE_3+0.4\bbI_{v_{i, 3-l^*}\in\calP(X)})\logit_i(F^{(t)},\calA(X))\big]\\
        &\quad - O\left(\frac{\eta \tilde{N}_{c,s}}{\tilde{N}_{c}}\right)\bbE_{(X,y)\sim\tilde{\calZ}_{u,s}}\big[\tilde{O}(\sigma_p P)\bbI_{y=i}(1-\logit_i(F^{(t)},\calA(X)))\big]\\
        &\quad -  O\left(\frac{\eta \tilde{N}_{c,s}}{\tilde{N}_{c}}\right) \bbE_{(X,y)\sim\tilde{\calZ}_{u,s}}\big[\bbI_{y\neq i}(\calE_1+\calE_3+\bbI_{v_{i, 3-l^*}\in\calP(X)})\logit_i(F^{(t)},\calA(X))\big]\\
        &\geq \langle w_{i,r}^{(t)}, v_{i,3-l^*}\rangle +\Omega\left(\frac{\eta}{k}\right)\bbE_{(X,y)\sim\tilde{\calZ}_{u,m}}\left[ (1-\logit_y(F^{(t)},\calA(X)))\right] - O\left(\frac{\eta s\tilde{N}_{c,s}}{k^2\tilde{N}_{c}}\right),
    \end{align*}
    where the last step is based on Claim~\ref{claim:logit} and the fact that $\Pr(v_{i,l}\in\calP(X)|i\neq y)=\frac{s}{k}\ll o(1)$. Thus, when summing up all $r\in [m]$, and telescoping from $T_1+T_0$ to $T_1+T_2$, we have 
    \begin{align*}
        \Phi_{i, 3-l^*}^{(T_1+T_2)} \geq \Phi_{i, 3-l^*}^{(T_1+T_0)} + \tilde\Omega\left(\frac{\eta}{k}\right)\sum_{t=T_1+T_0}^{T_1+T_2}\bbE_{(X,y)\sim\tilde{\calZ}_{u,m}}\left[ (1-\logit_y(F^{(t)},\calA(X)))\right] -\tilde{O}\left(\frac{T_2\eta s\tilde{N}_{c,s}}{k^2\tilde{N}_{c}}\right).
    \end{align*}
    Then combining that $\Phi_{i, 3-l^*}^{(t)}\leq \tilde{O}(1)$ from the Induction Hypothesis~\ref{hyp}(d), we have:
    \begin{align*}
        \sum_{t=T_1+T_0}^{T_1+T_2} \bbE_{(X,y)\sim\tilde{\calZ}_{u,m}}\left[ (1-\logit_i(F^{(t)},\calA(X)))\right] \leq \tilde{O}\left(\frac{k}{\eta}\right) + \tilde{O}\left(\frac{T_2 s\tilde{N}_{c,s}}{k\tilde{N}_{c}}\right)\leq \tilde{O}\left(\frac{k}{\eta}\right).
    \end{align*}
\end{proof}

Now we are ready to prove the following claim on the correlations between model kernels and the learned feature $\Phi_{i,l^*}^{(t)}$ is retained during learning Phase II.
\begin{claim}[learned is retained]
\label{claim:learned}
Suppose that the Induction Hypothesis~\ref{hyp} holds for every iteration $T_1<t\le T_1+T_2$, under Parameter Assumption \ref{assp:para} and \ref{assp:para2}, we have for every iteration $T_1<t\le T_1+T_2$:
\[
\forall (i,l^*)\in \calM, \quad \Phi_{i,l^*}^{(t)}\ge \Omega(\log(k)).
\]
\end{claim}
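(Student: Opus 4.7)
The plan is to show that the cumulative decrease of $\Phi_{i,l^*}^{(t)}$ during Phase~II is $o(\log k)$, which, combined with the starting condition $\Phi_{i,l^*}^{(T_1)}\geq \Omega(\log k)$ from Phase~I for every $(i,l^*)\in\calM$, immediately yields the claim. The first step is to decompose the single-iteration update for each kernel index $r\in[m]$,
\[
\langle w_{i,r}^{(t+1)} - w_{i,r}^{(t)}, v_{i,l^*}\rangle = -\eta\,\bbE_{(X,y)\sim\calZ_l}\bigl[\langle \nabla_{w_{i,r}} L_s, v_{i,l^*}\rangle\bigr] -\eta\,\bbE_{(X,y)\sim\tcalZ_u}\bigl[\langle \nabla_{w_{i,r}} L_u, v_{i,l^*}\rangle\bigr],
\]
and then to sum over the indices $r$ that contribute positively to $\Phi_{i,l^*}^{(t)}$. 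The supervised piece is essentially harmless: the Phase~I guarantee gives $\bbE_{\calZ_l}[1-\logit_y]\le 1/\polyk$ and $\bbE_{\calZ_l}[\bbI_{i\neq y}\logit_i]\le 1/\polyk$, and since pseudo-labels are correct with high probability the two losses remain aligned, so Claim~\ref{claim:pos} and Claim~\ref{claim:neg} yield an aggregated $L_s$-contribution of at most $\tilde O(T_2\eta\cdot s/(k\cdot\polyk))=o(1)$ in either sign.

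The crucial structural observation for the unsupervised piece is that SA-CutOut deterministically nullifies the learned feature. Whenever the pseudo-label $b=i$ (which holds with probability at least $1-e^{-\Omega(\log^2 k)}$ on samples with true class $i$), Eq.~\eqref{eq:app_sacutout} sets $\calA(x_p)=0$ for every $p\in\calP_{v_{i,l^*}}(X)$; using $\trelu'(0)=0$ then forces $V_{i,r,l^*}(\calA(X))=0$, so by Claim~\ref{claim:pos} the only positive gradient in the $v_{i,l^*}$ direction from these samples comes from the negligible noise terms $\calE_1,\calE_3$. Turning to the possibly harmful samples with $b\neq i$, Claim~\ref{claim:neg}(a) upper bounds the negative contribution by
\[
\logit_i(F^{(t)},\calA(X))\cdot\bigl(\calE_1+\calE_3+V_{i,r,l^*}(X)\bigr),
\]
where $v_{i,l^*}\in\calV(X)$ occurs only with probability $s/k$ when $y\neq i$, yielding $V_{i,r,l^*}(X)\leq O(1)$ in the linear regime of $\trelu'$, and Claim~\ref{claim:logit} together with Induction Hypothesis~\ref{hyp}(d) (which pins $\Phi_{i,3-l^*}^{(t)}\leq\tilde O(1)$) gives $\logit_i(F^{(t)},\calA(X))=O(1/k)$ on both multi-view and single-view samples.

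Summing the two unsupervised pieces over $r\in[m]$ and telescoping across $t=T_1+1,\dots,T_1+T_2$, the total decrease of $\Phi_{i,l^*}$ is bounded by a constant multiple of
\[
\eta\, m\bigl[(\calE_1+\calE_3)+O(s/k)\bigr]\sum_{t=T_1+1}^{T_1+T_2}\bbE_{\tcalZ_u}\bigl[\bbI_{y\neq i}\logit_i(F^{(t)},\calA(X))\bigr].
\]
The main obstacle is controlling this logit sum, because the naive per-iteration bound $O(1/k)$ times $T_2=\polyk/\eta$ iterations is not automatically small. I would tackle this in the spirit of Claim~\ref{claim:error}: by class symmetry, $\sum_{i\in[k]}\bbE_{\tcalZ_{u,m}}[\bbI_{y\neq i}\logit_i]=\bbE_{\tcalZ_{u,m}}[1-\logit_y]$, and Claim~\ref{claim:error} already caps this total multi-view sum by $\tilde O(k/\eta)$, so each individual class receives at most $\tilde O(1/\eta)$; the single-view portion is absorbed via the ratio bound $\tilde N_{c,s}/\tilde N_c\leq k^2/(\eta sT_2)$ from Parameter Assumption~\ref{assp:para}, together with the uniform bound $\logit_i=O(1/k)$ on strongly-augmented single-view data. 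Combined with $ms/k=\tilde o(1)$ and the smallness of $\calE_1,\calE_3$, this yields a total decrease of at most $\tilde o(1)=o(\log k)$, thereby preserving $\Phi_{i,l^*}^{(t)}\geq\Omega(\log k)$ throughout Phase~II and closing the induction.
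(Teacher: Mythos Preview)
Your overall strategy—bound the cumulative drop of $\Phi_{i,l^*}$ over Phase~II and combine with $\Phi_{i,l^*}^{(T_1)}\ge\Omega(\log k)$—is exactly the paper's, and your handling of the supervised piece and the $y=i$ case is fine. Two points deserve correction or simplification.

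\textbf{Reading of SA-CutOut.} Eq.~\eqref{eq:app_sacutout} zeroes out the patches $\calP_{v_{i,l^*}}(X)$ for \emph{every} $(i,l^*)\in\calM\cap\calV(X)$, not only for the pseudo-labeled class $b$. Consequently $V_{i,r,l^*}(X)=0$ unconditionally, for $y=i$ and for $y\ne i$ alike; this is precisely what the paper uses. Your discussion of $V_{i,r,l^*}\le O(1)$ on the $s/k$-fraction of samples with $v_{i,l^*}$ as a noisy feature, and the resulting $O(s/k)$ term, is therefore unnecessary. More importantly, your invocation of Claim~\ref{claim:logit} is only consistent with the paper's reading: that claim bounds $F_i(\calA(X))\le 0.4\,\Phi_{i,3-l^*}$ for $i\ne y$ precisely because $v_{i,l^*}$ has been removed. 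Under your reading the bound would fail on exactly those samples, since there $F_i(\calA(X))$ would pick up an $\Omega(\log k)$ contribution from $\Phi_{i,l^*}$ and $\logit_i$ would no longer be $O(1/k)$.

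\textbf{Two-phase split vs.\ symmetry.} Once $V_{i,r,l^*}\equiv 0$, the only coefficients multiplying the logit terms are $\tilde O(\sigma_p P)$ (from Claim~\ref{claim:pos}(a), which you replaced by $\calE_1,\calE_3$) and $\calE_1+\calE_3$ (from Claim~\ref{claim:neg}(a)). The paper then simply splits Phase~II at $T_1+T_0$: on $[T_1,T_1+T_0]$ it uses the naive bounds together with $T_0=\tilde\Theta(k/(\eta\sigma_0^{q-2}))$; on $(T_1+T_0,T_1+T_2]$ it invokes Claim~\ref{claim:error}. Your per-class symmetry argument is not needed for this: since the multiplicative constants are tiny, even the crude inequality $\bbE[\bbI_{y\ne i}\logit_i]\le\bbE[1-\logit_y]$ combined with $\sum_t\bbE_{\tcalZ_{u,m}}[1-\logit_y]\le\tilde O(k/\eta)$ is more than enough, and no appeal to distributional symmetry across classes is required.
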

\begin{proof}
For any \(w_{i,r}\) and \(v_{i,l^*}\) (\(i\in[k],r\in[m]\)), we have
\begin{align*}
    \langle w_{i,r}^{(t+1)}, v_{i,l^*}\rangle 
    &=\langle w_{i,r}^{(t)}, v_{i,l^*}\rangle - \eta \bbE_{(X,y)\sim\calZ_l}[\langle \nabla_{w_{i,r}} L_s(F^{(t)};X,y) , v_{i,l^*}\rangle]\\
    &\quad - \eta \bbE_{(X,y)\sim\tilde{\calZ}_u}[\langle \nabla_{w_{i,r}} L_u(F^{(t)};X,y) , v_{i,l^*}\rangle].
\end{align*}
For the loss term on $\calZ_l$, as discussed in Claim \ref{claim:growth}, we have 
\begin{align*}
    - \bbE_{(X,y)\sim\calZ_l}[\langle \nabla_{w_{i,r}} L_s(F^{(t)};X,y) , v_{i,l^*}\rangle] \leq  \frac{1}{\polyk}.
\end{align*}
Again, by Claim \ref{claim:pos} and Claim \ref{claim:neg}, we know
\begin{align*}
\langle w_{i,r}^{(t+1)}, v_{i,l^*}\rangle \geq & \langle w_{i,r}^{(t)}, v_{i,l^*} \rangle + \eta \mathbb{E}_{(X,y) \sim \tcalZ_u} [ \mathbb{I}_{y=i} ( V_{i,r,l^*}(X) - \tilde{O}(\sigma_p P) ) ( 1 - \logit_i(F^{(t)}, \calA(X)) )\\ & - \mathbb{I}_{y \neq i} ( \mathcal{E}_1 + \mathcal{E}_3 + \mathbb{I}_{v_{i,l^*} \in \mathcal{P}(X)} V_{i,r,l^*}(X) ) \logit_i(F^{(t)}, \calA(X))].
\end{align*}
Recall \(V_{i,r,l^*}(X):=\bbI_{v_{i,l^*}\in\calV(X)}\sum_{p\in\calP_{v_{i,l^*}}(X)} \trelu'(\langle w_{i,r}^{(t)}, \calA(x_p)\rangle)\calA(z_p)\) and our definition of strong augmentation in Eq. \eqref{eq:app_sacutout}, we have
\(V_{i,r,l^*}(X)=0\), so we have by Claim \ref{claim:logit} that
\begin{align*}
\langle w_{i,r}^{(t+1)}, v_{i,l^*}\rangle \geq & \langle w_{i,r}^{(t)}, v_{i,l^*} \rangle - \tilde{O}\left(\frac{\eta\sigma_p P}{k} \right)\mathbb{E}_{(X,y) \sim \tcalZ_u} [ 1 - \logit_i(F^{(t)}, \calA(X)) ] - O\left( \frac{\eta(\mathcal{E}_1 + \mathcal{E}_3)}{k} \right).
\end{align*}
Summing up all $r\in [m]$ and using $m=\polylogk$, we have
\begin{align*}
\Phi_{i,l^*}^{(t+1)} \geq & \Phi_{i,l^*}^{(t-1)} - \tilde{O}\left(\frac{\eta\sigma_p P}{k} \right)\mathbb{E}_{(X,y) \sim \tcalZ_u} [ 1 - \logit_i(F^{(t)}, \calA(X)) ] - \tilde{O}\left( \frac{\eta\gamma(\sigma_0^{q-1} s + (\sigma_0\gamma k)^{q-1} P)}{k} \right).
\end{align*}

In the following, we separate the process into $T_1 < t\leq T_1+T_0$ and $t\geq T_1+T_0$. 
\paragraph{When $T_1 < t\leq T_1+T_0$.} Recall at the end of learning Phase I, we have $\Phi_{i,l^*}^{(T_1)}\ge \Omega(\log(k))$. Using $T_0 = \tilde{\Theta}\left(\frac{k}{\eta \sigma_0^{q-2}}\right)$ and our \cref{assp:para}, we have $\Phi_{i,l^*}^{(t)}\ge \Omega(\log(k))$ for every iteration of $T_1 < t\leq T_1+T_0$.

\paragraph{When $T_1+T_0 < t\leq T_1+T_2$.} By the upper bound on multi-view error in Claim \ref{claim:error}, we know $\Phi_{i,l^*}^{(t)}\ge \Omega(\log(k))$ for every iteration of $T_1+T_0 < t\leq T_1+T_2$.
\end{proof}

Next, we present our last claim on individual error similar to Claim D.16 in \cite{allen-zhu2023towards}. It states that when training error on \(\tcalZ_{u,m}\) is small enough, the model has high probability to correctly classify any individual data.
\begin{claim}[individual error]
\label{claim:individual}
When $\mathbb{E}_{(X,y) \sim \tcalZ_{u,m}} \left[1 - \logit_y \left( F^{(t)}, X \right) \right] \leq \frac{1}{k^4}$ is sufficiently small, we have for any \((i,3-l),(j,3-l')\notin \calM\),
\[
0.4 \Phi_{i,3-l}^{(t)} - \Phi_{j,3-l'}^{(t)} \le -\Omega(\log(k)), \quad \Phi_{i,3-l}^{(t)},\Phi_{j,3-l'}^{(t)} \ge \Omega(\log(k)),
\]
and therefore for every \((X,y)\in \calZ\) (and every \((X,y)\in \calD\) w.p. \(1 - e^{-\Omega(\log^2(k))}\)), 
\[
F_y^{(t)}(X)\geq \max_{j\neq y}F_j^{(t)}(X)+\Omega(\log k).
\]
\end{claim}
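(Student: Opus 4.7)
The plan is to adapt the corresponding individual-error argument (Claim D.16) of \citet{allen-zhu2023towards} to our semi-supervised setting, leveraging the already-established bounds on the learned features from Claim~\ref{claim:learned}. The argument has three natural stages.

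First, I would apply Markov's inequality to the hypothesis $\bbE_{(X,y)\sim\tcalZ_{u,m}}[1-\logit_y(F^{(t)},X)]\le 1/k^4$: at least a $1-1/k^2$ fraction of $\tcalZ_{u,m}$ satisfies $1-\logit_y(F^{(t)},X)\le 1/k^2$, which via standard softmax manipulations is equivalent to the pointwise margin $F_y^{(t)}(X)-F_j^{(t)}(X)\ge \Omega(\log k)$ for every $j\ne y$. I would then translate these pointwise margins into inequalities among the $\Phi$ quantities via the function approximation $F_i^{(t)}(X)=\sum_l \Phi_{i,l}^{(t)}Z_{i,l}^{(t)}(X)\pm O(1/\polylogk)$ from Claim~\ref{claim:func}.

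Second, to derive the asymmetric bound $0.4\,\Phi_{i,3-l}^{(t)}-\Phi_{j,3-l'}^{(t)}\le -\Omega(\log k)$ for arbitrary pairs $(i,3-l),(j,3-l')\notin\calM$, I would construct a witness multi-view sample through a counting argument: since $|\tcalZ_{u,m}|$ is large (Parameter Assumption~\ref{assp:para2}) and each noise feature appears independently with probability $s/k$, with high probability there exists a good sample $(X,j)$ whose noise set $\calV'(X)$ contains $v_{i,3-l}$ but excludes $v_{i,l}$, together with extremal values of the coefficients $Z_{j,\cdot}^{(t)}(X)$ near $1$ (exploiting the left-close marginals of Definition~\ref{app:def_data}) and $Z_{i,3-l}^{(t)}(X)$ near $0.4$. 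Substituting the function approximation into the margin inequality for such a sample, and absorbing the $O(\log k)$-bounded term $\Phi_{j,l'^*}^{(t)}$ (indexed by $(j,l'^*)\in\calM$) into the slack via Claim~\ref{claim:learned} and Induction Hypothesis~\ref{hyp}(d), yields the desired asymmetric bound. Swapping the roles of $i$ and $j$ gives the symmetric version, and combining both inequalities algebraically produces $\Phi_{i,3-l}^{(t)},\Phi_{j,3-l'}^{(t)}\ge \Omega(\log k)$.

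Finally, the classification conclusion follows from a second application of Claim~\ref{claim:func}. With $\Phi_{i,l}^{(t)}\ge \Omega(\log k)$ uniformly (combining the step above with Claim~\ref{claim:learned}), for any $(X,y)\in\calZ$ or any $(X,y)\sim\calD$ with probability $\ge 1-e^{-\Omega(\log^2 k)}$, the approximation gives $F_y^{(t)}(X)\ge \Phi_{y,l}^{(t)}Z_{y,l}^{(t)}(X)-O(1/\polylogk)$ for an activating main-feature index $l$, and $F_j^{(t)}(X)\le 0.4\sum_l\Phi_{j,l}^{(t)}+O(1/\polylogk)$ for $j\ne y$ (since noise coefficients are at most $0.4$); the pair of $\Phi$ inequalities derived above then delivers $F_y^{(t)}(X)\ge \max_{j\ne y}F_j^{(t)}(X)+\Omega(\log k)$. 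The main obstacle lies in the second stage: one must isolate $\Phi_{j,3-l'}^{(t)}$ from the full sum $\Phi_{j,1}^{(t)}+\Phi_{j,2}^{(t)}$ that naturally appears when the witness $X$ has class $j$, which requires both a careful choice of noise configuration and extremal coefficient values together with a tight enough upper bound on the learned $\Phi_{j,l'^*}^{(t)}$ so it can be absorbed into the $\Omega(\log k)$ slack; verifying that such witness samples exist with high probability in $\tcalZ_{u,m}$ is the main quantitative content of the counting argument, and it is where the size requirement on $|\tcalZ_{u,m}|$ from Parameter Assumption~\ref{assp:para2} is used.
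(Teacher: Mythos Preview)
Your three-stage plan is structurally on track, and the Markov-plus-witness idea is close in spirit to the paper's direct averaging argument (the paper bounds $\bbE_{(X,y)\sim\tcalZ_{u,m}}[\sum_{i\in\calH(X)}\min(1/k,e^{0.4\Phi_{i,3-l}-\Phi_{y,3-l^*}})]$ and then uses the probability lower bound $\Pr[y=j,\,i\in\calH(X)]\ge\tilde\Omega(s^2/k^3)$ to peel off each pair, rather than invoking Markov explicitly). But the obstacle you identify in stage two is not merely a quantitative inconvenience --- as you have set it up, it is fatal.

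On an un-augmented multi-view witness $(X,j)$, both main-feature coefficients satisfy $Z_{j,l'^*}(X),Z_{j,3-l'}(X)\in[1,O(1)]$, so the function approximation forces $F_j^{(t)}(X)\ge \Phi_{j,l'^*}^{(t)}+\Phi_{j,3-l'}^{(t)}-o(1)$. The pointwise margin you extract via Markov from $\psi\le 1/k^4$ is only $O(\log k)$, whereas Induction Hypothesis~\ref{hyp}(d) gives merely $\Phi_{j,l'^*}^{(t)}\le\tilde O(1)=\polylogk$, not $O(\log k)$; even were it $\Theta(\log k)$, Claim~\ref{claim:learned} says $\Phi_{j,l'^*}^{(t)}\ge\Omega(\log k)$, which already saturates the slack and leaves nothing from which to conclude $\Phi_{j,3-l'}^{(t)}\ge 0.4\Phi_{i,3-l}^{(t)}+\Omega(\log k)$. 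The point you are missing is that, despite the notational slip in the statement, the hypothesis is really on the strongly-augmented input $\calA(X)$ (compare Claim~\ref{claim:error} and the proof of Lemma~\ref{lemma:correlation}). Under SA-CutOut, every patch carrying a learned feature $v_{i,l^*}$ with $(i,l^*)\in\calM\cap\calV(X)$ is zeroed out, so Claim~\ref{claim:func} gives $F_y^{(t)}(\calA(X))\le \Phi_{y,3-l^*}^{(t)}Z_{y,3-l^*}$ and $F_i^{(t)}(\calA(X))\ge \Phi_{i,3-l}^{(t)}Z_{i,3-l}$ with no learned-feature term at all. The crucial direction is the \emph{upper} bound on $F_y$: without the augmentation the contribution $\Phi_{y,l^*}^{(t)}Z_{y,l^*}\ge\Omega(\log k)$ cannot be removed, which is precisely your absorption problem. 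Once you work with $\calA(X)$, your witness (or the paper's averaging) delivers $0.4\Phi_{i,3-l}^{(t)}-\Phi_{j,3-l'}^{(t)}\le -\Omega(\log k)$ immediately, no absorption required.
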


\begin{proof}
Denote by \(\tcalZ^*_{u,m}\) for the sample $(X,y) \in \tcalZ_{u,m}$ such that \(\sum_{p \in P_{v_{y,3-l^*}}(X)} z_p \leq 1 + \frac{1}{100 \log(k)}\) where $(y,l^*)\in \calM$. For a sample $(X,y) \in \tcalZ^*_{u,m}$, denote by $\mathcal{H}(X)$ as the set of all $i \in [k] \setminus \{y\}$ such that
\( \sum_{p \in P_{v_{i,3-l}}(X)} z_p \geq 0.4 - \frac{1}{100 \log(k)}\) where $(i,l)\in \calM$.

Now, suppose $1 - \logit_y( F^{(t)}, X) = \mathcal{E}(X)$, with $\min(1,\beta) \leq 2(1 - \frac{1}{1 + \beta})$, we have
\[
\min ( 1, \sum\nolimits_{i \in [k] \setminus \{y\}} e^{F_i^{(t)}(X) - F_y^{(t)}(X)} ) \leq 2 \mathcal{E}(X)
\]

By Claim \ref{claim:func} and our definition of $\mathcal{H}(X)$, this implies that
\[
\min ( 1, \sum\nolimits_{i \in \mathcal{H}(X)} e^{0.4 \Phi_{i,3-l}^{(t)} - \Phi_{y,3-l^*}^{(t)}}) \leq 4 \mathcal{E}(X).
\]
If we denote by $\psi = \mathbb{E}_{(X,y) \sim \tcalZ_{u,m}} [ 1 - \logit_y \left( F^{(t)}, X \right) ]$, then
\begin{align*}
&\mathbb{E}_{(X,y) \sim \tcalZ_{u,m}} \left[ \min ( 1, \sum\nolimits_{i \in \mathcal{H}(X)} e^{0.4 \Phi_{i,3-l}^{(t)} - \Phi_{y,3-l^*}^{(t)}} ) \right] \leq O(\psi), \\
\implies &\mathbb{E}_{(X,y) \sim \tcalZ_{u,m}} \left[ \sum\nolimits_{i \in \mathcal{H}(X)} \min (\frac{1}{k}, e^{0.4 \Phi_{i,3-l}^{(t)} - \Phi_{y,3-l^*}^{(t)}}) \right] \leq O(\psi).
\end{align*}

Notice that we can rewrite the LHS so that
\begin{align*}
&\mathbb{E}_{(X,y) \sim \tcalZ_{u,m}} \left[ \sum\nolimits_{j \in [k]} \bbI_{j=y} \sum\nolimits_{i \in [k]}  \bbI_{i \in \mathcal{H}(X)} \min ( \frac{1}{k}, e^{0.4 \Phi_{i,3-l}^{(t)} - \Phi_{j,3-l'}^{(t)}} ) \right] \leq O (\psi), \\
\implies &\sum\nolimits_{j \in [k]} \sum\nolimits_{i\in [k]} \bbI_{i\neq y} \mathbb{E}_{(X,y) \sim \tcalZ_{u,m}} \left[ \bbI_{j=y} \bbI_{i \in \mathcal{H}(X)}\right] \min ( \frac{1}{k}, e^{0.4 \Phi_{i,3-l}^{(t)} - \Phi_{j,3-l'}^{(t)}} )  \leq O(\psi),
\end{align*}
where \((j,l')\in \calM\).
Note for every the probability for every $i \neq j \in [k]$, the probability of generating a sample $(X,y) \in \tcalZ^*_{u,m}$ with $y = j$ and $i \in \mathcal{H}(X)$ is at least $\tilde{\Omega}(\frac{1}{k} \cdot \frac{s^2}{k^2})$. This implies
\[
 \sum\nolimits_{i \in [k] \setminus \{j\}} \min ( \frac{1}{k}, e^{0.4 \Phi_{i,3-l}^{(t)} - \Phi_{j,3-l'}^{(t)}} ) \leq \tilde{O} \left( \frac{k^3}{s^2} \psi \right).
\]
Then, with $1 - \frac{1}{1+\beta} \leq \min(1,\beta)$, we have for every $(X,y) \in \tcalZ_{u,m}$,

\begin{equation}\label{eq:single_error1}
\begin{aligned}
1 - \logit_y ( F^{(t)}, X ) \le & \min(1, \sum\nolimits_{i \in [k] \setminus \{y\}} 2 e^{0.4 \Phi_{i,3-l}^{(t)} - \Phi_{y,3-l^*}^{(t)}}) \\ \leq &  k \cdot \sum\nolimits_{i \in [k] \setminus \{y\}} \min( \frac{1}{k}, e^{0.4 \Phi_{i,3-l}^{(t)} - \Phi_{y,3-l^*}^{(t)}}) \leq \tilde{O} \left( \frac{k^4}{s^2} \psi \right).
\end{aligned}
\end{equation}

Thus, we can see that when \(\psi \le \frac{1}{k^4}\) is sufficiently small, we have for any \(i\in [k]\setminus \{y\}\)
\[
e^{0.4 \Phi_{i,3-l}^{(t)} - \Phi_{y,3-l^*}^{(t)}} \le \frac{1}{k} \implies 0.4 \Phi_{i,3-l}^{(t)} - \Phi_{y,3-l^*}^{(t)} \le -\Omega(\log(k)).
\]
By symmetry and non-negativity of \(\Phi_{i,3-l}^{(t)}\), we know for any \((i,3-l),(j,3-l')\notin \calM\), we have:
\begin{equation}\label{eq:single_error2}
0.4 \Phi_{i,3-l}^{(t)} - \Phi_{j,3-l'}^{(t)} \le -\Omega(\log(k)), \quad \Phi_{i,3-l}^{(t)},\Phi_{j,3-l'}^{(t)} \ge \Omega(\log(k)).
\end{equation}
Since Eq.~\eqref{eq:single_error2} holds for any \((i,3-l),(j,3-l')\notin \calM\) at iteration \(t\) such that $\mathbb{E}_{(X,y) \sim \tcalZ_m} \left[1 - \logit_y \left( F^{(t)}, X \right) \right] \leq \frac{1}{k^4}$, and from Claim D.16 in \cite{allen-zhu2023towards} we know Eq. \eqref{eq:single_error2} also holds for $\Phi_{i,l},\Phi_{j,l'}$ for any \((i,l),(j,l')\in \calM\) during learning Phase II, so we have
\begin{itemize}
\item for every \((X,y) \sim \calZ_m\), by Claim \ref{claim:func} we have
\begin{align*}
F_y^{(t)}(X) \ge 1 \cdot \Phi_y - O(\frac{1}{\polylogk}) \ge 0.4 \max_{j\neq y}\Phi_j + \Omega(\log(k)) \ge \max_{j\neq y} F_j^{(t)}(X)+\Omega(\log k).
\end{align*}
\item for every \((X,y) \sim \calZ_s\), suppose \(v_{y,l}\) is its only semantic feature, by Claim \ref{claim:func} we have
\begin{align*}
F_y^{(t)}(X) &\ge 1 \cdot \Phi_{y,l} - O(\frac{1}{\polylogk}) \ge \Omega(\log(k)), \\
F_j^{(t)}(X) &\le O(\Gamma) \cdot \Phi_{j,l} + O(\frac{1}{\polylogk}) \le O(1) \text{ for } j\neq y.
\end{align*}
Therefore, we have 
\begin{align*}
F_y^{(t)}(X)\geq \max_{j\neq y}F_j^{(t)}(X)+\Omega(\log k).
\end{align*}
\end{itemize}
\end{proof}
Let \(T_1+T_0'\) be the first iteration that $\mathbb{E}_{(X,y) \sim \tcalZ_{u,m}} \left[1 - \logit_y \left( F^{(t)}, X \right) \right] \leq \frac{1}{k^4}$, then we know for \(t\ge T_1+T_0'\) Eq. \eqref{eq:single_error2} always holds, since the objective \(L^{(t)}=L_s^{(t)}+\lambda L_u^{(t)}\) ($\lambda=1$) is \(O(1)\)-Lipschitz smooth and we are using full gradient descent, which means the objective value is monotonically non-increasing. Since in Phase II, $L_s^{(t)}$ is kept at a small value, $L_u^{(t)}$ is monotonically non-increasing.

\subsection{Main lemmas to prove the Induction Hypothesis~\ref{hyp}}
In this section, we show lemmas that when combined together, shall prove the Induction Hypothesis~\ref{hyp} holds for every iteration. 

\subsubsection{Correlation Growth}

\begin{lemma}
\label{lemma:correlation}
    Suppose Parameter~\ref{assp:para} holds and suppose Induction Hypothesis~\ref{hyp} holds for all iteration $<t$ starting from $T_1$. Then, letting $\Phi_{i,l}^{(t)}:=\sum_{r\in[m]}[\langle w_{i,r}^{(t)}, v_{i,l}\rangle]^+$, we have for every $i\in[k],l\in[2]$,
    \begin{align*}
        \Phi_{i,l} ^{(t)}\leq \tilde{O}(1).
    \end{align*}
\end{lemma}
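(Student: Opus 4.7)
The strategy is to bound the per-step growth $\langle w_{i,r}^{(t+1)}, v_{i,l}\rangle - \langle w_{i,r}^{(t)}, v_{i,l}\rangle$ by applying Claim~\ref{claim:pos}(b) to the positive-label term ($y=i$) and Claim~\ref{claim:neg}(b) to the cross-label terms ($y\neq i$) in the unsupervised loss gradient, note that the supervised loss contributes at most $1/\polyk$ per step (as reviewed in Appendix~\ref{app:thm_supervised} and reused inside Claim~\ref{claim:growth}), then sum over $r\in [m]$ and telescope from $T_1$ to $t$. The analysis naturally splits into two cases depending on whether $v_{i,l}$ is the feature captured in Phase I, since the behaviour of $V_{i,r,l}(X)$ under SA-CutOut (Eq.~\eqref{eq:app_sacutout}) differs sharply between the two.

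\textbf{Case A: $(i,l)\in \calM$.} Here $v_{i,l}$ is the feature already learned during Phase I, so $\Phi_{i,l}^{(T_1)}\le \tilde{O}(1)$ by the supervised analysis of Appendix~\ref{app:thm_supervised}. The key observation is that SA-CutOut \emph{deterministically} zeros every patch in $\calP_{v_{i,l}}(X)$ whenever $(y,l)\in \calM$ and $X$ belongs to class $y=i$, so $V_{i,r,l}(X)\equiv 0$ on precisely the samples in $\tcalZ_{u,m}$ that feed the positive gradient for $\Phi_{i,l}$. Claim~\ref{claim:pos}(b) then collapses to the error terms $\calE_1+\calE_3=\tilde O(\sigma_0^{q-1})$, while Claim~\ref{claim:neg}(b) bounds the cross-label contributions by $\logit_i\cdot \tilde O(\sigma_p P)$. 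Summing over $r\in [m]=\polylogk$ and telescoping over $T_2=\polyk/\eta$ iterations yields a total increment of $\tilde O(\eta T_2\sigma_0^{q-1})+\tilde O(\eta T_2 \sigma_p P)=o(1)$ under Parameter Assumption~\ref{assp:para}, so $\Phi_{i,l}^{(t)}\le \Phi_{i,l}^{(T_1)}+o(1)=\tilde O(1)$.

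\textbf{Case B: $(i,l)\notin \calM$.} Now $v_{i,l}$ is the unlearned feature that Phase II drives upward, and Eq.~\eqref{eq:app_sacutout} leaves it intact so $V_{i,r,l}(X)=\Theta(1)\cdot \trelu'(\langle w_{i,r}^{(t)},v_{i,l}\rangle)$ on multi-view samples with $y=i$. Combining Claim~\ref{claim:pos}(b) and Claim~\ref{claim:neg}(b) and summing over $r$ gives
\begin{equation*}
\Phi_{i,l}^{(t+1)}-\Phi_{i,l}^{(t)} \;\le\; \tilde O\!\left(\tfrac{\eta}{k}\right)\bbE_{(X,y)\sim \tcalZ_{u,m}}\!\bigl[\bbI_{y=i}\bigl(1-\logit_i(F^{(t)},\calA(X))\bigr)\bigr]\;+\;\tilde O\!\left(\tfrac{\eta}{\polyk}\right).
\end{equation*}
Substituting the logit bound $1-\logit_i\le O\bigl(k/(e^{\Omega(\Phi_{i,l}^{(t)})}+k)\bigr)$ from Claim~\ref{claim:logit} produces a \emph{self-limiting} recursion. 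Introduce the stopping time $\tau^{\star}:=\inf\{t\ge T_1:\Phi_{i,l}^{(t)}\ge C\log k\}$ for a large constant $C$. For $t<\tau^{\star}$ the increment is at most $\tilde O(\eta/k)$, and Parameter Assumption~\ref{assp:para} gives $\Phi_{i,l}^{(\tau^{\star})}\le C\log k+\tilde O(\eta/k)=\tilde O(1)$ at the first crossing; for $t\ge \tau^{\star}$ the increment collapses to $\tilde O(\eta\cdot k^{1-C})$ per step, so summed over $T_2\le \polyk/\eta$ iterations it contributes $o(1)$ once $C$ is chosen large enough to beat the $\polyk$ factor. Adding the Phase I base value $\Phi_{i,l}^{(T_1)}\le 1/\polylogk$ yields $\Phi_{i,l}^{(t)}\le \tilde O(1)$.

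\textbf{Main obstacle.} The delicate step is Case B, where we must certify that the $(1-\logit_i)$ factor damps growth \emph{uniformly} over $\tcalZ_{u,m}$ once $\Phi_{i,l}^{(t)}$ crosses the $\log k$ threshold. This damping relies on Claim~\ref{claim:func}'s function approximation being valid at every earlier iteration, which is exactly why Induction Hypothesis~\ref{hyp} in its entirety --- rather than only the correlation bound targeted by this lemma --- must be propagated jointly across iterations; mechanically, the lemma must be chained together with the remaining lemmas of this subsection so that parts (a)--(e) of the hypothesis are re-established simultaneously at each step. A secondary subtlety is the nonlinearity of $\trelu'(\cdot)$ in the small-argument regime $\langle w_{i,r},v_{i,l}\rangle\le \varrho$, but since $\trelu'$ is sub-linear there, the effect is to slow growth further, and the above upper bound remains valid.
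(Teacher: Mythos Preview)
Your Case A matches the paper's treatment of learned features and is fine.

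The gap is in Case B. You write ``Substituting the logit bound $1-\logit_i\le O\bigl(k/(e^{\Omega(\Phi_{i,l}^{(t)})}+k)\bigr)$ from Claim~\ref{claim:logit}'', but Claim~\ref{claim:logit} gives only an \emph{upper} bound on $\logit_i$, hence a \emph{lower} bound on $1-\logit_i$ --- the wrong direction. To upper-bound $1-\logit_y$ for $y=i$ you must lower-bound $F_y(\calA(X))$ and upper-bound each $F_j(\calA(X))$ for $j\neq y$. The latter, via Claim~\ref{claim:func}, is controlled by $0.4\,\Phi_{j,3-l'}^{(t)}$ for \emph{every other} class $j$, so your damping factor is really $\sum_{j\neq y} e^{0.4\Phi_{j,3-l'}^{(t)}-\Phi_{i,l}^{(t)}}$, not a function of $\Phi_{i,l}^{(t)}$ alone. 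Your self-limiting recursion treats each $\Phi_{i,l}$ in isolation and never addresses this cross-class coupling; if some $\Phi_{j,3-l'}$ were to run ahead, the bound you wrote would fail for class $i$.

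The paper resolves this coupling differently: it restricts attention to iterations $t>T_1+T_0'$ (the first time the average error drops below $1/k^4$) and invokes the relational bound of Claim~\ref{claim:individual}, namely $0.4\Phi_{j,3-l'}^{(t)}-\Phi_{i,3-l}^{(t)}\le -\Omega(\log k)$ for all pairs. That relation, not a single-coordinate threshold, is what makes $1-\logit_y$ tiny once $\Phi_{i,3-l}^{(t)}\ge \polylogk$, and it is available because the objective decreases monotonically under full-batch GD. Your argument could perhaps be repaired by tracking $M^{(t)}:=\max_j \Phi_{j,3-l'}^{(t)}$ and running the barrier argument on $M^{(t)}$ instead of a single coordinate, but as written the appeal to Claim~\ref{claim:logit} does not deliver the bound you need.
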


\begin{proof}
For every $i\in[k]$ and every $(i,l)\in\calM$, after the Phase I, we have $\Phi_{i,l}^{(T_1)}\leq \tilde{O}(1)$. In Phase II, as in SA-CutOut, the learned features $(i,l)$ are removed and so the correlations between gradients and learned features are kept small. This means that $\Phi_{i,l}^{(t)}\leq \tilde{O}(1)$ holds true in learning Phase II for $T_1<t\le T_1+T_2$.

Then for the unlearned feature $(i, 3-l)$, we suppose $t>T_1+T_0'$ is some iteration so that 
$\Phi_{i,3-l}^{(t)}\ge \polylogk$. We will prove that if we continue from iteration $t$ for at most $T_2$ iterations, we still have $\Phi_{i,3-l}^{(t)}\leq \tilde{O}(1)$. Based on Claim~\ref{claim:pos}, we have that 
    \begin{align*}
       \langle w_{i,r}^{(t+1)}, &v_{i,3-l}\rangle \\
        &\leq \langle w_{i,r}^{(t)}, v_{i,3-l}\rangle + \eta\bbE_{(X,y)\sim\tilde{\calZ}_{u}} \left[\bbI_{i=y} (\calE_1+\calE_3+V_{i,r,3-l})(1-\logit_i(F^{(t)},\calA(X)))\right]\\
        &\leq \langle w_{i,r}^{(t)}, v_{i,3-l}\rangle +O(\eta) \bbE_{(X,y)\sim\tilde{\calZ}_{u,m}} \left[\bbI_{i=y} (1-\logit_i(F^{(t)},\calA(X)))\right] + O\left(\frac{\eta \rho\tilde{N}_{c,s}}{k \tilde{N}_{c}}\right).
    \end{align*}
    This is because that when $(X,y)\sim\tilde{\calZ}_{u,m}$, we have $V_{i,r,3-l}\leq O(1)$ and when  $(X,y)\sim\tilde{\calZ}_{u,s}$, we have $V_{i,r,3-l}\leq O(\rho)$. For every $(X,y)\sim\tilde{\calZ}_{u,m}$, when $y=i$, we have 
    \begin{align*}
        F_i^{(t)}(\calA(X)) \ge \Phi_{i,3-l} ^{(t)} \cdot \sum\nolimits_{p\in\calP_{v_{i,3-l}}}z_p - O(\frac{1}{\polylogk}) \geq \Phi_{i,3-l}^{(t)} - O(\frac{1}{\polylogk}).
    \end{align*}
    Then when $j\neq y$ and $(j,l')\in \calM$, we have 
    \begin{align*}
        F_j^{(t)}(\calA(X)) \le \Phi_{j, 3-l'} ^{(t)} \cdot \sum\nolimits_{p\in\calP_{v_{j,3-l'}}}z_p + O(\frac{1}{\polylogk})\leq 0.4  \Phi_{j, 3-l'} ^{(t)} + O(\frac{1}{\polylogk}).
    \end{align*}

    So by Eq.~\eqref{eq:single_error2}, we have
    \begin{align*}
        1 - \logit_y(F; \calA(X),y) \leq \frac{1}{k^{\Omega(\log k)}}.
    \end{align*}
    Summing up over all $r\in[m]$, we have 
    \begin{align*}
        \Phi_{i, 3-l} ^{(t+1)} \leq \Phi_{i, 3-l} ^{(t)} +\frac{\eta m}{k^{\Omega(\log k)}}+ \tilde{O}\left(\frac{\eta \rho\tilde{N}_{c,s}}{k \tilde{N}_{c}}\right).
    \end{align*}
    Therefore, if we continue for $T_2$ iterations, we still have $\Phi_{i, 3-l^*}^{(T_1+T_2)} \leq \tilde{O}(1)$.
\end{proof}

\subsubsection{Off-Diagonal Correlations are Small}
\begin{lemma}\label{lemma:off_diagonal}
    Suppose Parameter~\ref{assp:para} holds and suppose Induction Hypothesis~\ref{hyp} holds for all iteration $<t$ starting from $T_1$. Then,
    \begin{align*}
        \forall i\in[k], \forall r\in[m], \forall j\in[k]\setminus \{i\}, \quad |\langle w_{i,r}^{(t)}, v_{j,l}\rangle | \leq \tilde{O}(\sigma_0). 
    \end{align*}
\end{lemma}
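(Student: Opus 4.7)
The plan is to track $\langle w_{i,r}^{(t)}, v_{j,l}\rangle$ for $j \neq i$ from initialization through the end of Phase II, showing that the total movement stays at the initialization scale $\tilde{O}(\sigma_0)$. At initialization, Gaussian concentration for $w_{i,r}^{(0)}\sim\calN(0,\sigma_0^2\bI)$ combined with $\|v_{j,l}\|_2=1$ gives $|\langle w_{i,r}^{(0)}, v_{j,l}\rangle|\leq \tilde{O}(\sigma_0)$ for all relevant indices with probability at least $1-e^{-\Omega(\log^2k)}$ by a union bound over $O(k^2m)$ pairs. From the Phase I analysis in \cite{allen-zhu2023towards}, this bound persists through $t=T_1$, so it suffices to show the cumulative increment during $t\in[T_1,T_1+T_2]$ is at most $\tilde{O}(\sigma_0)$.

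In Phase II, I decompose the per-step movement $\eta\,\bbE_{(X,y)\sim\tilde{\calZ}_u}[\langle -\nabla_{w_{i,r}}L_u(F^{(t)};X,y),v_{j,l}\rangle]$ into the two cases $i=y$ and $i\neq y$. Claim \ref{claim:pos}(c) handles $i=y$ and Claim \ref{claim:neg}(b,c) handles $i\neq y$, yielding the uniform per-step bound
\begin{equation*}
|\langle w_{i,r}^{(t+1)}-w_{i,r}^{(t)},v_{j,l}\rangle| \leq \eta\,\bbE_{(X,y)\sim\tilde{\calZ}_u}\bigl[(\calE_1+\calE_{2,i,r}(X)+\calE_3+\calE_{4,j,l}(X))(\bbI_{y=i}(1-\logit_i)+\bbI_{y\neq i}\logit_i)+\tilde{O}(\sigma_p P)\logit_i\bigr],
\end{equation*}
plus a supervised contribution that is negligible because $L_s^{(t)}\leq 1/\polyk$ throughout Phase II (so its total effect over $T_2=\polyk/\eta$ iterations is $\tilde{o}(\sigma_0)$). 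I then telescope over $t\in[T_1,T_1+T_2]$. For the $\calE_1$ and $\calE_3$ terms, which are constants of order $\tilde{O}(\sigma_0^{q-1}\gamma s)$ and $\tilde{O}((\sigma_0\gamma k)^{q-1}\gamma P)$, I pair them with $\sum_t\bbE_{\tilde{\calZ}_{u,m}}(1-\logit_y)\leq \tilde{O}(k/\eta)$ from Claim \ref{claim:error} and with the elementary bound $\bbE_{(X,y)}[\logit_i\bbI_{i\neq y}]\leq O(1/k)$, which after multiplication by $\eta T_2$ produces $\tilde{O}(\sigma_0)$ under Parameter Assumption \ref{assp:para}. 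The $\calE_{4,j,l}$ term carries the indicator $\bbI_{v_{j,l}\in\calV(X)}$, active with probability at most $s/k$ at each iteration, giving an aggregate contribution of order $\tilde{O}(\eta T_2\cdot \sigma_0^{q-1}\cdot s/k)=\tilde{O}(\sigma_0^{q-1}\cdot s\cdot \polyk/k)=\tilde{O}(\sigma_0)$ via $\sigma_0^{q-2}=1/k$.

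The main obstacle is the $\calE_{2,i,r}(X)=O(\gamma(V_{i,r,1}(X)+V_{i,r,2}(X)))$ term, since $V_{i,r,l}(X)$ can be $\Theta(1)$ after the feature $v_{i,l}$ is well learned, so it does not decay with the training loss. The resolution is to exploit the smallness of $\gamma\leq \tilde{O}(\sigma_0/k)$ directly: for each $t$ the contribution is at most $O(\eta\gamma)\cdot\bbE[V_{i,r,1}+V_{i,r,2}]\leq O(\eta\gamma)$ (since $V_{i,r,l}$ is itself bounded by $|\calP_{v_{i,l}}|\cdot\max z_p\leq O(1)$ times a $\trelu'$-factor at most $1$, and the event $i=y$ or $v_{i,l}\in\calV(X)$ for $i\neq y$ contributes only via mean-size factors), so summing across $T_2=\polyk/\eta$ iterations yields $\tilde{O}(\gamma \polyk)=\tilde{O}(\sigma_0)$ by choice of $\gamma$. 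Combining the four error-term contributions together with the initialization and Phase I bounds gives the uniform estimate $|\langle w_{i,r}^{(t)}, v_{j,l}\rangle|\leq \tilde{O}(\sigma_0)$ for all $t\in[T_1,T_1+T_2]$, completing the proof of the lemma.
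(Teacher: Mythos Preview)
Your high-level plan (start from Phase I bound, decompose the increment via Claims~\ref{claim:pos}(c) and~\ref{claim:neg}(b,c), then telescope over Phase~II) is correct, but your treatment of the $\calE_{2,i,r}$ term has a genuine gap. You argue that because $V_{i,r,l}(X)$ can be $\Theta(1)$ once a feature is learned, the term ``does not decay with the training loss,'' and so you sum the crude per-step bound $O(\eta\gamma)$ over all $T_2$ iterations to obtain $\tilde{O}(\gamma\cdot\eta T_2)$, then assert this equals $\tilde{O}(\sigma_0)$ ``by choice of $\gamma$.'' But Parameter Assumption~\ref{assp:para} only gives $\gamma\leq \tilde{O}(\sigma_0/k)$, not $\gamma\leq \tilde{O}(\sigma_0/(\eta T_2))$; since $\eta T_2=\polyk$ with no explicit upper bound on the degree (and in fact $\eta T_2$ must exceed $k$ times a further polynomial to drive the average error below $1/\polyk$), the claimed equality fails. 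The identical issue hits your $\calE_{4,j,l}$ estimate $\tilde{O}(\eta T_2\cdot\sigma_0^{q-1}\cdot s/k)$, which is $\tilde{O}(\sigma_0)$ only if $\eta T_2\leq \tilde{O}(k^2)$.

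What you miss is that $\calE_{2,i,r}$ only appears in the $i=y$ branch (Claim~\ref{claim:pos}(c)) and is therefore always multiplied by $(1-\logit_y)$: while $\calE_{2,i,r}$ itself does not decay, the product does. The paper exploits this by splitting Phase~II at $T_1+T_0$. On $[T_1,T_1+T_0]$, the strong augmentation removes $v_{i,l^*}$, so $\calE_{2,i,r}\leq \tilde{O}(\gamma(\Phi_{i,3-l^*}^{(t)})^{q-1})$ with $\Phi_{i,3-l^*}^{(t)}$ still small; the growth relation from Claim~\ref{claim:growth} then telescopes to $\sum_t\eta(\Phi_{i,3-l^*}^{(t)})^{q-1}\leq \tilde{O}(k)$, giving a $\tilde{O}(\sigma_0)$ contribution. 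On $[T_1+T_0,T_1+T_2]$, the paper uses the naive $\calE_{2,i,r}\leq\gamma$ but pairs it with $\sum_t\bbE[(1-\logit_y)]\leq \tilde{O}(k/\eta)$ from Claim~\ref{claim:error}, producing a bound independent of $T_2$. Note also that your appeal to Claim~\ref{claim:error} for $\calE_1,\calE_3$ only covers $t\geq T_1+T_0$; the early sub-phase needs the separate $T_0=\tilde{\Theta}(k/(\eta\sigma_0^{q-2}))$ argument as well.
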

\begin{proof}
    In Phase I when $t\leq T_1$, from Lemma D.22 in \cite{allen-zhu2023towards}, we have $|\langle w_{i,r}^{(t)}, v_{j,l}\rangle | \leq \tilde{O}(\sigma_0)$.
    Now we consider Phase II when $t>T_1$, and denote by $R_i^{(t)} := \max_{r \in [m], j \in [k] \setminus \{i\}} |\langle w_{i,r}^{(t)}, v_{j,l} \rangle|$. According to Claim \ref{claim:pos} and Claim \ref{claim:neg}, we have
    \begin{align*}
    R_i^{(t+1)} \leq & R_i^{(t)} + \eta \mathbb{E}_{(X,y) \sim \tcalZ_u} \left[ \mathbb{I}_{y=i} ( \mathcal{E}_{2,i,r}(X) + \mathcal{E}_1 + \mathcal{E}_3 + \mathcal{E}_{4,j,l}(X)) (1 - \logit_i(F^{(t)}, X)) \right] \\
    & + \eta \mathbb{E}_{(X,y) \sim \tcalZ_u} \left[ \mathbb{I}_{y \neq i} \left( \mathcal{E}_1 + \mathcal{E}_3 + \mathcal{E}_{4,j,l}(X) \right) \logit_i(F^{(t)}, X) \right].
    \end{align*}
    For single-view data $(X,y)\sim \tcalZ_{u,s}$, by Claim \ref{claim:logit}, we have $\logit_i(F^{(t)},\calA(X))=O\left(\frac{1}{k}\right)$ for every $i\in[k]$. 
    In the following, we separate the process into $T_1< t\le T_1+T_0$ and $T_1+T_0 <t\le T_1+T_2$. 
   
    \paragraph{When $T_1< t\leq T_1+T_0$.} During this stage, by Claim \ref{claim:logit} we know \(\logit_i(F^{(t)}, X) = O(\frac{1}{k})\) (\(\forall i \in [k]\)) for any \((X,y)\sim \tcalZ_u\). We also have \(\mathcal{E}_{2,i,r}(X) \leq \tilde{O}(\gamma(\Phi_{i,3-l^*}^{(t)})^{q-1}) \text{ with } (i,l^*)\in \calM, \text{ and have } \mathcal{E}_{4,j,l}(X) \leq \tilde{O}(\sigma_0)^{q-1} \mathbb{I}_{v_{j,l} \in \mathcal{V}(X)}\) by definition. Recall when \( T_1<t\le T_1+T_0\), by Claim \ref{claim:growth}, we have \( \Phi_{i,3-l^*}^{(t+1)} = \Phi_{i,3-l^*}^{(t)}+\tilde{\Theta}\left(\frac{\eta}{k}\right)\trelu'(\Phi_{i,3-l^*}^{(t)}) \), so \(\sum_{T_1<t\le T_1+T_0} \eta (\Phi_{i,3-l^*}^{(t)})^{q-1} \le \tilde{O}(k)\). Also, \(\Pr(v_{i,3-l^*}\in\calP(X)|i\neq y)=\frac{s}{k}\). 
    Therefore, for every \( T_1<t\le T_1+T_0\) with \(T_0 = \tilde{\Theta} (\frac{k}{\eta \sigma_0^{q-2}})\), we have 
    \begin{align*}
    R_i^{(t)} \leq R_i^{(T_1)} + \tilde{O}(\sigma_0) + \tilde{O}\left( \frac{\eta}{k} T_0 \right) \left( (\sigma_0^{q-1}) \gamma s + (\sigma_0 \gamma k)^{q-1} \gamma P + (\sigma_0)^{q-1} \frac{s}{k} \right) \leq \tilde{O}(\sigma_0).
    \end{align*}
    
    \paragraph{When $T_1+T_0< t\leq T_1+T_2$.} During this stage, 
    we have the naive bound on  \(\mathcal{E}_{2,i,r}(X) \le \gamma\), so again by Claim \ref{claim:pos} and Claim \ref{claim:neg}, we have
    \begin{align*}
    R_i^{(t+1)} \leq R_i^{(t)} + \frac{\eta}{k} \mathbb{E}_{(X,y) \sim \tcalZ_u} \big[ ( \gamma + (\sigma_0^{q-1}) \gamma s + (\sigma_0 \gamma k)^{q-1} \gamma P + (\sigma_0)^{q-1} \frac{s}{k}) (1 - \logit_i(F^{(t)}, X))\big].
    \end{align*}
    Therefore, by the upper bound on multi-view error in Claim \ref{claim:error} and $\frac{\tilde{N}_{c,s}}{\tilde{N}_{c}}\leq \frac{k^2}{\eta sT_2}$, we know \(R_i^{(t)} \leq \tilde{O}(\sigma_0)\) for \(T_1+T_0 <t\le T_1+T_2\).
    \end{proof}

\subsubsection{Noise Correlation is Small}
\begin{lemma}\label{lemma:noise}
    Suppose Parameter~\ref{assp:para} holds and suppose Induction Hypothesis~\ref{hyp} holds for all iteration $<t$ starting from $T_1$. For every $l\in[2]$,  for every $r\in[m]$, for every $(X,y)\in\tilde{\calZ}_{u}$ and $i\in[k]$:
    \begin{itemize}
        \item [(a)] For every $p\in\calP_{v_{i,l}}(X)$, we have: $|\langle w_{i,r}^{(t)}, \xi_p\rangle| \leq \tilde{o}(\sigma_0)$.
        \item [(b)] For every $p\in\calP(X)\setminus(\calP_{v_{i,1}}(X)\cup \calP_{v_{i,2}}(X))$, we have: $|\langle w_{i,r}^{(t)}, \xi_p\rangle| \leq \tilde{O}(\sigma_0)$.
        \item [(c)] For every $p\in[P]\setminus\calP(X)$, we have: $|\langle w_{i,r}^{(t)}, \xi_p\rangle| \leq \tilde{O}(\sigma_0\gamma k)$.
    \end{itemize}
\end{lemma}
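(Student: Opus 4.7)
The plan is to mirror the inductive proof of Lemma~\ref{lemma:off_diagonal}, tracking $\xi_p$ in place of $v_{j,l}$. At $t=T_1$ the three bounds are inherited from the Phase~I analysis of \cite{allen-zhu2023towards}; moreover Gaussian concentration at initialization gives $|\langle w_{i,r}^{(0)},\xi_p\rangle|\le\tilde{O}(\sigma_0\|\xi_p\|_2)$, which is $\tilde{o}(\sigma_0)$ in cases~(a)--(b) (since $\|\xi_p\|_2=\tilde{\Theta}(\sigma_p\sqrt{d})=1/\polylogk$) and $\tilde{O}(\sigma_0\gamma k)$ in case~(c). What remains is to show that the cumulative Phase~II increment $\sum_{t=T_1}^{T_1+T_2-1}\eta|\langle \nabla_{w_{i,r}}L^{(t)},\xi_p\rangle|$ preserves each of these bounds.

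By Fact~\ref{app:grad_fact}, the inner product $\langle \nabla_{w_{i,r}}L_u^{(t)}(X,y),\xi_p\rangle$ expands as a weighted sum over $p'\in[P]$ of $\trelu'(\langle w_{i,r}^{(t)},\calA(x_{p'})\rangle)\langle \calA(x_{p'}),\xi_p\rangle$, with weight $1-\logit_y$ if $i=y$ and $\logit_i$ otherwise. I would decompose each patch--patch inner product into three regimes: (i) the self-term $p'=p$, contributing $\|\xi_p\|_2^2$ of order $\sigma_p^2 d$ in cases~(a)--(b) and $\gamma^2 k^2$ in case~(c); (ii) cross-terms $p'\ne p$ with independent Gaussian noises, bounded by $\tilde{O}(\sigma_p^2\sqrt{d})\le\sigma_p^2 d$ via standard concentration; and (iii) feature cross-terms like $z_{p'}\langle v,\xi_p\rangle$ or $\alpha_{p',v}\langle v,\xi_p\rangle$, bounded by $\tilde{O}(\sigma_p)$ or $\tilde{O}(\gamma\sigma_p)$ by Gaussian-deterministic independence. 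After summing over $P=k^2$ patches and using Induction Hypothesis~\ref{hyp} to bound $\trelu'$, the per-sample increment is controlled by a constant times $\sigma_p^2 d$ (or $\gamma^2 k^2$ in~(c)) multiplied by the usual logit weight. I then reuse the two-stage accounting from Lemma~\ref{lemma:off_diagonal}: for $T_1<t\le T_1+T_0$, Claim~\ref{claim:logit} replaces the logit weight by $O(1/k)$ and summing over $T_0=\tilde{\Theta}(k/(\eta\sigma_0^{q-2}))$ iterations gives an increment safely within budget; for $T_1+T_0<t\le T_1+T_2$, Claim~\ref{claim:error} together with $\tilde{N}_{c,s}/\tilde{N}_c\le k^2/(\eta s T_2)$ converts $\sum_t\bbE[1-\logit_y]$ into $\tilde{O}(k/\eta)$, again keeping the increment at most $\tilde{O}(\sigma_p^2 d)$ in~(a)--(b) or $\tilde{O}(\gamma^2 k^2)$ in~(c). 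The targets $\tilde{o}(\sigma_0)$, $\tilde{O}(\sigma_0)$, $\tilde{O}(\sigma_0\gamma k)$ then follow from $\sigma_p=1/(\sqrt{d}\,\polylogk)$ and $\gamma^q\le\tilde{\Theta}(1/(k^{q-1}mP))$ in Parameter Assumption~\ref{assp:para}.

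The technical crux is the self-term $\langle \calA(x_p),\xi_p\rangle\approx\|\xi_p\|_2^2$, which dominates the expansion. For case~(a) with $(i,l^*)\in\calM$, the term vanishes outright because SA-CutOut defined in Eq.~\eqref{eq:app_sacutout} zeroes the patch, i.e.\ $\calA(x_p)=0$ for $p\in\calP_{v_{i,l^*}}(X)$; for the unlearned feature $v_{i,3-l^*}$ the same term survives, but $\trelu'$ is suppressed either because $\langle w_{i,r}^{(t)},v_{i,3-l^*}\rangle$ grows only at the controlled rate of Claims~\ref{claim:growth}--\ref{claim:feature} or because the indicator $\bbI_{v_{i,3-l^*}\in\calV(X)}$ occurs with probability $s/k$ when $i\ne y$. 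Case~(c) is easier: although $\xi_p$ has larger variance $\gamma^2 k^2/d$, Induction Hypothesis~\ref{hyp}(c) forces $\trelu'$ to be of order $\tilde{O}((\sigma_0\gamma k)^{q-1})$ on pure-noise patches, and the looser target $\tilde{O}(\sigma_0\gamma k)$ absorbs the extra factor $\gamma k$. Once this regime-dependent bookkeeping for the self-term is executed, the remaining (cross) contributions are strictly smaller and the induction closes.
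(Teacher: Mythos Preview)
There is a genuine gap. Mirroring Lemma~\ref{lemma:off_diagonal} is the wrong template because $v_{j,l}$ there is a fixed direction shared by many samples, whereas $\xi_p$ is specific to a \emph{single} sample $X\in\tcalZ_u$. When you expand $\langle -\nabla_{w_{i,r}}L_u^{(t)},\xi_p\rangle=\bbE_{(X',y')\sim\tcalZ_u}[\ldots]$, the self-term $\trelu'(\langle w_{i,r}^{(t)},x_p\rangle)\langle x_p,\xi_p\rangle$ arises only from $X'=X$ and therefore carries weight $1/\tilde{N}_c$ in the empirical expectation; every other patch--noise inner product is $\tilde O(\sigma_p)\le o(1/\sqrt{d})$ and collapses into a uniform $\pm\eta/\sqrt{d}$ per iteration. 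This $1/\tilde{N}_c$ factor is the entire mechanism keeping the self-term small, and it is absent from your accounting. Concretely, your claimed cumulative bound $\tilde O(\sigma_p^2 d)=1/\polylogk$ cannot meet the polynomially-small target $\tilde o(\sigma_0)=\tilde o(k^{-1/(q-2)})$ in case~(a); the paper instead obtains $\tilde O(T_0\eta A^{q-1}/\tilde{N}_c)$ with $A=\tilde O(1)$, and it is only the largeness of $\tilde{N}_c$ (e.g.\ $\tilde{N}_c\ge k/\sigma_0^{q-1}$) that pushes this below $\tilde o(\sigma_0)$.

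Two further issues. First, your claim that $\trelu'$ is ``suppressed'' on the unlearned-feature patch fails once $t\ge T_1+T_0$: by Claim~\ref{claim:feature}, $\Phi_{i,3-l^*}^{(t)}=\Theta(1)$ there, so $\trelu'=\Theta(1)$. The paper does not try to suppress it; it simply takes $\trelu'\le A^{q-1}$ with $A$ the Induction-Hypothesis bound on $|\langle w_{i,r}^{(t)},x_p\rangle|$ (namely $\tilde O(1)$, $\tilde O(\sigma_0)$, $\tilde O(\sigma_0\gamma k)$ in the three cases) and lets $1/\tilde{N}_c$ do the work. Second, for the late stage $T_1+T_0<t\le T_1+T_2$, Claim~\ref{claim:error} bounds $\sum_t\bbE_{X'}[1-\logit_{y'}]$, which gives no control over $\sum_t(1-\logit_y(F^{(t)},\calA(X)))$ for the \emph{fixed} sample $X$ carrying $\xi_p$. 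The paper instead invokes the per-sample inequality Eq.~\eqref{eq:single_error1} from Claim~\ref{claim:individual}, converting the average error $\psi$ into an individual bound with a $\tilde O(k^4/s^2)$ amplification, and then telescopes.
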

\begin{proof}
    Based on gradient calculation Fact \ref{app:grad_fact} and \(|\langle x'_{p'}, \xi_{p} \rangle| \leq \tilde{O}(\sigma_p) \leq o( \frac{1}{\sqrt{d}})
    \) if \(X' \neq X\) or \(p' \neq p\), we have that for every \((X,y)\sim \tcalZ_u\) and \(p\in [P]\), if \(i=y\)
    \[
    \langle w_{i,r}^{(t+1)}, \xi_p \rangle = \langle w_{i,r}^{(t)}, \xi_p \rangle + \tilde{\Theta} ( \frac{\eta}{\tilde{N}_c} ) \trelu' ( \langle w_{i,r}^{(t)}, x_p \rangle ) ( 1 - \logit_i(F^{(t)}, X) ) \pm \frac{\eta}{\sqrt{d}}.
    \]
    
    Else if $i \neq y$, 
    \[
    \langle w_{i,r}^{(t+1)}, \xi_p \rangle = \langle w_{i,r}^{(t)}, \xi_p \rangle - \tilde{\Theta} ( \frac{\eta}{\tilde{N}_c} ) \trelu' ( \langle w_{i,r}^{(t)}, x_p \rangle ) \logit_i(F^{(t)}, X) \pm \frac{\eta}{\sqrt{d}}.
    \]
    Suppose that it satisfies that $|\langle w_{i,r}^{(t)}, x_p\rangle |\leq A$ for every $t<t_0$ where $t_0$ is any iteration $T_1\leq t_0\leq T_1+T_2$. When $T_1\leq t\leq T_1+T_0$, we have that 
    \begin{align*}
        \langle w_{i,r}^{(t)}, \xi_p \rangle &\leq  \langle w_{i,r}^{(T_1)}, \xi_p \rangle +\tilde{O}\left(\frac{T_0\eta A^{q-1}}{\tilde{N}_c}\right) + \frac{T_0\eta}{\sqrt{d}}\\
        &\leq \tilde{o}(\sigma_0) + \tilde{O}\left(\frac{k A^{q-1}}{\tilde{N}_c\sigma_0^{q-2}}\right) + \frac{T_0\eta}{\sqrt{d}},
    \end{align*}
    where the last step is because $T_0=\tilde{\Theta}(\frac{k}{\eta \sigma_0^{q-2}})$.
    When $T_1+T_0\leq t\leq T_1+T_2$, for multi-view data $(X,y)\sim\tilde{\calZ}_{u,m}$, based on \eqref{eq:single_error1} in Claim \ref{claim:individual}, we can obtain that 
    \begin{align*}
         \langle w_{i,r}^{(t)}, \xi_p \rangle &\leq  \langle w_{i,r}^{(T_1+T_0)}, \xi_p \rangle +\tilde{O}\left(\frac{k^5 A^{q-1}}{s^2\tilde{N}_c}\right) + \frac{(T_2-T_0)\eta}{\sqrt{d}}\\
        &\leq  \tilde{O}\left(\frac{k A^{q-1}}{\tilde{N}_c\sigma_0^{q-2}} + \frac{k^5 A^{q-1}}{s^2\tilde{N}_c}\right) + \frac{T_2\eta}{\sqrt{d}}.
    \end{align*}
    For single-view data $(X,y)\sim\tilde{\calZ}_{u,s}$, we have that 
    \begin{align*}
        \langle w_{i,r}^{(t)}, \xi_p \rangle &\leq  \langle w_{i,r}^{(T_1+T_0)}, \xi_p \rangle +\tilde{O}\left(\frac{ T_2\eta A^{q-1}}{\tilde{N}_c}\right) + \frac{(T_2-T_0)\eta}{\sqrt{d}}\\
        &\leq  \tilde{O}\left(\frac{k A^{q-1}}{\tilde{N}_c\sigma_0^{q-2}} + \frac{ T_2\eta A^{q-1}}{\tilde{N}_c}\right) + \frac{T_2\eta}{\sqrt{d}}.
    \end{align*}

    When $p\in\calP_{v_{i,l}}(X)$, we have $|\langle w_{i,r}^{(t)}, x_p\rangle |\leq \tilde{O}(1)$ from Induction Hypothesis~\ref{hyp}. Then plugging in $A=\tilde{O}(1), \tilde{N}_c\geq \tilde{\Omega}\left(\frac{k}{\sigma_0^{q-1}}\right)$, $\tilde{N}_c\geq \tilde{\Omega}\left(\frac{k^5}{\sigma_0}\right)$ and $\tilde{N}_c\geq \eta T_2 \polyk$, we can obtain that $|\langle w_{i,r}^{(t)}, \xi_p \rangle| \leq \tilde{o}(\sigma_0)$. 
    
    When $p\in\calP(X)\setminus(\calP_{v_{i,1}}(X)\cup \calP_{v_{i,2}}(X))$, we have $|\langle w_{i,r}^{(t)}, x_p\rangle |\leq \tilde{O}(\sigma_0)$ from the Induction Hypothesis~\ref{hyp}. Then plugging in $A=\tilde{O}(\sigma_0), \tilde{N}_c\geq k^5$ and  $\tilde{N}_c\geq \eta T_2 \polyk$, we can obtain that $|\langle w_{i,r}^{(t)}, \xi_p \rangle| \leq \tilde{O}(\sigma_0)$.

    When $p\in[P]\setminus\calP(X)$, we have $|\langle w_{i,r}^{(t)}, x_p\rangle |\leq \tilde{O}(\sigma_0\gamma k)$ from Induction Hypothesis~\ref{hyp}. Then plugging in $A=\tilde{O}(\sigma_0\gamma k), \tilde{N}_c\geq k^5$ and  $\tilde{N}_c\geq \eta T_2 \polyk$, we can obtain that $|\langle w_{i,r}^{(t)}, \xi_p \rangle| \leq \tilde{O}(\sigma_0\gamma k)$.
   
\end{proof}

\subsubsection{Diagonal Correlations are Nearly Non-Negative} 
\begin{lemma}
\label{lemma:diag_nonneg}
    Suppose \cref{assp:para} holds and suppose Induction Hypothesis~\ref{hyp} holds for all iteration $<t$ starting from $T_1$. Then,  
    \begin{align*}
        \forall i\in[k], \quad \forall r\in[m], \quad \forall l\in[2], \quad \langle w_{i,r}^{(t)}, v_{i,l}\rangle  \geq -\tilde{O}(\sigma_0). 
    \end{align*}
\end{lemma}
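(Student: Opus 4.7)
\textbf{Proof plan for Lemma \ref{lemma:diag_nonneg}.} The approach is an induction on $t$ starting from the base case $t=T_1$. At $t=T_1$ the bound $\langle w_{i,r}^{(T_1)},v_{i,l}\rangle \geq -\tilde{O}(\sigma_0)$ is inherited from the Phase~I analysis in \citet{allen-zhu2023towards} (analogous to Lemma D.24 there). For $t>T_1$, I expand the one-step update
\[
\langle w_{i,r}^{(t+1)},v_{i,l}\rangle = \langle w_{i,r}^{(t)},v_{i,l}\rangle - \eta\bbE_{(X,y)\sim\calZ_l}\langle\nabla_{w_{i,r}} L_s^{(t)},v_{i,l}\rangle - \eta\bbE_{(X,y)\sim\tilde{\calZ}_u}\langle\nabla_{w_{i,r}} L_u^{(t)},v_{i,l}\rangle,
\]
and bound each negative contribution. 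The supervised piece contributes at most $\eta/\polyk$ per step since the Phase~I loss is driven down to $1/\polyk$ and remains there (as argued in the proofs of Claims \ref{claim:growth}, \ref{claim:error}, \ref{claim:learned}); summed over $T_2=\polyk/\eta$ iterations this is absorbed into $\tilde{O}(\sigma_0)$.

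The main work is controlling the unsupervised piece, for which I split into two regimes. If $(i,l)\notin \calM$, Claim \ref{claim:growth} already shows the update direction for $\Phi_{i,l}^{(t)}$ is nonnegative up to lower order terms, and the per-coordinate version gives $\langle w_{i,r}^{(t+1)}, v_{i,l}\rangle \geq \langle w_{i,r}^{(t)}, v_{i,l}\rangle - \eta\tilde{O}(s\tilde{N}_{c,s}/(k^2\tilde{N}_c))$, which accumulates to $\tilde{O}(\sigma_0)$ by Parameter Assumption~\ref{assp:para}. If $(i,l)\in\calM$, the key point is that SA-CutOut zeros out $\calP_{v_{i,l}}(X)$ whenever $v_{i,l}$ is the learned feature of the pseudo-label class, so $V_{i,r,l}(X)=0$ in that case. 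Then Claim \ref{claim:pos}(a) gives a negative contribution only through the $-\tilde{O}(\sigma_p P)$ term when $y=i$, while Claim \ref{claim:neg}(c) bounds the $y\neq i$ contribution by $\logit_i(F^{(t)},\calA(X))(\calE_1+\calE_3+\calE_{4,i,l}(X))$.

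Summing over the $T_2$ iterations of Phase~II, the accumulated negative drift is at most
\[
\eta T_2 \tilde{O}(\sigma_p P) + \eta\sum_{t=T_1}^{T_1+T_2}\bbE_{(X,y)\sim\tilde{\calZ}_u}\bigl[\bbI_{y\neq i}\,\logit_i(F^{(t)},\calA(X))(\calE_1+\calE_3+\calE_{4,i,l}(X))\bigr].
\]
The first term is $\tilde{O}(\sigma_0)$ using $\sigma_p=1/(\sqrt{d}\polylogk)$, $P=k^2$ and the parameter assumption $\sqrt{d}\geq \eta T_1\polyk$ (and the analogous bound for $T_2$). For the second term I split the sum at $T_1+T_0'$: before $T_1+T_0'$ the naive bound $\logit_i\leq O(1/k)$ from Claim \ref{claim:logit} together with $\Pr[v_{i,l}\in\calV(X)\mid y\neq i]=s/k$ makes each summand $O((\calE_1+\calE_3)/k+\tilde{O}(\sigma_0^{q-1})s/k^2)$; after $T_1+T_0'$ I use $\bbE[1-\logit_y]\leq 1/k^4$ from Claim \ref{claim:individual} together with Claim \ref{claim:error} to upper bound the cumulative logit mass by $\tilde{O}(k/\eta)$, yielding total contribution $\tilde{O}((\calE_1+\calE_3+\tilde{O}(\sigma_0^{q-1}))k/\eta\cdot\eta)$ which is $\tilde{O}(\sigma_0)$ by the parameter choices $\gamma\leq\tilde{O}(\sigma_0/k)$ and $\gamma^q\leq \tilde\Theta(1/(k^{q-1}mP))$.

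The hardest step will be the careful bookkeeping in the post-$T_1+T_0'$ regime, because there one cannot use the uniform $O(1/k)$ logit bound and must instead trade cumulative logit mass against the smallness of $\calE_1$ and $\calE_3$; this is exactly the trade-off Claims \ref{claim:error} and \ref{claim:individual} were set up to supply, and the bound $\sigma_0^{q-2}=1/k$ is what closes the accounting. Once these cumulative bounds are assembled, the inductive step $\langle w_{i,r}^{(t+1)},v_{i,l}\rangle\geq -\tilde{O}(\sigma_0)$ follows, completing the proof.
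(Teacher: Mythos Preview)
Your overall strategy (induction from $T_1$, bound the negative drift, handle the supervised piece as a $1/\polyk$ residual) is sound, and your treatment of the $(i,l)\in\calM$ case via SA-CutOut together with the cumulative bookkeeping using Claims~\ref{claim:error} and~\ref{claim:individual} is essentially correct---though you should cite Claim~\ref{claim:neg}(a), not~(c), since you are projecting onto $v_{i,l}$ itself.

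The $(i,l)\notin\calM$ case has a real gap. You invoke Claim~\ref{claim:growth} to assert a per-coordinate lower bound $\langle w_{i,r}^{(t+1)},v_{i,l}\rangle\ge\langle w_{i,r}^{(t)},v_{i,l}\rangle-\eta\tilde{O}(s\tilde{N}_{c,s}/(k^2\tilde{N}_c))$. But Claim~\ref{claim:growth} is proved for the argmax $r$; the mechanism there is that the positive $V$-term (from $y=i$) outweighs the negative $V$-term (from $y\neq i$), both scaling like $\trelu'(\langle w_{i,r}^{(t)},v_{i,l}\rangle)$. For an $r$ whose inner product is already near or below zero---precisely the regime you must control---the positive $V$-term vanishes, so there is no cancellation and the drift is governed by $\tilde{O}(\sigma_pP)$ and $\calE_1+\calE_3$, not only the single-view term you wrote. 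Moreover Claim~\ref{claim:growth} assumes $\Phi_{i,3-l^*}^{(t)}\le O(1)$, so it does not cover all of Phase~II.

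The paper closes this with a single observation that makes your $\calM$ case-split unnecessary: it restricts attention to iterations where $\langle w_{i,r}^{(t)},v_{i,l}\rangle\le -\tilde{\Omega}(\sigma_0)$ already holds, and at such $t$ Induction Hypothesis~\ref{hyp}(a) forces $\langle w_{i,r}^{(t)},\calA(x_p)\rangle\le 0$ for every $p\in\calP_{v_{i,l}}(X)$, so $V_{i,r,l}(X)=0$ by $\trelu'$---regardless of whether $(i,l)\in\calM$ or of what SA-CutOut masked. The drift inequality then reduces to
\[
\langle w_{i,r}^{(t+1)},v_{i,l}\rangle\ge\langle w_{i,r}^{(t)},v_{i,l}\rangle-\eta\,\bbE\bigl[\bbI_{y=i}\tilde{O}(\sigma_pP)(1-\logit_i)+\bbI_{y\neq i}(\calE_1+\calE_3)\logit_i\bigr],
\]
which is accumulated exactly as in your $(i,l)\in\calM$ analysis (splitting at $T_1+T_0$ and invoking Claim~\ref{claim:error}). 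This approach also carries over verbatim to vanilla FixMatch, where your SA-CutOut shortcut is unavailable.
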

\begin{proof}
From Lemma D.27 in \cite{allen-zhu2023towards}, we know $\langle w_{i,r}^{(t)}, v_{i,l}\rangle  \geq -\tilde{O}(\sigma_0)$ for every iteration $t\le T_1$. Now we consider any iteration $t>T_1$ so that $\langle w_{i,r}^{(t)}, v_{i,l} \rangle \leq -\widetilde{\Omega}(\sigma_0)$. We start from this iteration to see how negative the next iterations can be. Without loss of generality, we consider the case when $\langle w_{i,r}^{(t')}, v_{i,l} \rangle \leq -\widetilde{\Omega}(\sigma_0)$ holds for every $t' \geq t$.
By Claim \ref{claim:pos} and Claim \ref{claim:neg},
\begin{align*}
\langle w_{i,r}^{(t+1)}, v_{i,l} \rangle \geq & \langle w_{i,r}^{(t)}, v_{i,l} \rangle + \eta \mathbb{E}_{(X,y) \sim \tcalZ_u} \Big[ \mathbb{I}_{y=i} ( V_{i,r,l}(X) - \tilde{O}(\sigma_p P)) (1 - \logit_i(F^{(t)}, X)) \\
& - \mathbb{I}_{y \neq i} \left( \mathcal{E}_1 + \mathcal{E}_3 + \mathbb{I}_{v_{i,l} \in \mathcal{P}(X)} V_{i,r,l}(X) \right) \logit_i (F^{(t)}, X) \Big]
\end{align*}

Recall by Induction Hypothesis \ref{hyp}(a),
\[
V_{i,r,l}(X) = \sum\nolimits_{p \in P_{v_{i,l}}(X)} \trelu' ( \langle w_{i,r}^{(t)}, x_p \rangle) z_p =  \sum\nolimits_{p \in P_{v_{i,l}}(X)} \trelu' \left( \langle w_{i,r}, v_{i,l} \rangle z_p \pm \tilde{o}(\sigma_0) \right) z_p.
\]
Since we have assumed $\langle w_{i,r}^{(t)}, v_{i,l} \rangle \leq -\widetilde{\Omega}(\sigma_0)$, so \(V_{i,r,l}(X)=0\), and we have
\begin{equation}\label{eq:diag_nonneg1}
\begin{aligned}
\langle w_{i,r}^{(t+1)}, v_{i,l} \rangle \geq & \langle w_{i,r}^{(t)}, v_{i,l} \rangle - \eta \mathbb{E}_{(X,y) \sim \tcalZ_u} \Big[ \mathbb{I}_{y=i} \tilde{O}(\sigma_p P) ( 1 - \logit_i(F^{(t)}, X) ) \\
& + \mathbb{I}_{y \neq i} \left( \mathcal{E}_1 + \mathcal{E}_3 \right) \logit_i(F^{(t)}, X) \Big].
\end{aligned}
\end{equation}
We first consider every $t \leq T_1+T_0$. Using Claim \ref{claim:logit} we have \(\logit_i(F^{(t)}, X) = O\left( \frac{1}{k} \right)\), which implies
\[
\langle w_{i,r}^{(t)}, v_{i,l} \rangle \geq -\tilde{O}(\sigma_0) - O\left(\frac{\eta T_0}{k}\right)(\mathcal{E}_1 + \mathcal{E}_3) \geq -\tilde{O}(\sigma_0).
\]
As for $t > T_1+T_0$, combining with Claim \ref{claim:error} and the fact that \(\logit_i(F^{(t)}, X) \leq 1 - \logit_y(F^{(t)}, X)\) for \(i \neq y\), we have
\[
\langle w_{i,r}^{(t)}, v_{i,l} \rangle \geq \langle w_{i,r}^{(T_0)}, v_{i,l} \rangle - \tilde{O}(k) (\mathcal{E}_1 + \mathcal{E}_3) \ge \langle w_{i,r}^{(T_0)}, v_{i,l} \rangle - \tilde{O} \left(\sigma_0\right) \geq -\tilde{O}(\sigma_0).
\]
\end{proof}

\subsubsection{Proof of Induction Hypothesis \ref{hyp}}
Now we are ready to prove our Induction Induction Hypothesis \ref{hyp}, the proof is similar to Theorem D.2 in \cite{allen-zhu2023towards}.
\begin{lemma}\label{lemma:hyp}
Under Parameter Assumption \ref{assp:para}, for any $m=\polylogk$ and sufficiently small $\eta \leq \frac{1}{\polyk}$, our Induction Hypothesis \ref{hyp} holds for all iterations $t = T_1, T_1+1, \dots, T_1+T_2$.
\end{lemma}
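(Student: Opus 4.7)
I would prove the lemma by strong induction on $t$, simultaneously establishing all five parts (a)--(e) of Induction Hypothesis~\ref{hyp} at each step from $t=T_1$ up through $t=T_1+T_2$.

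\emph{Base case ($t=T_1$).} At the end of Phase~I, the supervised-learning analysis of~\citet{allen-zhu2023towards} (reviewed in Appendix~\ref{app:thm_supervised}) already verifies the hypothesis: parts (a)--(c) match the analogous bounds they derive for their Phase~I kernels, the upper bound $\Phi_{i,l}^{(T_1)}\le \tilde{O}(1)$ in (d) is part of their conclusion, the lower bound $\Phi_{i,l}^{(T_1)}\ge \tilde{\Omega}(\sigma_0)$ is inherited from the Gaussian initialization (which guarantees $\max_{r}\langle w_{i,r}^{(0)}, v_{i,l}\rangle = \tilde{\Theta}(\sigma_0)$) together with the fact that Phase~I only makes diagonals more positive, and (e) holds via their Lemma~D.27.

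\emph{Inductive step.} Fix $t$ with $T_1 < t \le T_1+T_2$ and assume the hypothesis holds at every $t' < t$. Then each of Lemmas~\ref{lemma:correlation}--\ref{lemma:diag_nonneg} is applicable at iteration $t$. Lemma~\ref{lemma:correlation} delivers the upper bound $\Phi_{i,l}^{(t)}\le \tilde{O}(1)$ in (d). Lemma~\ref{lemma:diag_nonneg} gives (e) directly, and combined with the base-case lower bound together with the per-step accumulated drift of at most $\tilde{O}(\sigma_0)$ (implicit in the bounds used in Claim~\ref{claim:learned} and Lemma~\ref{lemma:diag_nonneg}) yields the lower bound $\Phi_{i,l}^{(t)}\ge \tilde{\Omega}(\sigma_0)$ in (d). To verify (a)--(c), I expand using Definition~\ref{app:def_data}: writing $x_p = z_p v + \sum_{v'\in\calV}\alpha_{p,v'}v' + \xi_p$ (or, for $p\in [P]\setminus \calP(X)$, dropping the $z_p v$ term and using the larger Gaussian variance $\gamma^2 k^2/d$), I bound the inner product term by term. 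The cross-class feature terms satisfy $|\alpha_{p,v'}\langle w_{i,r}^{(t)}, v_{j,l'}\rangle| \le \gamma\cdot \tilde{O}(\sigma_0)$ by Lemma~\ref{lemma:off_diagonal}, so summing over $v'\in\calV$ (of size $2k$) contributes $\tilde{O}(\sigma_0\gamma k) \le \tilde{o}(\sigma_0)$ under $\gamma \le \tilde{O}(\sigma_0/k)$; the noise term is controlled by Lemma~\ref{lemma:noise}. Hence (a) follows with the dominant term $\langle w_{i,r}^{(t)}, v_{i,l}\rangle z_p$ isolated; (b) holds since no diagonal feature $v_{i,l}$ appears in $x_p$, leaving only off-diagonal and noise contributions of size $\tilde{O}(\sigma_0)$; and (c) follows analogously with the looser Gaussian bound giving $\tilde{O}(\sigma_0\gamma k)$.

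\emph{Main obstacle.} The central subtlety is the circular dependence: each of Lemmas~\ref{lemma:correlation}--\ref{lemma:diag_nonneg} assumes Induction Hypothesis~\ref{hyp} at prior iterations while deriving a bound at the current one, and the hypothesis itself is what we want to establish. The resolution is strong-induction bootstrapping, applying all four lemmas in one simultaneous stroke at step $t$, so that IH at $t$ is produced from IH on $t'<t$. A secondary technical point is tracking the implicit polylogarithmic factors in $\tilde{O}(\cdot)$ and $\tilde{o}(\cdot)$ so that the bounds proved by the lemmas are strictly tighter than those demanded by the hypothesis at the next step, ensuring the induction does not degrade across the full horizon of $T_2 = \polyk/\eta$ iterations. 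Parameter Assumption~\ref{assp:para} (in particular $\eta \le 1/\polyk$, $\sqrt{d} \ge \eta T_1\cdot \polyk$, $\gamma\le \tilde{O}(\sigma_0/k)$, and $N_u \ge \eta T_2\cdot \polyk$) provides exactly the slack required.
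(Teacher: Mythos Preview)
Your proposal is correct and follows essentially the same strong-induction route as the paper: base case at $t=T_1$ from \citet{allen-zhu2023towards}, then invoke Lemmas~\ref{lemma:correlation}--\ref{lemma:diag_nonneg} simultaneously at step $t$ and combine with the patch expansion from Definition~\ref{app:def_data} to read off (a)--(e). The only minor deviation is in the lower bound of (d): the paper uses Claim~\ref{claim:growth} (approximate monotonic growth of $\Phi_{i,3-l^*}^{(t)}$) to get $\Phi_{i,l}^{(t)}\ge\Omega(\Phi_{i,l}^{(0)})\ge\tilde{\Omega}(\sigma_0)$ directly, whereas your ``accumulated drift'' argument via Claim~\ref{claim:learned} and Lemma~\ref{lemma:diag_nonneg} is vaguer and would need more care to avoid the total drift over $T_2$ steps swamping the $\tilde{\Omega}(\sigma_0)$ initial value---the monotonicity from Claim~\ref{claim:growth} is the cleaner tool here.
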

\begin{proof}
At iteration $t$, we first calculate
\begin{align}
\forall p \in P_{v_{j,l}}(X):& \quad \langle w_{i,r}^{(t)}, x_p \rangle = \langle w_{i,r}^{(t)}, v_{j,l} \rangle z_p + \sum_{v' \in \mathcal{V}} \alpha_{p,v'} \langle w_{i,r}^{(t)}, v' \rangle + \langle w_{i,r}^{(t)}, \xi_p \rangle, \label{eq:hyp1} \\
\forall p \in [P] \setminus P(X):& \quad \langle w_{i,r}^{(t)}, x_p \rangle = \sum_{v' \in \mathcal{V}} \alpha_{p,v'} \langle w_{i,r}^{(t)}, v' \rangle + \langle w_{i,r}^{(t)}, \xi_p \rangle. \label{eq:hyp2}
\end{align}

By \cite{allen-zhu2023towards} we already know Induction Hypothesis \ref{hyp} holds at iteration $t = T_1$. Suppose Induction Hypothesis \ref{hyp} holds for all iterations $< t$ starting from $T_1$. We have established several lemmas:
\begin{align}
\text{Lemma \ref{lemma:correlation}} &\implies \forall i \in [k], \forall r \in [m], \forall l \in [2]: \langle w_{i,r}^{(t)}, v_{i,l} \rangle \le \tilde{O}(1), \label{eq:hyp3} \\
\text{Lemma \ref{lemma:off_diagonal}} &\implies \forall i \in [k], \forall r \in [m], \forall j \in [k] \setminus \{i\}: |\langle w_{i,r}^{(t)}, v_{j,l} \rangle| \leq \tilde{O}(\sigma_0), \label{eq:hyp4} \\
\text{Lemma \ref{lemma:diag_nonneg}} &\implies \forall i \in [k], \forall r \in [m], \forall l \in [2]: \langle w_{i,r}^{(t)}, v_{i,l} \rangle \geq -\tilde{O}(\sigma_0). \label{eq:hyp5}
\end{align}
\begin{itemize}
\item To prove \cref{hyp}(a), it suffices to plug Eq.~\eqref{eq:hyp4}, Eq.~\eqref{eq:hyp5} into Eq.~\eqref{eq:hyp1}, use $\alpha_{p,v'} \in [0, \gamma]$, use $|\mathcal{V}| = 2k$, and use $|\langle w_{i,r}^{(t)}, \xi_p \rangle| \leq \tilde{o}(\sigma_0)$ from Lemma \ref{lemma:noise}.
\item To prove \cref{hyp}(b), it suffices to plug Eq.~\eqref{eq:hyp3}, Eq.~\eqref{eq:hyp4} into Eq.~\eqref{eq:hyp1}, use $\alpha_{p,v'} \in [0, \gamma]$, use $|\mathcal{V}| = 2k$, and use $|\langle w_{i,r}^{(t)}, \xi_p \rangle| \leq \tilde{O}(\sigma_0)$ from Lemma \ref{lemma:noise}.
\item To prove \cref{hyp}(c), it suffices to plug Eq.~\eqref{eq:hyp3}, Eq.~\eqref{eq:hyp4} into Eq.~\eqref{eq:hyp2}, use $\alpha_{p,v'} \in [0, \gamma]$, use $|\mathcal{V}| = 2k$, and use $|\langle w_{i,r}^{(t)}, \xi_p \rangle| \leq \tilde{O}(\sigma_0 \gamma k)$ from Lemma \ref{lemma:noise}.
\item To prove \cref{hyp}(d), it suffices to note that Eq.~\eqref{eq:hyp3} implies $\Phi_{i,l}^{(t)} \leq \tilde{O}(1)$, and note that Claim \ref{claim:growth} implies $\Phi_{i,l}^{(t)} \geq \Omega(\Phi_{i,l}^{(0)}) \geq \tilde{\Omega}(\sigma_0)$.
\item To prove \cref{hyp}(e), it suffices to invoke Eq.~\eqref{eq:hyp5}.
\end{itemize}
\end{proof}

\subsection{Proof of Theorem~\ref{appthm:sa_fixmatchmatch}}
Recall our training objective is
\[
L^{(t)} = L_s^{(t)} + \lambda L_u^{(t)}
= \bbE_{(X,y)\sim\calZ_l}[-\log\logit_y(F^{(t)},X)] + \bbE_{X\sim\tilde{\calZ}_u}[-\log \logit_b(F^{(t)},\calA(X))].
\]
From \cite{allen-zhu2023towards} we know \(\bbE_{(X,y)\sim\calZ_l}[-\log\logit_y(F^{(T_1)},X)]\leq \frac{1}{\polyk}\) holds at the end of learning Phase I, and according to Claim \ref{claim:learned} we now this continues to hold true during learning Phase II. For \( \bbE_{X\sim\tilde{\calZ}_u}[-\log \logit_b(F^{(t)},\calA(X))] \), since we have for every data $(X,y)\sim \tcalZ_u$ $(y=b)$:
\begin{itemize}
    \item if $\logit_y(F^{(t)}, \calA(X)) \geq \frac{1}{2}$, then we know $-\log \logit_y(F^{(t)}, \calA(X)) \leq O\left( 1 - \logit_y(F^{(t)}, \calA(X)) \right)$;
    \item if $\logit_y(F^{(t)}, \calA(X)) \leq \frac{1}{2}$, this cannot happen for too many tuples $(X, y, t)$ thanks to Claim \ref{claim:error}, and when this happens we have a naive bound $-\log \logit_y(F^{(t)}, \calA(X)) \in [0, \tilde{O}(1)]$ using Claim \ref{claim:logit}.
\end{itemize}
Therefore, by Claim \ref{claim:error}, we know when $T_2 \geq \frac{\polyk}{\eta}$,
\[
\frac{1}{T_2} \sum\nolimits_{t=T_1+T_0}^{T_1+T_2} \mathbb{E}_{(X,y) \sim \tcalZ_u} \left[ - \log \logit_y(F^{(t)}, X) \right] \leq \frac{1}{\polyk}.
\]

Moreover, since we are using full gradient descent and the objective function is $O(1)$-Lipschitz continuous, the objective value decreases monotonically. Specifically, this implies that
\[
\mathbb{E}_{(X,y) \sim \tcalZ_u} [1 - \logit_y(F^{(T)}, \calA(X))] \leq \mathbb{E}_{(X,y) \sim \tcalZ_u} [ - \log \logit_y(F^{(T)}, \calA(X)) ] \leq \frac{1}{\polyk}
\]
for the last iteration $T=T_1+T_2$, which directly implies that the training accuracy is perfect.

As for the test accuracy, from Claim \ref{claim:individual} and Claim D.16 in \cite{allen-zhu2023towards}, we have for every $i,j \in [k]$,
\[
\Phi_{i}^{(T)} - 0.4 \Phi_{j}^{(T)} \ge\Omega(\log(k)),\quad \Phi_{i,1}^{(T)},\Phi_{i,2}^{(T)},\Phi_{j,1}^{(T)},\Phi_{j,2}^{(T)} \ge \Omega(\log(k)).
\]
This combined with the function approximation Claim \ref{claim:func} shows that with high probability $F_y^{(T)}(X) \geq \max_{j \neq y} F_j^{(T)}(X) + \Omega(\log k)$ for every $(X, y) \in \calD_m, \calD_s$, which implies that the test accuracy on both multi-view data and single-view data is perfect.

\section{Proof for FixMatch}
\label{app:fixmatch}
In this section, we consider proving Theorem \ref{appthm:fixmatch} on FixMatch. In this case, the formulation of strong augmentation $\calA(\cdot)$ is defined in Eq.~\eqref{eq:app_cutout}. For $(X,y)\in\tilde{Z}_{u,s}$ with $\hat{l}(X)=l^*$, the feature $v_{y, l^*}$ is removed with the probability $\pi_2$. When the patches of learned feature are removed, the left part is pure noise. In this way, same as Claim~\ref{claim:logit}, for every $i\in[k]$, $\logit_i(F^{(t)}, \calA(X))=O(\frac{1}{k})$, and $V_{i,r,l}=\bbI_{v_{i,l}\in\calV(X)}\sum_{p\in\calP_{v_{i,l}}(X)} \trelu'(\langle w_{i,r}, x_p\rangle)z_p = o(\frac{1}{\polylogk})$. When the patches of noises are removed, the left part is semantic patches of feature $v_{y, l^*}$. Since we have already captured this feature in learning Phase I, we have $\logit_y(F^{(t)}, \calA(X))\geq 1-\tilde{O}(\frac{1}{s^2})$. In both above cases, by Claim \ref{claim:pos} and Claim \ref{claim:neg}, the training samples $(X,y)\in\tcalZ_{u,s}$ contribute little to the weight update process.

For multi-view data $(X,y)\in\tilde{Z}_{u,m}$, when the patches of noises are removed with probability $1-\pi_2$, since the learned feature $v_{y,l^*}$ of Phase I is still in the data, we have $\logit_y(F^{(t)}, \calA(X))\geq 1-\tilde{O}(\frac{1}{s^2})$. When the patches of unlearned feature $v_{y,3-l^*}$ are removed with probability $\pi_1\pi_2$ or $(1-\pi_1)\pi_2$ (depending on the value of $3-l^*$), the learned feature of Phase I is also still in the data. Thus we also have $\logit_y(F^{(t)}, \calA(X))\geq 1-\tilde{O}(\frac{1}{s^2})$. In this way, the loss on all data points $(X,y)\in\tilde{\calZ}_{u,m}$ that belongs to the above two cases keeps small ($\leq \frac{1}{\polyk}$) and contributes negligible to the learning of unlearned features in Phase II. Finally, when the patches of learned feature $v_{y,l^*}$ are removed with probability $\pi_1\pi_2$ or $(1-\pi_1)\pi_2$ (depending on the value of $l^*$), the remaining patches of feature $v_{y,3-l^*}$ are unlearned and the approximation of initial loss on this part of samples is the same as Claim~\ref{claim:logit}. The loss on samples $(X,y)\sim \tcalZ_{u,m}$ with learned features removed dominates the training objective in learning Phase II, and the rest proof schedule is the same as the proof of SA-FixMatch. However, now the size of data with learned features removed for class $i\in[k]$ is either $\pi_1\pi_2 \cdot \tilde{N}_u^i$ or $(1-\pi_1)\pi_2 \cdot \tilde{N}_u^i$. Thus, similar to the proof of Theorem~\ref{appthm:sa_fixmatchmatch} and for the simplicity of notation, the requirement on the size of unlabeled data for FixMatch should be $\tilde{N}_u \geq \eta T_2 \cdot \polyk/ \min\{\pi_1\pi_2, (1-\pi_1)\pi_2\}$. Accordingly, we can derive that the relationship between the size of the unlabeled data in SA-FixMatch $N_c$ and FixMatch $N_u$ is given by $N_{c}=\max\{ \pi_1\pi_2, (1-\pi_1)\pi_2\} N_u$.

\section{Proof for FlexMatch, FreeMatch, Dash, and SoftMatch}
\label{app:like}
Our analysis framework and theoretical results are also applicable to other FixMatch-like SSL, e.g., FlexMatch \citep{zhang2021flexmatch}, FreeMatch \citep{wang2022freematch}, Dash \citep{xu2021dash}, and SoftMatch \citep{chen2023softmatch}, since the main difference is the choice of confidence threshold \(\calT_t\) in unsupervised loss Eq.~\eqref{ulb_loss}. Here we first introduce their choice of \(\calT_t\) and then explain how our theoretical results in Sec.~\ref{sec-main_result} can be generalized to their case.

FlexMatch \citep{zhang2021flexmatch} designs an adaptive class-specific threshold $\mathcal{T}_t=\beta_t(b)\tau$ at iteration $t$, where $\beta_t(b) \in [0,1]$ is the model's prediction confidence for class $b$ (L1 normalized). FreeMatch \citep{wang2022freematch} replaces $\tau$ in FlexMatch with an adaptive $\tau_t$, which is the average prediction confidence of the model on unlabeled data and increases as the training progresses. SoftMatch uses the average prediction confidence \(\tau_t\) of the model as the threshold and sets the sample weight as 1.0 if \(\logit_b(F^{(t)}, \alpha(X_u)) \ge \tau_t\), otherwise a smaller constant according to a Gaussian function. Dash adopts the cross-entropy loss to design the indicator function \(\bbI_{\{ -\log \logit_b(F^{(t)}, \alpha(X_u)) < \rho_t\}}\), where \(\rho_t\) decreases as training processes. This is equivalent to a dynamically increasing threshold $\mathcal{T}_t$ in Eq.~\eqref{ulb_loss}. Below we detail how our theoretical findings apply to each of these SSL algorithms.

FlexMatch \citep{zhang2021flexmatch} differentiates itself from FixMatch by modifying the constant threshold \(\tau\) to include an adaptive class-specific threshold \(\beta_t(b)\) for each class \(b\). Under our multi-view data assumption as defined in Def. \ref{app:def_data}, the data distribution for each class is the same. Consequently, as suggested by Claim \ref{claim:growth} and Claim D.10 in \cite{allen-zhu2023towards},
all classes progress at a similar rate during training. This uniformity over all classes allows us to standardize \(\beta_t(b)=1\), \(\forall b\in [k]\), thereby aligning the proof for FlexMatch with that of FixMatch.

For FreeMatch \citep{wang2022freematch} and SoftMatch \citep{chen2023softmatch}, instead of applying a large constant threshold \(\tau\) during the training process, they use an adaptive \(\tau_t\) to involve more unlabeled data with correctly-predicted pseudo-label in the training of the network. Under our multi-view data assumption Def. \ref{app:def_data}, the majority of the data in training dataset is of multi-view (with probability \(1-\frac{1}{\polyk}\)), so we only consider the network's prediction confidence for multi-view data to determine \(\tau_t\). We set the adaptive threshold \(\tau_t\) as follows:
\begin{equation}
    \setlength{\abovedisplayskip}{3pt}
\setlength{\belowdisplayskip}{3pt}
\setlength{\abovedisplayshortskip}{3pt}
\setlength{\belowdisplayshortskip}{3pt}
\tau_t \!=\! \begin{cases}
\max_{X\in\calZ_{u,m}}[\logit_b(F^{(t)},X)],&\!\!\!\!\!  t = T_0, \\
\beta \tau_{t-1} + (1-\beta) \max_{X\in\calZ_{u,m}}[\logit_b(F^{(t)},X)], \quad &\!\!\!\! t > T_0,
\end{cases}
\end{equation}
where $\beta$ is the momentum parameter, \(b = \argmax_i \logit_i (F^{(t)}, X)\), and \(T_0=\Theta(\frac{k}{\eta \sigma_0^{q-2}})\). Here we do not consider the unsupervised loss term Eq.~\eqref{ulb_loss} before \(T_0\)-th iteration in our analysis, since the model is bad at generating correct pseudo-label at the initial phase of training. We use \(\max\) function here to ensure the high quality of pseudo-label for unlabeled data involved at each training step. After \(T_0\)-th iteration, according to Claim D.11 and Lemma D.22 in \citet{allen-zhu2023towards}, the feature correlations increase to \(\Lambda_i^{(t)} = \tilde{\Theta}(1)\) for \(t \ge T_0\), while the off-diagonal correlations \(\langle w_{i,r}, v_{j,l} \rangle\) (\(i \neq j\)) keep small at the scale of $\tilde{O}(\sigma_0)$. Denote $\Phi^{(t)} = \max_{i\in[k], l\in[2]} \Phi_{i,l}^{(t)}$, recall from Claim D.9 in~\cite{allen-zhu2023towards}, for every \(X\in \argmax_{X\in\calZ_{u,m}}[\logit_b(F^{(t)},X)]\) with ground truth label \(y\), we have $F_j^{(t)}(X)\leq 0.8001\Phi^{(t)}$ for $j\neq y$ and $F_y^{(t)}(X)\geq 0.9999\Phi^{(t)}$ with probability at least $1 - e^{-\Omega(\log^2 k)}$. Accordingly, we have $F_y^{(t)}(X)\geq \max_{j\neq y}F_j^{(t)}(X)+\tilde{\Theta}(1)$, which means that \(F^{(t)}\) can correctly classify the unlabeled data with high probability (i.e., $b=y$). Therefore, when \(t \ge T_0\), both the supervised loss Eq.~\eqref{lb_loss} and unsupervised loss Eq.~\eqref{ulb_loss} take effect. Same as in Sec.~\ref{sec-main_result}, we use \(\Phi_{i,l}^{(t)}\) here to monitor the feature learning process. For \((i,l)\in \calM\), feature \(v_{i,l}\) is partially learned during the first \(T_0\) iterations in that \(\Phi_{i,l}^{(T_0)}=\tilde{\Theta}(1) < \Omega(\log(k))\), while feature \(v_{i,3-l}\) is missed in that \(\Phi_{i,3-l}^{(T_0)}=\tilde{O}(\sigma_0)\ll \tilde{\Theta}(1)\). Start from \(T_0\), feature \(v_{i,l}\) is continued to be better learned with the help of supervised loss and unsupervised loss until \(\Phi_{i,l}^{(t)} \ge \Omega(\log k)\), and feature \(v_{i,3-l}\) start to be learned with the help of unsupervised loss. We can analyze this feature learning process using a similar approach as in Sec.~\ref{app:safixmatch}. The key intuition for the extension of the proof of FixMatch to FreeMatch and SoftMatch is that by setting an adaptive confidence threshold, the learning process of unlearned features begin at $T_0$ instead of $T_1=\polyk / \eta \gg T_0$ in FixMatch.

For Dash, it uses cross-entropy loss as the threshold indicator function rather than prediction confidence \(\bbI_{\{ -\log \logit_b(F^{(t)}, \alpha(X_u)) < \rho_t\}}\), where \(\rho_t\) decreases as the training progresses. Since we have 
\[
-\log \logit_b(F^{(t)}, \alpha(X_u)) < \rho_t \Longleftrightarrow \logit_b(F^{(t)}, \alpha(X_u)) > e^{-\rho_t},
\]
we can set \(\rho_t = -\log \tau_t\) and the rest of the analysis is the same as in SoftMatch and FreeMatch.

\section{Effect of Strong Augmentation on Supervised Learning}
\label{app:cutinsl}
In this section, we show why using strong augmentation with probabilistic feature removal effect, such as CutOut, in supervised learning (SL) has minimal alternation to the feature learning process. In SL, strong augmentation \(\calA(\cdot)\) is utilized at the start of training, before any feature has been effectively learned, corresponding to Phase I of SSL. According to Assumption \ref{app:assum_strong}, \(\calA(\cdot)\) randomly removes its semantic patches and noisy patches with probabilities of $\pi_2$ and $1-\pi_2$, respectively. Then for a single-view image, its only semantic feature is removed with probability $\pi_2$.  For a multi-view image, one of the two features, \(v_{i,1}\) or \(v_{i,2}\) is removed with probabilities $\pi_1 \pi_2$ and $(1 - \pi_1) \pi_2$, respectively. Thus, the size of single-view data in training dataset \(\calZ_l\) is increased, as \(\calA(\cdot)\) transfers \(\pi_2 N_{l,m}\) multi-view samples to single-view. 

However, \(\pi_2 \in (0,1)\) is small, since based on our data assumption in Def.~\ref{app:def_data}, the number of patches associated with certain semantic feature is constant $C_p$ while the total number of patches is $P=k^2$. 
Therefore, when we do random masking in \(\calA(\cdot)\) (usually masks \(1/4\) of all patches), we can approximate $\pi_2$ as $(C_p/P)^{C_p}$, which is $O(1/k^{C_p})$ based on our definition of \(\calA(\cdot)\) in Eq. \eqref{eq:app_cutout}. 

Consequently, strong augmentation \(\calA(\cdot)\) only slightly increases the proportion of single-view data, and the majority of the training dataset remains multi-view, which dominates the supervised training loss Eq.~\eqref{lb_loss} since no feature has been learned. The assumptions on the number of labeled single-view data $N_{l,s}\leq \tilde{o}(k/\rho)$ and $N_l\geq  N_{l,s}\cdot \polyk$ still hold after strong augmentation \(\calA(\cdot)\). 
Thus, according to \citet{allen-zhu2023towards} and Appendix \ref{app:thm_supervised}, the network learns one feature per class to correctly classify the majority multi-view data due to "view lottery winning",  and memorizes the single-view data without learned feature during the training process of SL. We also validate the limited effect of CutOut on SL through experimental results in Appendix \ref{app:exp_cutout_sl}.

\section{Comparison with Pioneering Work}
While this work follows the data assumption and proof framework of \citet{allen-zhu2023towards}, analysis of the feature learning process is significantly different. Firstly, this work focuses on SSL, where supervised loss on labeled data and unsupervised loss on unlabeled data  result in rather different feature learning processes compared with supervised distillation loss on only labeled data in~\cite{allen-zhu2023towards}. Secondly,  SSL uses the on-training model   as an online teacher which varies along training iterations, while the SL setting in~\cite{allen-zhu2023towards} uses a well-trained and fixed model as an offline teacher. Indeed, the online teacher in SSL setting is more challenging to analyze, as the evolution of its performance is harder to characterize, and has a rather different learning process.

\section{(SA-)FixMatch Algorithm}
In this section, we present the detailed algorithm framework for FixMatch \citep{sohn2020fixmatch} and SA-FixMatch. At iteration $t$, we first sample a batch of $B$ labeled data $\calX^{(t)}$ from labeled dataset $\calZ_l$, and a batch of $\mu B$ unlabeled data $\calU^{(t)}$ from unlabeled dataset $\calZ_u$. Then, according to \cref{alg:fixmatch}, we calculate the loss for current iteration, and use it for the update of the neural network model $F^{(t)}$. The only difference between FixMatch and SA-FixMatch is in line 6, where FixMatch adopts CutOut in its strong augmentation of unlabeled data $\calA$, while SA-FixMatch adopts SA-CutOut.

\begin{algorithm}[ht]
   \caption{(SA-)FixMatch algorithm.}
   \label{alg:fixmatch}
\begin{algorithmic}[1]
\STATE {\bfseries Input:} Labeled batch $\calX^{(t)}=\left\{\left(X_i, y_i \right): i \in(1, \ldots, B)\right\}$, unlabeled batch $\calU^{(t)}=\left\{U_i: i \in(1, \ldots, \mu B)\right\}$,
confidence threshold $\tau$, unlabeled data ratio $\mu$, unlabeled loss weight $\lambda$.
\STATE $L_s^{(t)}=\frac{1}{B} \sum_{i=1}^B \left( - \log \logit_{y_i}(F^{(t)},\alpha(X_i))\right)$ \{\textit{Cross-entropy loss for labeled data}\}
\FOR{$i=1$ {\bfseries to} $\mu B$}
\STATE $v_i= \argmax_j\{\logit_j(F^{(t)}, \alpha(U_i))\}$ \{\textit{Compute prediction after applying weak data augmentation of $U_i$}\}
\ENDFOR
\STATE $L_u^{(t)} = \frac{1}{\mu B} \sum_{i=1}^{\mu B} \left( -\bbI_{\{ \logit_{v_i}(F^{(t)}, \alpha(U_i)) \ge \tau \}}   \log \logit_{v_i}( F^{(t)},\calA(U_i)) \right)$ \{\textit{Cross-entropy loss with pseudo-label and confidence for unlabeled data}\}
\STATE {\bfseries return:} $L_s^{(t)}+\lambda L_u^{(t)}$
\end{algorithmic}
\end{algorithm}

\section{Experimental Details}
\subsection{Effect of Strong Augmentation} \label{app:strong_aug}
In this section, we conduct experiments to evaluate the impact of different strong augmentation operations employed in FixMatch \citep{sohn2020fixmatch}. We assess their effects by applying these strong augmentations to test images and observing the resulting changes in test accuracy. To ensure a fair comparison, we train neural networks using weakly-augmented labeled data, where the weak augmentation consists of a random horizontal flip and a slight extension of the image around its edges before cropping the main portion. Subsequently, we apply a single strong augmentation operation at a time to the test dataset and record the corresponding test accuracy on the pretrained model. The pool of strong augmentation operations from RandAugment \citep{cubuk2020randaugment} includes: Colorization, Equalize, Posterize, Solarize, Rotate, Sharpness, ShearX, ShearY, TranslateX, and TranslateY. The experimental results are summarized in Tables \ref{table:cifar-aug} and \ref{table:stl-aug}.

\begin{table}[ht]
\centering
\begin{tabular}{|c|c|c|c|c|}
\hline
Original & CutOut & ShearX & Solarize & TranslationX \\
\hline
80.95 & 38.67 & 78.38 & 52.70 & 80.69 \\
\hline
Sharpness & Posterize & Equalize & Rotate & Color \\
\hline
72.99 & 76.17 & 60.14 & 67.97 & 69.71 \\
\hline
\end{tabular}
\caption{Pretrained model test accuracies (\%) with different strong augmentation operations for test images on CIFAR-100.}
\label{table:cifar-aug}
\end{table}

\begin{table}[ht]
\centering
\begin{tabular}{|c|c|c|c|c|}
\hline Original & CutOut & ShearX & Solarize & TranslationX \\
\hline
86.44 & 62.94 & 85.01 & 78.94 & 84.44 \\
\hline Sharpness & Posterize & Equalize & Rotate & Color \\
\hline
82.17 & 86.19 & 80.74 & 82.29 & 84.34 \\
\hline
\end{tabular}
\caption{Pretrained model test accuracies (\%) with different strong augmentation operations for test images on STL-10.}
\label{table:stl-aug}
\end{table}

From Tables \ref{table:cifar-aug} and \ref{table:stl-aug}, we observe that CutOut is the strong augmentation operation with the most significant impact on model performance. Additionally, transformations such as Solarize and Equalize from RandAugment also have a noticeable effect on model performance. To better understand the influence of these transformations on input images, we visualize the effects of CutOut, Solarize, and Equalize on CIFAR-100 images in Figure \ref{fig:strong_aug}. From the first and second rows of Figure \ref{fig:strong_aug}, we can see that both CutOut and Solarize have the potential to remove semantic features by masking parts of the images. From the third row of Figure \ref{fig:strong_aug}, we observe that Equalize tends to remove color features of images while retaining shape features. In all cases, these effective strong augmentation operations have the potential to remove partial semantic features. Therefore, in Assumption \ref{assum2} for strong augmentation \(\calA(\cdot)\), we focus on its probabilistic feature removal effect.
    \begin{figure*}[ht]
    \begin{center}
        \setlength{\tabcolsep}{0.0pt}  
        \scalebox{0.9}{\begin{tabular}{ccccccccccc} 

            \includegraphics[width=0.1\linewidth]{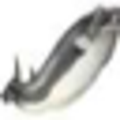} &
            \includegraphics[width=0.1\linewidth]{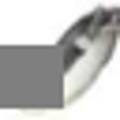}&$\qquad$ &

            \includegraphics[width=0.1\linewidth]{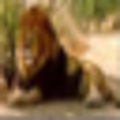}&
            \includegraphics[width=0.1\linewidth]{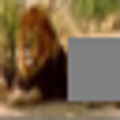}&$\qquad$ &
            
            \includegraphics[width=0.1\linewidth]{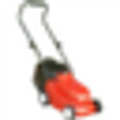}&
            \includegraphics[width=0.1\linewidth]{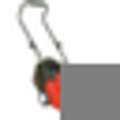}&$\qquad$ &
                
            \includegraphics[width=0.1\linewidth]{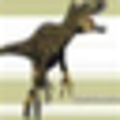}&
            \includegraphics[width=0.1\linewidth]{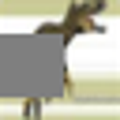}
            \vspace{-0.4em}\\

            \includegraphics[width=0.1\linewidth]{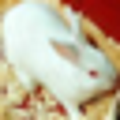} &
            \includegraphics[width=0.1\linewidth]{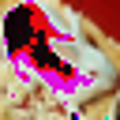}&$\qquad$ &

            \includegraphics[width=0.1\linewidth]{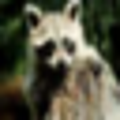}&
            \includegraphics[width=0.1\linewidth]{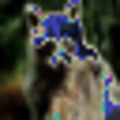}&$\qquad$ &
            
            \includegraphics[width=0.1\linewidth]{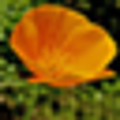}&
            \includegraphics[width=0.1\linewidth]{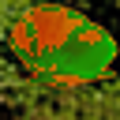}&$\qquad$ &
                
            \includegraphics[width=0.1\linewidth]{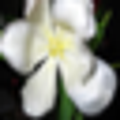}&
            \includegraphics[width=0.1\linewidth]{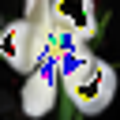}
            \vspace{-0.4em}\\

            \includegraphics[width=0.1\linewidth]{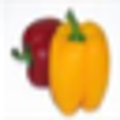} &
            \includegraphics[width=0.1\linewidth]{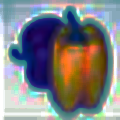}&$\qquad$ &

            \includegraphics[width=0.1\linewidth]{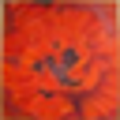}&
            \includegraphics[width=0.1\linewidth]{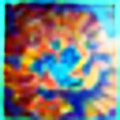}&$\qquad$ &
            
            \includegraphics[width=0.1\linewidth]{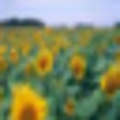}&
            \includegraphics[width=0.1\linewidth]{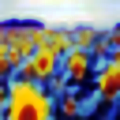}&$\qquad$ &
                
            \includegraphics[width=0.1\linewidth]{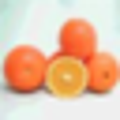}&
            \includegraphics[width=0.1\linewidth]{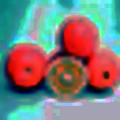}
            \vspace{-0.4em}\\

        \end{tabular}}
    \end{center}
     \vspace{-0.5em}
    \caption{Visualization of the effects of CutOut (first row), Solarize (second row), and Equalize (third row) on CIFAR-100 images.}
    \label{fig:strong_aug}
\end{figure*}

\subsection{Effect of Weak Augmentation}\label{app:weak_aug}
Since weak augmentation consists only of a random horizontal flip and a random crop with a small padding of 4 pixels, followed by cropping the padded image back to the original size, it minimally alters the semantic features of the image. This allows us to treat weak augmentation \(\alpha(\cdot)\) as an identity mapping for our theoretical analysis in Sec.~\ref{sec-main_result}. In this section, we conduct experiments by training FixMatch without weak augmentation on CIFAR-100 with 10000 labeled samples and STL-10 with 1000 labeled samples, comparing the test performance to that of the original FixMatch. As shown in Table \ref{table:weak_aug}, weak augmentation does not significantly impact the model's performance.

\begin{table}[ht]
\centering
\begin{tabular}{lcc}
\toprule
Dataset & STL-10 & CIFAR-100 \\
\midrule
Weak Augmentation & 92.65 & 77.27 \\
No Weak Augmentation & 91.83 & 77.19 \\
\bottomrule
\end{tabular}
\caption{Comparison of test accuracies (\%) of FixMatch with and without weak augmentation.}
\label{table:weak_aug}
\end{table}

\subsection{Comparison of CutOut in SL and FixMatch} \label{app:exp_cutout_sl}
Data augmentation operations like CutOut can help SL, but cannot improve as much as in deep SSL with limited labeled data. On STL-10 dataset with 40 labeled data, when we remove CutOut from SL, the test accuracy (\%) does not drop a lot as shown in Table \ref{table:cutout_sl}. In contrast, removing CutOut from the strong augmentation $\mathcal{A}(\cdot)$ in FixMatch's unsupervised loss $L_u^{(t)}$ leads to a severe performance degradation.
\begin{table}[ht]
\centering
\begin{tabular}{lcc}
\toprule
Method & SL & FixMatch \\
\midrule
CutOut & 23.98 & 68.30 \\
No CutOut & 22.88 & 53.64 \\
\bottomrule
\end{tabular}
\caption{Comparison of test accuracies (\%) of SL and FixMatch with and without CutOut.}
\label{table:cutout_sl}
\end{table}

\subsection{Dataset Statistics}\label{app:data_summary}
For each experiment in Sec. \ref{sec-experiment}, following \citet{sohn2020fixmatch,zhang2021flexmatch,xu2021dash,wang2022freematch,chen2023softmatch}, we randomly select image-label pairs from the entire training dataset according to labeled data amount, set images from the whole training dataset without labels as unlabeled dataset, and we use the standard test dataset. The table below details data statistics across different datasets.
\begin{table}[h]
\centering
\begin{tabular}{|c|c|c|c|}
\hline
{Dataset} & {Total Training Data} & {Total Labeled Data in Training Data} & {Test Data} \\ \hline
STL-10 & 105000 & 5000 & 8000 \\ \hline
CIFAR-100 & 50000 & 50000 & 10000 \\ \hline
Imagewoof & 9025 & 9025 & 3929 \\ \hline
ImageNet & 1281167 & 1281167 & 50000 \\ \hline
\end{tabular}
\caption{Summary of Datasets.}
\label{tab:datasets}
\end{table}

\subsection{Training Setting and Hyper-parameters} \label{app_hyper}
All experiments are conducted on four RTX 3090 GPUs (24GB memory). Due to resource limitations, we did not follow \citet{sohn2020fixmatch} by training for 1024 epochs with 1024 iterations per epoch ($2^{20}$ iterations in total). Instead, we run 150 epochs with 2048 iterations per epoch (307,200 iterations) for the standard setting in Sec.~\ref{sec-exp_test}, and 50 epochs with 2048 iterations per epoch (102,400 iterations) for the same training dataset setting in Sec.~\ref{sec-exp_ablation}. As shown in Sec.~\ref{sec-experiment}, our test accuracy results closely match those of \citet{sohn2020fixmatch}. For FixMatch experiments, we base our implementation on \citet{jd2020fixmatch}, while all other experiments follow \citet{usb2022}. Each SSL experiment takes 48 to 120 hours on a single RTX 3090 GPU, depending on the model and dataset.

For CIFAR-100, STL-10, Imagewoof, and ImageNet, their input image size are respectively  $32\times 32$,  $96\times 96$, $96 \times 96$, $224\times 224$ and their mask size in CutOut are respectively $16\times 16$,  $48 \times 48$, $48\times 48$, $112\times 112$. For the application of SA-CutOut, according to our theoretical analysis in Sec.~\ref{sec-main_result} and Appendix~\ref{app:like}, we only need it after partial feature already been learned to learn comprehensive features in the dataset. Therefore, in practice, we only apply SA-CutOut to deep SSL methods in the last 32 epochs of training, and the total running time of SA-FixMatch is roughly 1.15 times of FixMatch.

For hyper-parameters, we use the same setting following FixMatch \citep{sohn2020fixmatch}. Concretely, the optimizer for all experiments is standard stochastic gradient descent (SGD) with a momentum of 0.9 \citep{sutskever2013importance}. For all datasets, we use an initial learning rate of 0.03 with a cosine learning rate decay schedule \citep{loshchilov2016sgdr} as $\eta=\eta_0 \cos \left(\frac{7 \pi k}{16 K}\right)$, where $\eta_0$ is the initial learning rate, $k$ is the current training step and $K$ is the total training step that is set to $307200$. We also perform an exponential moving average with the momentum of 0.999. The hyper-parameter settings are summarized in Table \ref{table:hyper}.

\begin{table}[ht]
\centering
\begin{tabular}{|c|c|c|c|c|}
\hline Dataset & CIFAR-100 & STL-10 & Imagewoof & ImageNet\\
\hline Model & WRN-28-8 & WRN-37-2 & WRN-37-2 & ResNet-50\\
\hline Weight Decay & 1e-3 & 5e-4 & 5e-4 & 3e-4\\
\hline Batch Size & \multicolumn{3}{|c|}{64} & 128\\
\hline Unlabeled Data Raion \(\mu\) & \multicolumn{3}{|c|}{7} & 1\\
\hline Threshold $\tau$ & \multicolumn{3}{|c|}{0.95} & 0.7\\
\hline Learning Rate $\eta$ & \multicolumn{4}{|c|}{0.03} \\
\hline SGD Momentum & \multicolumn{4}{|c|}{0.9} \\
\hline EMA Momentum & \multicolumn{4}{|c|}{0.999} \\

\hline Unsupervised Loss Weight $\lambda$ & \multicolumn{4}{|c|}{1} \\
\hline
\end{tabular}
\caption{Complete hyper-parameter setting.}
\label{table:hyper}
\end{table}

\subsection{Samples in CIFAR-100, STL-10, Imagewoof, and ImageNet} \label{app-samples}
As we can observe from Figure \ref{fig:samples}, for images in CIFAR-100 and STL-10, the semantic subject in the image occupies the majority of the image. On the other hand, for images in Imagewoof and ImageNet dataset, most semantic subject only occupies less than a quarter of the image.

\begin{figure*}[ht]
	\begin{center}
		\setlength{\tabcolsep}{0.0pt}  
		\scalebox{1.04}{\begin{tabular}{ccccccccc} 
				\includegraphics[width=0.12\linewidth]{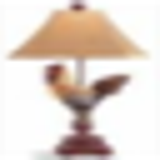}&
				\includegraphics[width=0.12\linewidth]{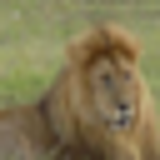}&
				\includegraphics[width=0.12\linewidth]{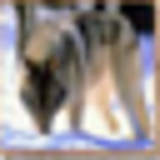}& 
				\includegraphics[width=0.12\linewidth]{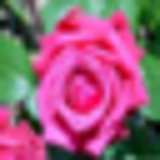} &
			\includegraphics[width=0.12\linewidth]{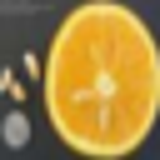}&
				\includegraphics[width=0.12\linewidth]{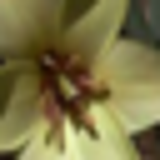}&
				\includegraphics[width=0.12\linewidth]{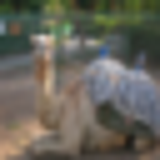}&
				\includegraphics[width=0.12\linewidth]{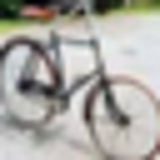}
				\vspace{-0.4em}\\

            \includegraphics[width=0.12\linewidth]{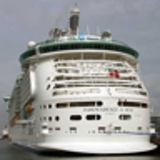}&
				\includegraphics[width=0.12\linewidth]{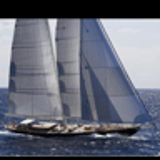}&
				\includegraphics[width=0.12\linewidth]{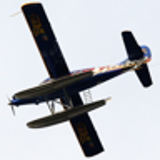}& 
				\includegraphics[width=0.12\linewidth]{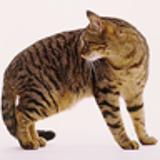} &
			\includegraphics[width=0.12\linewidth]{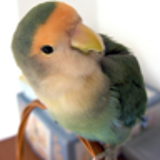}&
				\includegraphics[width=0.12\linewidth]{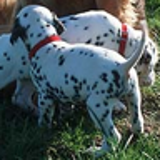}&
				\includegraphics[width=0.12\linewidth]{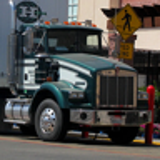}&
				\includegraphics[width=0.12\linewidth]{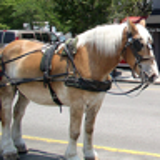}
				\vspace{-0.4em}\\

            \includegraphics[width=0.12\linewidth]{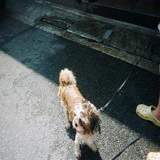}&
				\includegraphics[width=0.12\linewidth]{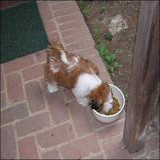}&
				\includegraphics[width=0.12\linewidth]{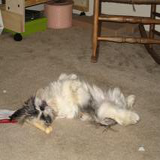}& 
				\includegraphics[width=0.12\linewidth]{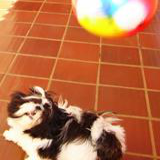} &
			\includegraphics[width=0.12\linewidth]{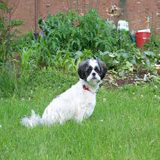}&
				\includegraphics[width=0.12\linewidth]{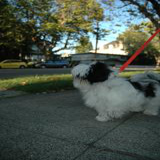}&
				\includegraphics[width=0.12\linewidth]{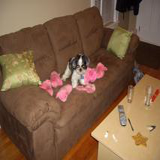}&
				\includegraphics[width=0.12\linewidth]{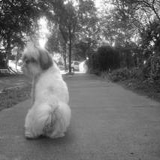}
				\vspace{-0.4em}\\

            \includegraphics[width=0.12\linewidth]{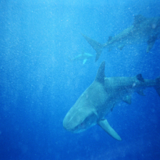}&
				\includegraphics[width=0.12\linewidth]{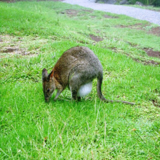}&
				\includegraphics[width=0.12\linewidth]{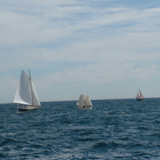}& 
				\includegraphics[width=0.12\linewidth]{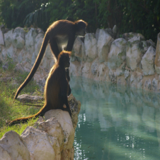} &
			\includegraphics[width=0.12\linewidth]{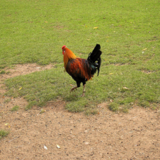}&
				\includegraphics[width=0.12\linewidth]{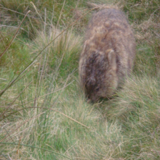}&
				\includegraphics[width=0.12\linewidth]{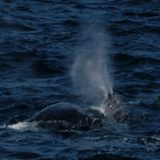}&
				\includegraphics[width=0.12\linewidth]{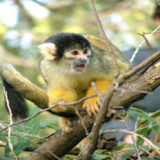}
				\vspace{-0.4em}\\
		\end{tabular}}
	\end{center}
	\caption{Samples from CIFAR-100, STL-10, Imagewoof, and ImageNet datasets. Samples in the first row are from CIFAR-100, samples in the second row are from STL-10, samples in the third row are from Imagewoof, and samples in the last row are from ImageNet.}
\label{fig:samples}
\end{figure*}

\section{Explanation of Multi-view Data Assumption} \label{app-data_def_explain}
In this section, we make more detailed explanations of our multi-view data assumption Def. \ref{def:multi-view} by breaking it down and explain each part with specific examples from car images in ImageNet.  

From Figure \ref{fig:gcam} we know that wheel and front light can be viewed as two discriminative semantic features of car, and each can be used independently for the learning model to make correct class prediction. In Figure \ref{fig:app_def}, we give some examples of single-view data and multi-view data in car images that contains either only one of the two features or both. Then, we use them as examples to explain the multi-view data assumption Def. \ref{def:multi-view} in detail.

\begin{figure*}[ht]
\begin{center}
    \setlength{\tabcolsep}{4.0pt}  
    \scalebox{1.04}{\begin{tabular}{ccccccccc} 
    \includegraphics[width=0.2\linewidth]{./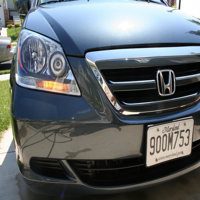}&
    \includegraphics[width=0.2\linewidth]{./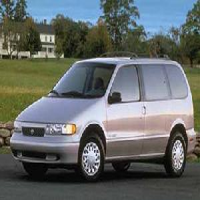}&
    \includegraphics[width=0.2\linewidth]{./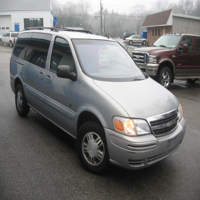}& 
    \includegraphics[width=0.2\linewidth]{./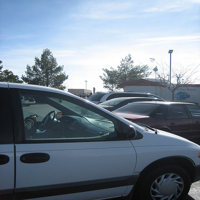}
    \vspace{-0.4em}\\
    \end{tabular}}
\end{center}
\caption{The first single-view image contains only the front light feature, while the middle two multi-view images contain both wheel and front light features, and the last single-view image contains only the wheel feature.}
\label{fig:app_def}
\end{figure*}

In Def. \ref{def:multi-view}, the data distribution $\mathcal{D}$ is composed of samples from the multi-view data distribution $\mathcal{D}_m$ with probability $1-\mu$, and from the single-view data distribution $\mathcal{D}_s$ with probability $\mu \!=\! \frac{1}{\polyk}$. In the context of car images, this implies that the majority of images are multi-view, containing both wheel and front light features, while a small fraction of images are single-view, containing only one of these features.

Then, Def. \ref{def:multi-view} defines $(X, y) \!\sim\! \mathcal{D}$ by first randomly selecting a label $y \!\in\! [k]$ uniformly, and generate the data $X$ as follows:

\textbf{(a)} A set of noisy features $\mathcal{V}'$ is sampled uniformly at random from $\{v_{i, 1}, v_{i, 2}\}_{i \neq y}$, each with probability $s/k$. The complete feature set of $X$ is then defined as $\mathcal{V}(X) = \mathcal{V}' \cup \{v_{y, 1}, v_{y, 2}\}$, which includes both the noisy features $\mathcal{V}'$ and the main features $\{v_{y, 1}, v_{y, 2}\}$.

In the context of car images, (a) corresponds to the semantic features specific to cars (wheel and front light) being present in car images, along with noisy features from other classes, such as houses or trees in the background.

\textbf{(b)}  For each $v \in \mathcal{V}(X)$, pick \(C_p\) disjoint patches in $[P]$ and denote them as $\mathcal{P}_v(X)$. For a patch $p\in\calP_v(X)$, we set
\(
    \setlength{\abovedisplayskip}{3pt}
    \setlength{\belowdisplayskip}{3pt}
    \setlength{\abovedisplayshortskip}{3pt}
    \setlength{\belowdisplayshortskip}{3pt}
x_p=z_p v+``noises" \in \bbR^d,
\)
where the coefficients $z_p \geq 0$ satisfy:  \\
\textbf{(b1)}  For "multi-view" data $(X, y) \in \mathcal{D}_m$, \(\sum_{p \in \calP_v(X)} z_p \in[1, O(1)]\)
when \(v \in \left\{v_{y, 1}, v_{y, 2}\right\}\) and \(\sum_{p \in \mathcal{P}_v(X)} z_p \in[\Omega(1), 0.4]\) when \(v \in \calV(X) \setminus \left\{v_{y, 1}, v_{y, 2}\right\}\). \\
\textbf{(b2)} For "single-view" data $(X, y) \in \mathcal{D}_s$, pick a value $\hat{l} \in [2]$ randomly uniformly as the index of the main feature. Then \(\sum_{p \in \mathcal{P}_{v}(X)} z_p \in[1, O(1)]\) when \(v=v_{y,\hat{l}}\), \(\sum_{p \in \mathcal{P}_v(X)} z_p \in[\rho, O(\rho)]\) ($\rho=k^{-0.01}$) when \(v=v_{y,3-\hat{l}}\), and $\sum_{p \in \mathcal{P}_{v}(X)} z_p = \frac{1}{\polylogk}$ when \(v \in \calV(X) \setminus \left\{v_{y, 1}, v_{y, 2}\right\}\).

In the context of car images, (b1) indicates that multi-view data features a significant presence of two prominent class-specific semantic features, while the proportion of noisy features in the image is relatively small. For example, in the middle two multi-view car images shown in Figure \ref{fig:app_def}, the wheel and front light features are more prominent than the background elements, such as houses and trees. (b2) indicates that single-view data contains only one prominent class-specific semantic feature, with the other semantic feature and noisy features being minimal. As shown in the first and last single-view car images in Figure \ref{fig:app_def}, these images prominently display either the wheel or the front light, while the other semantic feature and noisy background elements are scarcely present.

\textbf{(c)} For each purely noisy patch $p\in[P]\setminus\cup_{v\in\calV}\calP_v(X)$, we set \(x_p = ``noises"\).

In the context of car images, purely noisy patches correspond to regions such as road or sky patches that do not contain semantic information relevant to classification.

For a neural network capable of learning comprehensive semantic features after training--such as both the wheel feature and the front light feature for car images--can accurately predict both multi-view samples, such as the middle two images in Figure \ref{fig:app_def}, and the single-view samples, such as the first and last images in Figure \ref{fig:app_def}. However, if the network only learns partial semantic features, either the wheel or the front light feature, it will misclassify the single-view images that do not contain this feature, either the first or the last image in Figure \ref{fig:app_def} and result in inferior generalization performance.
\end{document}